
\documentclass{article}

\usepackage{microtype}
\usepackage{graphicx}
\usepackage{booktabs} 

\usepackage[colorlinks,urlcolor=blue,citecolor=blue,linkcolor=blue]{hyperref}


\usepackage[accepted]{icml2025}

\usepackage{amsmath}
\usepackage{amssymb}
\usepackage{mathtools}
\usepackage{amsthm}


\usepackage[capitalize,noabbrev,nameinlink]{cleveref}
\crefname{ineq}{inequality}{inequalities}
\creflabelformat{ineq}{#2{\upshape(#1)}#3} 

\theoremstyle{plain}
\newtheorem{theorem}{Theorem}[section]

\newtheorem{lemma}[theorem]{Lemma}
\newtheorem{corollary}[theorem]{Corollary}
\theoremstyle{definition}
\newtheorem{definition}[theorem]{Definition}

\theoremstyle{remark}

%

\usepackage[textsize=tiny]{todonotes}


\definecolor{darkgreen}{RGB}{0,130,0}

\usepackage{bm,bbm,enumitem,stmaryrd,listings,multirow,adjustbox,subcaption}
\newcommand{\brint}[1]{{\llbracket #1 \rrbracket}}

\usepackage{tikz}
\usetikzlibrary{matrix,decorations.pathreplacing,positioning}
\newcommand{\greencheck}{{\color{green}\tikz\fill[scale=0.4](0,.35) -- (.25,0) -- (1,.7) -- (.25,.15) -- cycle;}}
\newcommand{\redcross}{{\color{red}\tikz\draw[scale=0.3] (0,0) -- (1,1) (0,1) -- (1,0);}}

\def\PhiCost{\mathtt{Path}\text{-}\mathtt{Mag}}
\usepackage{tabularx}
\usepackage{makecell}
\usepackage{tablefootnote} 
\newcolumntype{x}[1]{>{\centering\let\newline\\\arraybackslash\hspace{0pt}}p{#1}} 
\usepackage{threeparttable} 


\usepackage{xspace}
\makeatletter
\DeclareRobustCommand\onedot{\futurelet\@let@token\@onedot}
\def\@onedot{\ifx\@let@token.\else.\null\fi\xspace}
\def\ie{\emph{i.e}\onedot}  
\makeatother

\newcommand{\vertiii}[1]{{\left\vert\kern-0.25ex\left\vert\kern-0.25ex\left\vert #1
    \right\vert\kern-0.25ex\right\vert\kern-0.25ex\right\vert}} 
\def\param{{{\theta}}}
\def\Param{\Theta}

\def\path{{p}}
\newcommand\inner[2]{\left\langle #1, #2 \right\rangle} 

\newcommand{\paramto}[1]{{\param^{\to #1}}}
\newcommand{\paramfrom}[1]{{\param^{#1 \to }}}
\newcommand{\paramfromto}[2]{{\param^{#1 \to #2}}}

\newcommand{\parampto}[1]{{({\param'})^{\to #1}}}

\newcommand{\parampfromto}[2]{{({\param'})^{#1 \to #2}}}
\newcommand{\paths}{\mathcal P}
\newcommand{\pathsto}[1]{\paths^{\to #1}}
\newcommand{\Phito}[1]{\Phi^{\to #1}}

\def\activation{{\bm{A}}}

\newcommand{\normalized}[1]{\mathtt{N}(#1)}

\def\PhiEdge{{\Phi^I}}%
\def\PhiBias{{\Phi^H}}%

\def\x{{\bm{x}}}
\DeclareMathOperator*{\id}{id}
\def\dout{{d_{\textrm{out}}}}%
\def\nparams{{\textrm{\#{params}}}}
\def\din{{d_{\textrm{in}}}}%
\newcommand{\NeuronSet}{N}
\newcommand{\NeuronsSubScript}[1]{\NeuronSet_{#1}}

\def\NeuronsIn{\NeuronsSubScript{\textrm{in}}}
\def\NeuronsOut{\NeuronsSubScript{\textrm{out}}}
\def\biasNeuron{{v_{\texttt{bias}}}}
\def\BiasAndNeuronsIn{\NeuronsIn\cup\{\biasNeuron\}}

\DeclareMathOperator{\ant}{ant}
\DeclareMathOperator{\suc}{suc}

\DeclareMathOperator{\diag}{diag}
\DeclareMathOperator{\relu}{ReLU}
\DeclareMathOperator{\OBD}{OBD}
\newcommand{\kpool}{k\textrm{-}\mathtt{pool}}
\newcommand{\pool}{*\textrm{-}\mathtt{pool}}

\newcommand{\pathend}[1]{{#1}_{\mathtt{end}}}
\newcommand{\length}[1]{\mathtt{length}(#1)}

\def\bias{b}


\def\pre{{\textrm{pre}}}
\def\phi{{\varphi}}

\def\E{{\mathbb{E}}} 
\def\R{{\mathbb R}} 
\def\Ns{{\mathbb N}_{>0}} 

\renewcommand{\le}{\leqslant}
\renewcommand{\ge}{\geqslant}
\renewcommand{\leq}{\leqslant}
\renewcommand{\geq}{\geqslant}

\DeclareMathOperator*{\sgn}{sgn}%

%

\usepackage{xcolor}

\newcommand\textred[1]{{\color{red}#1}}
\newcommand\mathred[1]{{\color{red}{#1}}}
\newcommand\red[1]{\relax\ifmmode\mathred{#1}\else\textred{#1}\fi}

\newcommand\textblue[1]{{\color{blue}\bfseries#1}}
\newcommand\mathblue[1]{{\color{blue}\boldsymbol{#1}}}
\newcommand\blue[1]{\relax\ifmmode\mathblue{#1}\else\textblue{#1}\fi}
\newcommand\mblue[1]{\marginpar{\blue}}

\icmltitlerunning{A Rescaling-Invariant Lipschitz Bound using Path-Metrics}

\begin{document}

\twocolumn[
\icmltitle{A Rescaling-Invariant Lipschitz Bound Based on Path-Metrics\\ for Modern ReLU Network Parameterizations}



\icmlsetsymbol{equal}{*}

\begin{icmlauthorlist}
\icmlauthor{Antoine Gonon}{ENS,EPFL}
\icmlauthor{Nicolas Brisebarre}{CNRS}
\icmlauthor{Elisa Riccietti}{ENS}
\icmlauthor{Rémi Gribonval}{INRIA}
\end{icmlauthorlist}

\icmlaffiliation{ENS}{{ENS de Lyon, CNRS, Inria, Université Claude Bernard Lyon 1, LIP, UMR 5668, 69342, Lyon cedex 07, France}}
\icmlaffiliation{CNRS}{{CNRS, ENS de Lyon, Inria, Université Claude Bernard Lyon 1, LIP, UMR 5668, 69342, Lyon cedex 07, France
}}
\icmlaffiliation{INRIA}{{Inria, CNRS, ENS de Lyon, Université Claude Bernard Lyon 1, LIP, UMR 5668, 69342, Lyon cedex 07, France
}}
\icmlaffiliation{EPFL}{{Institute of Mathematics, EPFL, Lausanne, Switzerland}}

\icmlcorrespondingauthor{Antoine Gonon}{antoine.gonon@epfl.ch}

\icmlkeywords{Machine Learning, ICML}

\vskip 0.3in
]



\printAffiliationsAndNotice{}  
\begin{abstract}
Robustness with respect to weight perturbations underpins guarantees for
generalization, pruning and quantization.  
Existing
guarantees rely on
\emph{Lipschitz bounds in parameter space}, cover only
plain feed-forward MLPs, and break under the ubiquitous neuron-wise rescaling
symmetry of ReLU networks.  
We prove a new Lipschitz inequality expressed through the
\emph{$\ell^{1}$-path-metric} of the weights.  The bound is (i)
\textbf{rescaling-invariant} by construction and (ii) applies to any
ReLU-DAG architecture with any combination of 
convolutions, skip connections, pooling,
and frozen (inference-time) batch-normalization —thus encompassing ResNets, U-Nets, VGG-style CNNs, and more.
By respecting the network’s natural symmetries, 
the new bound 
strictly sharpens prior
parameter-space bounds and can be computed in two forward passes.  To illustrate its utility, we derive 
from it a symmetry-aware pruning criterion and
show—through a proof-of-concept experiment on a ResNet-18 trained on
ImageNet—that its pruning performance 
matches that of classical magnitude pruning, while becoming totally 
immune to arbitrary neuron-wise rescalings.
\end{abstract}

\section{Introduction}
An important challenge about neural networks is to upper bound as tightly as possible the distances between the so-called realizations  (\ie, the functions implemented by the considered network) $R_\param, R_{\param'}$ with parameters $\param,\param'$ when evaluated at an input vector $x$, in terms of a (pseudo-)\-distance $d(\param,\param')$ and a constant $C_x$:
\begin{equation}
\label[ineq]{eq:GenLipBound}
    \|R_{\param}(x) - R_{\param'}(x)\|_{1} \leq C_x d(\param,\param').
\end{equation}
This controls the robustness of the function $R_\param$ with respect to changes in the parameters $\param$, which can be crucially leveraged to derive generalization bounds \citep{Neyshabur18PACBayesSpectrallyNormalizedMarginBounds} or theoretical guarantees about pruning or quantization algorithms \citep{Gonon22Quantization}.
Yet, to the best of our knowledge, such bounds remain relatively little explored in the literature, 
and existing ones are expressed with $\ell^p$ metrics on parameters \citep{Gonon22Quantization,Neyshabur18PACBayesSpectrallyNormalizedMarginBounds,Berner20GeneralizationErrorBlackScholes}. For example, such a bound is known  \citep[Theorem III.1 with $p=\infty$ and $q=1$]{Gonon22Quantization} with
\begin{equation}\label{eq:1stPseudoDistance}
\begin{aligned}
    d(\param,\param')
    &
    := \|\param-\param'\|_\infty,  \\ 
    C_x &
    := (W\|x\|_\infty + 1)WL^2 R^{L-1},
\end{aligned}
\end{equation}
in the case of a layered fully-connected neural network $R_{\param}(x)= M_L\relu(M_{L-1}\dots \relu(M_1x))$ with $L$ layers, maximal width $W$, and with weight matrices $M_\ell$ having some operator norm bounded by $R$.
Moreover, these known bounds are not satisfying for at least two reasons:
\begin{itemize}[leftmargin=*]
\item they are {\bf not invariant under neuron-wise rescalings} of the parameters $\param$ that leave unchanged its realization $R_{\param}$. As we will show, this implies that numerical evaluations of such bounds can be arbitrarily large;
\item they {\bf only hold for simple fully-connected models organized in layers}, but not for modern networks that include pooling, skip connections, etc.
\end{itemize}
To circumvent these issues, we leverage the so-called {\em path-lifting}, a tool that has recently emerged \citep{Stock22Embedding,BonaPellissier22NipsIdentifiability,Marcotte23GradientFlows,Gonon23PathNorm}
in the theoretical analysis of modern neural networks with positively homogeneous activations.

\begin{table*}[ht]
    \caption{The path-lifting provides an intermediate space between parameters and function spaces.}
    \label{tab:Interpretation}
    \begin{tabular}{> {\centering\arraybackslash\small}m{0.15\textwidth} > {\centering\arraybackslash}m{0.23\textwidth} > {\centering\arraybackslash}m{0.23\textwidth} > {\centering\arraybackslash}m{0.23\textwidth}}
       \toprule
       &
       {
         \tikz{\draw (0,0) ellipse (0.6cm and 0.2cm); \node at (0,0) {$\theta$};}

         parameters space
       }
       &
       {

       \tikz{\draw (0,0) ellipse (0.9cm and 0.4cm); \node at (0,0) {$\Phi(\theta)$};}

       {\bf path-lifting space}
       }
       &
       {
         \tikz{\draw (0,0) ellipse (1.2cm and 0.6cm); \node at (0,0) {$R_\theta$};}

         function space
       }\\
       \midrule
       &
       what we end up analyzing
       &
       \textbf{what we should analyze?}
       &
       what we want to analyze
       \\
       \bottomrule
       dim$<\infty$?
       &
       \greencheck
       &
       \greencheck
       &
       \raisebox{-0.5ex}{\redcross}
       \\
       rescaling-invariant?
       &
       \redcross
       &
       \greencheck
       &
       \greencheck
       \\
       relation to $R_\theta$
       &
       locally polynomial
       &
       locally linear
       &
       \\
       \hline
     \end{tabular}
\end{table*}
{\bf Main contribution.} We introduce a natural (rescaling-invariant) \emph{metric} based on the \emph{path-lifting}, and shows that it indeed yields a \emph{rescaling-invariant upper bound for the distance of two realizations of a network}.
Specifically, denoting
$\Phi(\param)$ the path-lifting (a finite-dimensional vector whose definition will be recalled in \Cref{sec:Model}) of the network parameters $\param$, we establish (\Cref{thm:LipsParam}) that for any input $x$, and network parameters $\theta,\theta'$ \emph{with the same entrywise signs}:\vspace{-0.2cm}
\begin{multline}\label[ineq]{eq:LipsParamIntro}
    \|R_{\param}(x) - R_{\param'}(x)\|_1 \\
    \leq \max\left(\|x\|_\infty,1\right) \|\Phi(\param) - \Phi(\param')\|_1.
\vspace{-0.1cm}
\end{multline}
We call $d(\param,\param'):=\|\Phi(\param) - \Phi(\param')\|_1$ the {\it $\ell^1$-path-metric}, by analogy with the so-called {\it $\ell^1$-path-norm} $\|\Phi(\param)\|_1$, see {\em e.g.} \cite{Neyshabur15NormBasedControls, Barron19V, Gonon23PathNorm}. Of course, since the $\ell^1$-norm is the largest $\ell^q$-norm ($q\geq 1$), this also implies the same inequality for any $\ell^q$-norm on the left-hand side. Besides being intrinsically rescaling-invariant, \Cref{eq:LipsParamIntro} holds for the very same general neural network model as in \citet{Gonon23PathNorm} that encompasses pooling, skip connections and so on. This solves the two problems mentioned above and improves on \Cref{eq:1stPseudoDistance}. Finally, we show that, under conditions that hold in practical pruning and quantization scenarios, the path-metric is easy to compute in two forward passes, and we provide the corresponding \texttt{pytorch} implementation.

Our main theoretical finding, Inequality~\eqref{eq:LipsParamIntro}, together with the known properties of $\Phi$ \citep{Gonon23PathNorm} confirms that the path-lifting $\Phi$ provides an intermediate space between the parameter space and the function space, that shares some advantages of both, see \Cref{tab:Interpretation}.

\textbf{Plan.} 
\Cref{sec:related-works} places our contribution in context.  
\Cref{sec:Model} recalls the path-lifting framework of \citet{Gonon23PathNorm} and the notational tools we will use.  
\Cref{sec:LipsParam} presents our central result—a rescaling-invariant Lipschitz bound expressed through the $\ell^{1}$-path-metric (\Cref{thm:LipsParam})—and explains how it sharpens existing bounds and can be computed in two forward passes. 
Finally, \Cref{sec:pruning} shows how the new bound yields a symmetry-aware pruning criterion, shown to match in a proof-of-concept experiment the accuracy of magnitude pruning, while becoming totally immune to neuron-wise rescalings. 

\vspace{-0.2cm}
\section{Related Work}
\label{sec:related-works}
\vspace{-0.2cm}
Understanding how small weight changes affect a network’s output is
crucial, e.g., for pruning, quantization, or generalization error control.  We review these three different use of 
\emph{parameter–space Lipschitz bounds} in \Cref{subsec:use-lips-bounds}, and then highlight in \Cref{subsec:link-sharpness} how our new,
rescaling-invariant bound (\Cref{thm:LipsParam}) interfaces with recent notions of
scale-invariant sharpness. 

\subsection{Parameter-Space Lipschitz Bounds in Practice}\label{subsec:use-lips-bounds}

Parameter–space Lipschitz (or “perturbation’’ / “sensitivity’’) bounds
already underpin several practical guarantees, but prior results are
restricted to plain MLPs and ignore rescaling symmetry.

\textbf{(i) Pruning.}
Provable pruning schemes quantify how much the output drifts when weights
are set to zero.  
\cite{Liebenwein20PruningInvariant} and \cite{Baykal19DataDependentCoresetsCompression} derive such guarantees
from layer-wise—but rescaling-\emph{dependent}—Lipschitz constants,
and the same mechanism underlies Theorem 5.4 of
\cite{Baykal19SensitivityInformedPruning}.  
Our \Cref{thm:LipsParam} offers an architecture-agnostic, symmetry-aware
alternative; \Cref{sec:pruning} illustrates this on a ResNet-18.

\textbf{(ii) Quantization.}
Bounding the error induced by weight rounding likewise depends on how the
network reacts to small parameter perturbations.
\citet{Gonon22Quantization} provide such bounds for fully-connected nets,
while \citet{Zhang23PostTrainingQuantizationProvable} and \citet{Lybrand21GreedyPruning} control the error at
the neuron level.  
Extending those guarantees to CNNs, ResNets or U-Nets requires a
global, symmetry-invariant Lipschitz constant—precisely what we provide in \Cref{thm:LipsParam}.

\textbf{(iii) Generalization via covering numbers.}
Several compression-style analyses
(e.g., \citealp{Arora18StrongerGeneralizationBoundsViaCompression,Bartlett17SpectralGeneralizationBound,Schnoor21GeneralizationErrorBoundsIterativeRecovery}) follow two
steps:
(1) a \emph{parameter-space} Lipschitz bound shows that the $\varepsilon$-ball
around a weight vector $\theta$ maps into an $\varepsilon'$-ball around
its realization $R_{\theta}$ in function space, yielding an upper bound on
that function-space covering number;  
(2) this covering bound is plugged into Dudley’s entropy integral to obtain
a Rademacher-complexity (and thus generalization) bound.
Because our Lipschitz constant is rescaling-invariant and holds for modern
DAG networks, the same pipeline runs without restricting to MLPs and without the looseness introduced when
one first factors out rescaling symmetries.

Across pruning, quantization and generalization, 
two limitations of previous
parameter-space bounds—\emph{lack of rescaling-invariance} and
\emph{restriction to plain MLPs}—are precisely the issues addressed by \Cref{thm:LipsParam}.

\subsection{Relation to Scale-Invariant Sharpness}\label{subsec:link-sharpness}

Sharpness metrics
\citep{Tsuzuku20NormalizedFlatMinima,Rangamani21ScaleInvariantFlatness,Kwon21AdaptativeSharpnessAwareMinimization,Wen22HowSharpnessMinMinSharpness,Andriushchenko23ModernLookSharpnessGeneralization}
measure how much the \emph{loss} increases under parameter perturbations,
often normalizing those perturbations to remove rescaling dependencies.
Our perspective is complementary: we directly bound the \emph{output
change} $\|R_{\theta}(x)-R_{\theta'}(x)\|_{1}$, independent of any loss
or data distribution.  As shown in \Cref{sec:sharpness-bridge}, whenever
the loss $\mathcal{L}(\hat{y},y)$ is Lipschitz in its first argument
(e.g., cross-entropy or MSE on a compact domain),
\Cref{thm:LipsParam} yields an immediate upper bound on several
scale-invariant sharpness definitions, thus providing a loss-agnostic
control over the same perturbation neighborhoods.
\vspace{-0.2cm}
\section{$\relu$ DAGs, Invariances, and Path-Lifting}\label{sec:Model}
\vspace{-0.1cm}

The neural network model we consider generalizes and unifies several models from the literature, including those from \citet{Neyshabur15NormBasedControls, Kawaguchi17GeneralizationInDL, Devore21NNApprox, BonaPellissier22NipsIdentifiability, Stock22Embedding}, as detailed in \citet[Definition 2.2]{Gonon23PathNorm}. This model allows for any Directed Acyclic Graph (DAG) structure that combines standard layers—max-pooling, average-pooling, skip connections, convolution, and (inference-time / frozen form) batch normalization—thereby covering modern networks such as ResNets, VGGs, AlexNet, and many others. The complete formal definition appears in \Cref{app:Model}.\footnote{This DAG-ReLU framework does not cover (i) attention mechanisms and (ii) normalization layers that are not rescaling-invariant (e.g., layer normalization, group normalization). Batch normalization is covered because at inference time its statistics are fixed, so it behaves as an affine layer and remains compatible with the path-lifting framework of \cite{Gonon23PathNorm}.}

\begin{figure}[ht]
        \centering
        \begin{tikzpicture}[scale=0.7,
            affine/.style={rectangle, draw, fill=blue!20, minimum width=12mm, minimum height=5mm, rotate=90},
            relu/.style={rectangle, draw, fill=green!20, minimum width=12mm, minimum height=5mm, rotate=90},
            avgpool/.style={rectangle, draw, fill=red!20, minimum width=15mm, minimum height=5mm, rotate=90},
            maxpool/.style={rectangle, draw, fill=yellow!20, minimum width=15mm, minimum height=5mm, rotate=90},
            skip/.style={rectangle, draw, fill=orange!20, minimum size=4mm}
        ]

        \def\x{1.2}
        \node[affine] (conv1) at (1*\x,0) {Affine};
        \node[maxpool] (maxpool1) at (2*\x,0) {MaxPool};
        \node[affine] (block1) at (3*\x,0) {Affine};
        \node[relu] (block2) at (4*\x,0) {ReLU};
        \node[affine] (block3) at (5*\x,0) {Affine};
        \node[relu] (block4) at (6*\x,0) {ReLU};
        \node[avgpool] (avgpool) at (7*\x,0) {AvgPool};

        \node[affine] (fc) at (8*\x,0) {Affine};

        \draw[->] (conv1) -- (maxpool1);
        \draw[->] (maxpool1) -- (block1);
        \draw[->] (block1) -- (block2);
        \draw[->] (block2) -- (block3);
        \draw[->] (block3) -- (block4);
        \draw[->] (block4) -- (avgpool);
        \draw[->] (avgpool) -- (fc);

        \node[skip] (skip1) at (3.5*\x,2) {Skip};
        \node[skip] (skip2) at (5.5*\x,2) {Skip};
        \draw[->] (2.5*\x,0) .. controls +(up:2cm) and +(up:2cm) .. (4.5*\x,0);
        \draw[->] (4.5*\x,0) .. controls +(up:2cm) and +(up:2cm) .. (6.5*\x,0);
    \end{tikzpicture}
        \caption{
        A network with the same ingredients as a ResNet.}
        \label{fig:ModernNet}
\end{figure}

\subsection{Rescaling Symmetries.}
All network parameters (weights and biases) are gathered in a parameter vector $\theta$, and we denote $R_{\theta}(x)$ the output of the network when evaluated at input $x$ (the function $x \mapsto R_{\theta}(x)$ is the so-called {\em realization} of the network with parameters $\theta$). Due to positive-homogeneity of the ReLU function $t\to \relu(t) := \max(0,t)$, in the simple case of a single neuron with no bias we have $R_\theta(x) = v\max(0,\inner{u}{x})$ with $\theta=(u,v)$, and for any $\lambda >0$, the ``rescaled'' parameter $\tilde{\theta}=(\lambda u, \frac{v}{\lambda})$ implements the same function $R_{\tilde{\theta}}=R_{\theta}$.
A similar rescaling-invariance property holds for the general model of \cite{Stock22Embedding,Gonon24Thesis} leading to the notion of rescaling-equivalent parameters, denoted $\tilde{\theta} \sim \theta$, which still satisfy $R_{\tilde{\theta}} = R_\theta$.

{\bf Need for rescaling-invariant Lipschitz bounds.}
Consider our initial problem of finding a pseudo-metric $d(\param,\param')$ and a constant $C_x$ for any input $x$, for which  \eqref{eq:GenLipBound} holds. The left hand-side of~\eqref{eq:GenLipBound} is invariant under rescaling-symmetries: if $\tilde{\theta}\sim \theta$ then $\|R_{\tilde{\param}}(x)-R_{\param'}(x)\|_1=\|R_\param(x)-R_{\param'}(x)\|_1$. However, when $d(\cdot,\cdot)$ is based on a standard $\ell^p$ norm, the right hand-side of \eqref{eq:GenLipBound} is \emph{not} invariant, and in fact $\sup_{\tilde{\theta} \sim \theta} \|\tilde{\theta}-\theta'\|_p = +\infty $, so the bound can in fact be arbitrarily pessimistic:
\[
\sup_{\tilde{\theta} \sim \theta} \frac{d(\tilde{\theta},\param')}{\|R_{\tilde{\theta}}(x)-R_{\param'}(x)\|_1} = \infty.
\]
Although in general one could {\em make a bound such as \eqref{eq:GenLipBound} invariant} by considering the infimum
\[
\inf_{\tilde{\theta} \sim \theta, \tilde{\theta'} \sim \theta'} d(\tilde{\theta},\tilde{\theta'}),
\]
this infimum may be difficult to compute in practice. Therefore, a ``good'' bound should ideally be both invariant under rescaling symmetries and easy to compute. Invariance to rescaling symmetries is precisely the motivation for the introduction of the path-lifting.

\subsection{Path-Lifting $\Phi$ and Path-Activation Matrix $A$}
\label{subsec:PhiAndA}

\paragraph{Background.}
The \emph{path-lifting} map $\Phi$ and its associated $\ell^1$-\emph{path-norm} were introduced to equip ReLU networks with a coordinate system that is invariant under neuron-wise rescaling. This construction has enabled advances in identifiability \citep{Stock22Embedding,BonaPellissier22NipsIdentifiability}, analysis of training dynamics \citep{Marcotte23GradientFlows}, input-space Lipschitz bounds \citep{Gonon23PathNorm}, and (PAC–Bayes and Rademacher) generalization guarantees \citep{Neyshabur15NormBasedControls,Gonon23PathNorm}. 

This paper does not redefine the path-lifting but \emph{leverages} it to derive, for the first time, a rescaling-invariant parameter-space Lipschitz bound that holds for general DAG-ReLU architectures.

\textbf{Definitions (informal).}
Given network parameters $\theta$ and an input $x$, we consider two objects from \citet[Definition A.1]{Gonon23PathNorm}: the path-lifting vector $\Phi(\theta)$ and the path-activation matrix $A(\theta,x)$. Below we give a simplified description sufficient for understanding our main results; full definitions are deferred to \Cref{app:Model}.

\begin{figure}[htbp]
    \centering
    \includegraphics[scale=0.2]{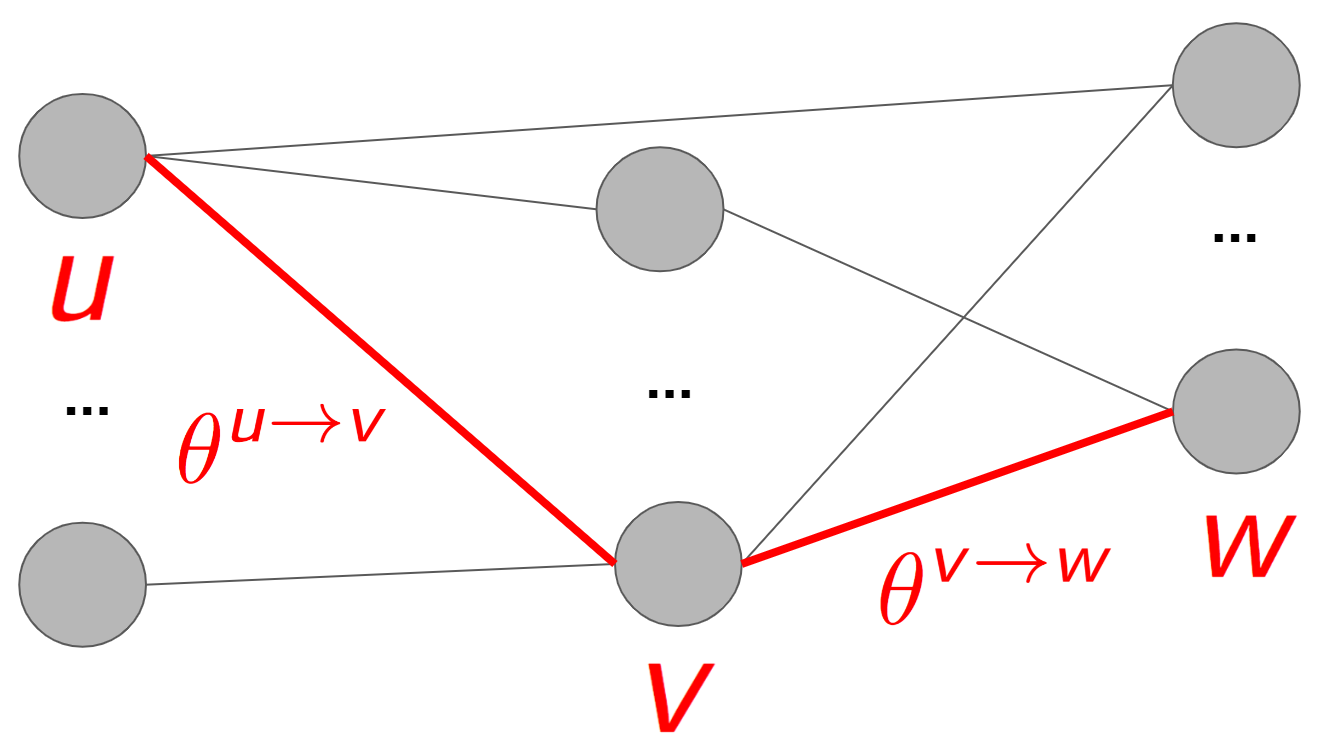}
    \caption{The path-lifting coordinate $\Phi_\red{p}(\theta)$ for the path $\red{p = u \to v \to w}$ is the product of the weights along that path: $\red{\Phi_p(\theta) = \paramfromto{u}{v} \, \paramfromto{v}{w}}$.}
    \label{fig:DefPhi}
\end{figure}

The vector $\Phi(\theta) \in \mathbb{R}^{\paths}$ is indexed by the set $\paths$ of \emph{paths} in the network—i.e., sequences of neurons from an input to an output. For each path, its coordinate in $\Phi(\theta)$ is simply the product of the weights along that path (ignoring non-linearities). For example, if $p = u \to v \to w$ is a path starting from an input neuron $u$ and ending at an output neuron $w$, and if $\theta_{a \to b}$ denotes the weight on edge $a \to b$, then $\Phi_p(\theta) = \paramfromto{u}{v} \paramfromto{v}{w}$, as illustrated in \Cref{fig:DefPhi}.

The path-activation matrix $A(\theta,x) \in \{0,1\}^{\paths \times \din}$ encodes the information about non-linearities, storing which paths are \emph{active} (i.e., all ReLUs along them are on) for a given input $x$. Entry $A_{p,u}(\theta,x) = 1$ if path $p$ starts at input coordinate $u$ and all neurons along $p$ are activated.

In networks with biases, additional paths starting from hidden neurons are included in $\paths$, and $A(\theta,x)$ is extended to $\{0,1\}^{\paths \times (\din + 1)}$ to include bias contributions.

\textbf{Key properties of $(\Phi, A)$.}
These two objects enjoy the following critical features:
\begin{itemize}[wide,nosep]
    \item $\Phi(\theta)$ is a vector of monomials in the weights.
    \item $A(\theta,x)$ is a binary, piecewise-constant matrix of $(\theta,x)$.
    \item Both $\Phi(\theta)$ and $A(\theta,x)$ are \textbf{rescaling-invariant}: if $\tilde{\theta} \sim \theta$ (i.e., $\theta$ and $\tilde{\theta}$ only differ by neuron-wise rescaling, leaving $R_\theta=R_{\tilde \theta}$ unchanged), then $\Phi(\tilde{\theta}) = \Phi(\theta)$ and $A(\tilde{\theta},x) = A(\theta,x)$ for all $x$ \citep[Theorem 2.4.1]{Gonon24Thesis}.
    \item The network output can be recovered directly from these quantities. For scalar-valued outputs $R_\theta(x)$:
    \begin{equation}
    \label{eq:KeyLinearization}
        R_\theta(x) = \left\langle \Phi(\theta), A(\theta,x) \begin{pmatrix} x \\ 1 \end{pmatrix} \right\rangle,
    \end{equation}
    and a similar form holds for vector-valued networks \citep[Theorem A.1]{Gonon23PathNorm}.
\end{itemize}

\textbf{Example (one-hidden-layer network).}
Consider a one-hidden-layer ReLU network without bias, with parameters $\theta = (u_1, \dots, u_k, v_1, \dots, v_k)$ where $u_i \in \mathbb{R}^{\din}$, $v_i \in \mathbb{R}^{\dout}$, and realization
\[
R_\theta(x) = \sum_{i=1}^k \max(0, \langle x, u_i \rangle) v_i \in \mathbb{R}^{\dout}.
\]
Then, the path-lifting is
\[
\Phi(\theta) = (u_i v_i^\top)_{i \in \{1,\dots,k\}} \in \mathbb{R}^{k \din \dout}.
\]
The path-activation matrix is
\[
A(\theta,x) = \mathbf{I}_{\din} \otimes (\mathbbm{1}_{\langle x, u_i \rangle > 0})_{i=1}^k \otimes \mathbf{1}_{\dout},
\]
concatenated with a zero column (no biases here). Here, $\mathbf{I}_{d}$ is the $d \times d$ identity matrix, and $\mathbf{1}_d$ (resp. $\mathbf{0}_d$) is the vector of ones (resp. zeros) of size $d$.

It is straightforward to verify that both $\Phi(\theta)$ and $A(\theta,x)$ remain unchanged under the neuron-wise rescaling $\theta \mapsto \lambda \diamond \theta$, defined by $(v_i, u_i) \mapsto (\frac{1}{\lambda_i} v_i, \lambda_i u_i)$ for any $\lambda \in (\mathbb{R}_{>0})^k$. This transformation leaves the function $R_\theta$ unchanged, i.e., $R_\theta = R_{\lambda \diamond \theta}$ \citep{Gonon23PathNorm}.

\section{A Rescaling Invariant Lipschitz Bound}
\label{sec:LipsParam}

Our main result, \Cref{thm:LipsParam}, is a Lipschitz bound with respect to the parameters of the network, as opposed to widespread Lipschitz bounds with respect to the inputs. It precisely proves that \eqref{eq:GenLipBound} holds with a rescaling-invariant pseudo-distance (called the $\ell^{1}$-path metric) defined via $\Phi$ as $d(\param,\param'):=\|\Phi(\param)-\Phi(\param')\|_1$ and $C_x=\max(\|x\|_\infty,1)$.

\begin{theorem}\label{thm:LipsParam}
Consider a ReLU DAG neural network, corresponding to an arbitrary DAG network with max-pool etc. as in \Cref{sec:Model}, see \Cref{fig:ModernNet} for an illustration and \Cref{def:NN} in the appendix for a precise definition. Consider parameters vectors $\param,\param'$. If for every coordinate $i$, it holds $\param_i\param'_i\geq 0$, then for every input $x$:
\begin{multline}\label[ineq]{eq:LipsParam}
     \|R_\param(x)-R_{\param'}(x)\|_1
     \\ \leq   \max(\|x\|_{\infty},1) \|\Phi(\param)-\Phi(\param')\|_1.
\end{multline}
Moreover, for every such neural network architecture, there are non-negative parameters $\param\neq \param'$ and a non-negative input $x$ such that \Cref{eq:LipsParam} is an equality.
\end{theorem}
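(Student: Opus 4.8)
The plan is to reduce everything to the linearization \Cref{eq:KeyLinearization} and to exploit two consequences of the hypothesis $\param_i\param'_i\ge 0$: first, each path-lifting coordinate pair is sign-aligned, since $\Phi_\path(\param)\Phi_\path(\param')=\prod_{e\in\path}(\param_e\param'_e)\ge 0$ because every factor is non-negative; second, $\param$ and $\param'$ lie in a common closed orthant, so no weight changes sign between them. I would first reduce to a scalar output: paths terminate at a single output neuron, so $\Phi$ splits as a direct sum over output neurons with $\|\Phi(\param)-\Phi(\param')\|_1=\sum_o\|\Phito{o}(\param)-\Phito{o}(\param')\|_1$, while $\|R_\param(x)-R_{\param'}(x)\|_1=\sum_o|R^o_\param(x)-R^o_{\param'}(x)|$; as the constant $\max(\|x\|_\infty,1)=:M$ is independent of $o$, the scalar case implies the general one. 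For scalar output set $\xi:=\binom{x}{1}$, so that $\|A(\param,x)\xi\|_\infty\le M$, because $A$ is binary with a single nonzero entry per row (its source coordinate), hence each entry of $A(\param,x)\xi$ is $0$, some $x_u$, or $1$.

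The mechanism is visible already in the one-hidden-layer case. A neuron's contribution $v\,\relu(\inner{u}{x})$ with $v\ge 0$ equals $\relu(\inner{\Phi_{(i)}(\param)}{x})$ by positive homogeneity (and $\min(0,\inner{\Phi_{(i)}(\param)}{x})$ when $v\le 0$), i.e. a \emph{fixed} $1$-Lipschitz function of the path-lifting-linear pre-activation; the shared sign of $v,v'$ forces the \emph{same} $1$-Lipschitz map for $\param$ and $\param'$, so the contribution difference is at most $|\inner{\Phi_{(i)}(\param)-\Phi_{(i)}(\param')}{x}|\le M\|\Phi_{(i)}(\param)-\Phi_{(i)}(\param')\|_1$, and summing over neurons gives the bound. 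The plan for a general DAG is to promote this to an induction over a topological ordering, proving for every neuron $n$ that $|z_n(\param)-z_n(\param')|\le M\|\Phito{n}(\param)-\Phito{n}(\param')\|_1$ for the pre-activation $z_n$, using at each step $\relu$'s $1$-Lipschitzness together with positive homogeneity to keep quantities expressed through whole path-monomials rather than individual edges (the same argument absorbs average-pool and frozen batch-norm as affine maps and max-pool as a $1$-Lipschitz selection).

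The main obstacle is that the induction \emph{cannot} be carried out edge-by-edge, and this is the crux: $A(\param,x)\ne A(\param',x)$ in general, so there is no common activation matrix. Splitting a product as $\param_{mn}z_m(\param)-\param'_{mn}z_m(\param')=\param_{mn}\bigl(z_m(\param)-z_m(\param')\bigr)+(\param_{mn}-\param'_{mn})z_m(\param')$ and applying the triangle inequality over-estimates and points the wrong way, since the two pieces may have opposite signs even inside one orthant; the bound must treat each monomial as a whole. Concretely, writing
\begin{multline*}
R_\param(x)-R_{\param'}(x)=\inner{\Phi(\param)-\Phi(\param')}{A(\param,x)\xi}\\+\inner{\Phi(\param')}{\bigl(A(\param,x)-A(\param',x)\bigr)\xi},
\end{multline*}
the first term is already $\le M\|\Phi(\param)-\Phi(\param')\|_1$, and the whole difficulty is the activation-mismatch term. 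I expect to control it through a signed variational property: the true activation is the extremal side of each $\relu$ in the direction fixed by the \emph{shared} sign pattern, yielding one-sided comparisons $\inner{\Phi(\param)}{A(\param',x)\xi}\le R_\param(x)$ and $\inner{\Phi(\param')}{A(\param,x)\xi}\le R_{\param'}(x)$ with an orientation (max or min) common to $\param$ and $\param'$ precisely because their signs agree; sandwiching $R_\param(x)-R_{\param'}(x)$ between $\inner{\Phi(\param)-\Phi(\param')}{A(\param,x)\xi}$ and $\inner{\Phi(\param)-\Phi(\param')}{A(\param',x)\xi}$ then closes the estimate. Establishing this sign-consistent extremality through a deep DAG with mixed weight signs, pooling, and skip connections is the step I expect to be hardest.

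For tightness, choosing $\param,\param'$ and $x$ all non-negative (e.g.\ $x=\mathbf 1$, so $M=1$) makes every pre-activation non-negative, hence every $\relu$ active and every pooling selection identical under $\param$ and $\param'$; thus $A(\param,x)=A(\param',x)=:A^\ast$ and the mismatch term vanishes. Taking moreover $\param\ge\param'$ coordinatewise makes each monomial difference $\Phi_\path(\param)-\Phi_\path(\param')\ge 0$, so with $x=\mathbf 1$ (hence $A^\ast\xi=\mathbf 1$ on active paths) one gets $R_\param(x)-R_{\param'}(x)=\inner{\Phi(\param)-\Phi(\param')}{A^\ast\xi}=\sum_\path\bigl(\Phi_\path(\param)-\Phi_\path(\param')\bigr)=\|\Phi(\param)-\Phi(\param')\|_1$. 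Since every admissible architecture contains at least one input-to-output path, one can pick such $\param\ne\param'$ (e.g.\ equal everywhere except one edge weight halved), giving equality in \Cref{eq:LipsParam}.
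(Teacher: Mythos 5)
Your reduction to a scalar output, the splitting of $R_\param(x)-R_{\param'}(x)$ into the term $\inner{\Phi(\param)-\Phi(\param')}{A(\param,x)\xi}$ plus an activation-mismatch term, and the bound on the first term are all sound. But the step you flag as ``hardest''---the signed variational property giving the one-sided comparisons $\inner{\Phi(\param')}{A(\param,x)\xi}\le R_{\param'}(x)$ and the resulting sandwich---is not merely hard to establish: it is \emph{false}, so the proof cannot be completed along this route. Concretely, take a scalar input $x=1$, no biases, two ReLU layers and an identity output: $h_i=\relu(x)$ for $i=1,2$ (unit input weights in both $\param$ and $\param'$), $g_j=\relu(w_{1j}h_1+w_{2j}h_2)$ for $j=1,2$, and $y=g_1-g_2$ (output edge weights $c_1=1$, $c_2=-1$ in both). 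Choose $(w_{11},w_{21},w_{12},w_{22})=(2,-1,1,-\tfrac32)$ for $\param$ and $(1,-\tfrac32,2,-1)$ for $\param'$: every coordinate keeps its sign, as required. Then $g_1$ is active under $\param$ and inactive under $\param'$, while $g_2$ does the opposite, so $R_\param(1)=1$, $R_{\param'}(1)=-1$, and the true difference is $2$. Ordering the four paths as $(h_1g_1),(h_2g_1),(h_1g_2),(h_2g_2)$, the path-lifting coordinates are $w_{ij}c_j$, i.e.\ $\Phi(\param)=(2,-1,-1,\tfrac32)$ and $\Phi(\param')=(1,-\tfrac32,-2,1)$, so $\Delta:=\Phi(\param)-\Phi(\param')=(1,\tfrac12,1,\tfrac12)$. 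Your one-sided comparison fails: $\inner{\Phi(\param')}{A(\param,1)\xi}=1-\tfrac32=-\tfrac12$, which is \emph{larger}, not smaller, than $R_{\param'}(1)=-1$. And the sandwich fails: $\inner{\Delta}{A(\param,1)\xi}=\tfrac32=\inner{\Delta}{A(\param',1)\xi}$, yet the true difference $2$ lies strictly outside $[\tfrac32,\tfrac32]$. (The theorem itself of course holds here: $2\le\|\Delta\|_1=3$.) The obstruction is structural: the orientation of the ReLU relaxation is dictated by the sign of the \emph{downstream} path products, and with mixed signs on different output branches ($c_1=1$, $c_2=-1$) no orientation common to all paths exists; moreover, contributions that are individually sandwiched need not sum to something sandwiched, because the two endpoint activation patterns see disjoint sets of paths.

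The paper's proof avoids comparing the two endpoint activation patterns altogether. It connects $\param$ to $\param'$ by the geometric trajectory $\param_i(t)=\sgn(\param_i)\,|\param_i|^{1-t}|\param'_i|^{t}$ (well defined precisely because the signs agree, after reducing by continuity to nonzero coordinates), proves via analyticity that the activation vector $a(\param(t),x)$ is constant on finitely many subintervals of $[0,1]$ (\Cref{lem:FiniteNumberOfBreakpoints}), applies the exact linearization \eqref{eq:KeyLinearization} on each constant-activation piece, and sums. The key point is that every coordinate $t\mapsto\Phi_p(\param(t))=\sgn(\Phi_p(\param))\,|\Phi_p(\param)|^{1-t}|\Phi_p(\param')|^{t}$ is \emph{monotone}, so the triangle-inequality sum $\sum_k\|\Phi(\param(t_{k+1}))-\Phi(\param(t_k))\|_1$ telescopes exactly to $\|\Phi(\param)-\Phi(\param')\|_1$: intermediate activation patterns---the very phenomenon that breaks your sandwich---are absorbed at no cost. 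A secondary remark: in your equality case, non-negativity of $\param,\param',x$ does \emph{not} guarantee that max-pooling selections coincide under $\param$ and $\param'$ (halving one edge feeding a pool can change the argmax, and then equality fails); the paper sidesteps this by supporting $\param$ and $\param'$ on a single input--output path with all other coordinates set to zero.
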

Since $\|\cdot\|_q\leq \|\cdot\|_1$ for any $q\geq 1$, Inequality~\eqref{eq:LipsParam} implies the same bound with the $\ell^q$-norm on the left hand-side.

We sketch the proof in \Cref{sec_sketch_proof}. The complete proof is in \Cref{app:ProofLipsParam} -- we actually prove something slightly stronger, but we stick here to~\Cref{eq:LipsParam} for simplicity.

As discussed in \Cref{subsec:use-lips-bounds}, the parameter-space Lipschitz bound \eqref{eq:LipsParam}, like any such bound, can be incorporated into various pipelines—either to establish theoretical guarantees or to guide practical methods (e.g., algorithms that minimize these bounds), with applications to pruning, quantization, or generalization. In \Cref{sec:pruning}, we will focus on pruning. Regarding generalization, let us briefly note that this bound can be used to derive a Rademacher complexity bound for the class of functions $\mathcal{F}:=\{R_\param, \|\Phi(\param)\|_1\leq r\}=\bigcup_{\textrm{signs }s}\{R_\param, \|\Phi(\param)\|_1\leq r, \sgn(\theta)=s\}$. To bound this complexity, Dudley’s integral reduces the task to bounding the covering numbers of each fixed-sign sub-ball $\{R_\param, \|\Phi(\param)\|_1\leq r, \sgn(\theta)=s\}$. The inequality \eqref{eq:LipsParam} enables exactly this, by linking the covering numbers of these function classes to those of the corresponding finite-dimensional sets $\{\Phi(\theta): \sgn(\theta)=s,\|\Phi(\theta)\|_1\leq r\}$. A full derivation of this approach can be found in Theorem 4.3.1 of \cite{Gonon24Thesis}.
That said, the resulting (Rademacher) generalization bounds are typically looser—by a factor of roughly $\sqrt{\nparams}$—than those of \citet{Gonon23PathNorm}, who also leverage the path-norm but through a more refined analysis.

In the rest of this section, we discuss the assumptions of the theorem, the practical computation of the bound and the positioning with respect to previously established Lipschitz bounds.

\subsection{Why the same–sign assumption is necessary}
\label{subsec:sign-assump}
\textbf{A hard impossibility (new contribution).}
Let us highlight that the condition $\theta_{i}\theta'_{i}\!\ge 0\;\forall i$ in
\Cref{thm:LipsParam} is \emph{not} a technical convenience.
We exhibit in \Cref{fig:counterexample3.1} (\Cref{app:ProofLipsParam}) a minimalistic ReLU
network for which \emph{no} finite constant $C_x$ can satisfy
\eqref{eq:GenLipBound} once two weights change sign.  
By prepending and appending arbitrary sub-networks to that minimal counter-example, one gets
families where \emph{all but two} edges keep their sign, yet the same
divergence occurs. 
This impossibility shows that \emph{every} rescaling-invariant
parameter-space Lipschitz bound based on the path-lifting must, 
at a minimum, control sign changes.  We are not
aware of a prior formal statement of this theoretical impossibility.

\textbf{Practical relevance.}
Many real-world workflows preserve the signs:  
pruning, uniform quantization, and small SGD steps preserve them;  
locally, any non-zero $\theta$ admits an $\ell_\infty$ ball where signs are
fixed.  When occasional flips do occur, \Cref{thm:LipsParam} remains useful
as a local building block: one may use it on each fixed-sign quadrant
individually and then \emph{glue} 
the results established on each quadrant together
—exactly the
strategy evoked for covering-number generalization proofs (see the discussion
after \Cref{thm:LipsParam}).

\subsection{Approximation and exact computation of $\ell^1$-path-metrics}\label{sec:compl1metric}
Since $\Phi(\theta)$ is a vector of combinatorial dimension (it is indexed by paths), it would be intractable to compute the $\ell^1$-path metric $\|\Phi(\param)-\Phi(\param')\|_1$ by direct computation of the vector $\Phi(\param)-\Phi(\param')$. In this section we investigate efficient \emph{and rescaling-invariant} approximations of the $\ell^1$-path-metric that turn out to yield exact implementations in cases of practical interest.

A key fact on which the approach is built is that the $\ell^1$-path-norm can be computed in one forward pass \citep{Gonon23PathNorm}.  
Since, by the lower triangle inequality, we have
\begin{equation}\label{eq:ApproximationPathMetrics}
    \Big| \|\Phi(\param)\|_{1}-\|\Phi(\param')\|_1 \Big|\leq \|\Phi(\param)-\Phi(\param')\|_1,
\end{equation}
the left-hand side of \eqref{eq:ApproximationPathMetrics} serves as an approximation that can be computed in two forward passes of the network\footnote{For a ResNet18, we timed it to $15$ms. Specifically, we timed the function \texttt{get\_path\_norm} available at \url{github.com/agonon/pathnorm\_toolkit} using \texttt{pytorch.utils.benchmark}. Experiments were made on an NVIDIA GPU A100-40GB, with processor AMD EPYC 7742 64-Core.}.

As we now show, this is an {\em exact evaluation} of the $\ell^1$-path-metric under practical assumptions, and completed by a rescaling-invariant upper bound (cf. \Cref{eq:upper_bound} below).

\begin{lemma}\label{lem:EqualityPathMetrics} Inequality
\eqref{eq:ApproximationPathMetrics} {\em is an equality as soon as $|\Phi(\param)|\geq |\Phi(\param')|$ coordinatewise}: 
in this case we have 
\begin{equation}
\label{eq:EqualityPathMetrics}
\|\Phi(\param)-\Phi(\param')\|_1 
       = \|\Phi(\param)\|_{1}-\|\Phi(\param')\|_1.
\end{equation}
\end{lemma}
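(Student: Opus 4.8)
The plan is to reduce the claimed $\ell^1$ identity to a per-coordinate (per-path) scalar identity and then sum. Writing $\Phi_p(\param)$, $\Phi_p(\param')$ for the coordinate indexed by a path $p\in\paths$, it suffices to establish, for every $p$, the scalar identity $|\Phi_p(\param)-\Phi_p(\param')| = |\Phi_p(\param)| - |\Phi_p(\param')|$. Summing over $p$ then yields $\|\Phi(\param)-\Phi(\param')\|_1 = \|\Phi(\param)\|_1-\|\Phi(\param')\|_1$, and since the hypothesis $|\Phi(\param)|\ge|\Phi(\param')|$ coordinatewise forces $\|\Phi(\param)\|_1\ge\|\Phi(\param')\|_1$, this is precisely the equality case of \eqref{eq:ApproximationPathMetrics}.

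The scalar identity $|a-b|=|a|-|b|$ fails for general reals with $|a|\ge|b|$ (take $a=2,\ b=-1$): it additionally requires $ab\ge 0$, i.e.\ that $a$ and $b$ share a sign or one of them vanishes. So the crux is to show that $\Phi_p(\param)$ and $\Phi_p(\param')$ are always sign-consistent, and this is exactly where the same-sign assumption $\param_i\param'_i\ge 0$ (the standing hypothesis of this setting, cf.\ \Cref{thm:LipsParam}) is used. Since each coordinate $\Phi_p$ is a monomial in the parameters — the product $\prod_{e\in p}\param_e$ of the weights (and, for a bias-path, the leading bias) along $p$ — we get $\Phi_p(\param)\,\Phi_p(\param') = \prod_{e\in p}(\param_e\param'_e)\ge 0$, each factor being non-negative by hypothesis. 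Hence $\Phi_p(\param)$ and $\Phi_p(\param')$ never point in opposite directions.

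With sign-consistency in hand the per-path step is elementary: if $\Phi_p(\param')=0$ the identity is immediate, and otherwise the two coordinates share a sign, so combining this with $|\Phi_p(\param)|\ge|\Phi_p(\param')|$ gives $|\Phi_p(\param)-\Phi_p(\param')|=|\Phi_p(\param)|-|\Phi_p(\param')|$ after a direct check of the both-positive and both-negative cases. I expect the only genuinely delicate point to be this transfer of the parameter-level sign hypothesis up to the path level through the monomial structure of $\Phi$; the remainder is just the equality case of the reverse triangle inequality in $\ell^1$, which confirms in passing that both ingredients — coordinatewise magnitude domination \emph{and} sign matching — are indispensable, neither implying the other.
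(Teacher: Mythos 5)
Your proof is correct, and it is in fact \emph{more} careful than the paper's own argument, whose skeleton it otherwise shares (reduce to a per-coordinate scalar identity, then sum over paths). The paper's proof is a one-liner asserting that for any vectors with $|a_i|\geq|b_i|$ one has $\sum_i(|a_i|-|b_i|)=\sum_i|a_i-b_i|$; as you observe, this scalar identity is false without sign consistency (take $a_i=2$, $b_i=-1$), and consequently the lemma as literally stated is false: for a network consisting of a single path with two edges, $\param=(1,1)$ and $\param'=(-1,1)$ satisfy $|\Phi(\param)|=|\Phi(\param')|=1$, so the magnitude hypothesis holds, yet $\|\Phi(\param)-\Phi(\param')\|_1=2\neq 0=\|\Phi(\param)\|_1-\|\Phi(\param')\|_1$. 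Your extra step---transferring the parameter-level hypothesis $\param_i\param'_i\geq 0$ to the path level through the monomial structure, $\Phi_p(\param)\,\Phi_p(\param')=\prod_{e\in p}\param_e\param'_e\geq 0$ (including the bias factor for paths starting at a hidden neuron)---is exactly what is needed to close this gap, and your case analysis of the scalar identity under sign consistency plus domination is right. Two caveats worth recording. First, the sign hypothesis you invoke is not formally among the lemma's stated assumptions (it is the hypothesis of \Cref{thm:LipsParam}), so what you prove is a corrected lemma with an added assumption; this is the right fix, and it is harmless for the paper, since every place \Cref{lem:EqualityPathMetrics} is actually used (pruning, where $\param'=s\odot\param$ so that $\Phi_p(\param')\in\{0,\Phi_p(\param)\}$, and quantization with rounding systematically toward or away from zero) satisfies sign consistency automatically. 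Second, your closing remark that magnitude domination and sign matching are logically independent and both indispensable is accurate---the example above kills the first alone, and $a=1$, $b=2$ kills the second alone.
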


\begin{proof}
For  vectors $a,b$ with $|a_i|\geq |b_i|$ for every $i$, we have
\[
    \|a\|_1 - \|b\|_1 = \sum_{i} |a_i| - |b_i| = \sum_i |a_i - b_i| = \|a-b\|_1.\qedhere
\]
\end{proof}
An important scenario where $|\Phi(\param)|\geq |\Phi(\param')|$ indeed holds is when $|\theta|\geq |\theta'|$ coordinatewise. \textbf{The latter is true in at least two significant situations}: when $\param'$ is obtained from $\param$ by \textbf{pruning}, or through \textbf{quantization} provided that rounding is done either systematically towards zero or systematically away from zero.

Note that $|\theta|\geq |\theta'|$ is not the only situation where $|\Phi(\param)|\geq |\Phi(\param')|$. For instance, due to the rescaling-invariance of $\Phi(\cdot)$, if $\tilde{\theta}$ is rescaling-equivalent to $\theta$ the coordinatewise inequality $|\Phi(\tilde{\param})|\geq |\Phi(\param')|$ remains valid, even though in general such a $\tilde{\theta}$ non longer satisfies $|\tilde{\theta}| \geq |\theta'|$ coordinatewise.

Even out of such practical scenarios, the $\ell^1$-path-metric also satisfies an \emph{invariant} upper bound.
\begin{lemma}[Informal version of \Cref{lem:AppUpperBoundMetric}]\label{lemma:UpperBoundMetric}
Consider a DAG ReLU network with $L$ layers and width $W$. For any parameter $\theta$, denote by $\mathtt{N}(\theta)$ its normalized version, deduced from $\theta$ by applying rescaling-symmetries such that each neuron has its vector of incoming weights equal to $1$, except for output neurons. It holds for all parameters $\theta,\theta'$:
\begin{multline}\label[ineq]{eq:upper_bound}
        \|\Phi(\param)-\Phi(\param')\|_1 \\ \leq
(W^{2}+\min(\|\Phi(\param)\|_1,\|\Phi(\param')\|_1) \cdot  LW) \|\mathtt{N}(\param)-\mathtt{N}(\param')\|_{\infty}.
\end{multline}
\end{lemma}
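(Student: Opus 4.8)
The plan is to reduce everything to the normalized representative and then expand $\Phi$ coordinatewise into monomials. Since $\Phi$ is rescaling-invariant, $\Phi(\param)=\Phi(\normalized{\param})$ and $\Phi(\param')=\Phi(\normalized{\param'})$, so it suffices to bound $\|\Phi(\eta)-\Phi(\eta')\|_1$ with $\eta:=\normalized{\param}$, $\eta':=\normalized{\param'}$, by a multiple of $\|\eta-\eta'\|_\infty$. I would write each coordinate $\Phi_\path(\eta)=\prod_{e\in \path}\eta_e$ as the product of the edge-weights along the path $\path$ and invoke the elementary telescoping identity $\prod_e \eta_e-\prod_e \eta'_e=\sum_{e\in\path}\big(\prod_{e'<e}\eta_{e'}\big)(\eta_e-\eta'_e)\big(\prod_{e'>e}\eta'_{e'}\big)$, where $<,>$ denote the order along $\path$. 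Taking absolute values and using $|\eta_e-\eta'_e|\le\|\eta-\eta'\|_\infty$ gives $\|\Phi(\eta)-\Phi(\eta')\|_1\le \|\eta-\eta'\|_\infty \sum_\path\sum_{e\in\path}(\prod_{e'<e}|\eta_{e'}|)(\prod_{e'>e}|\eta'_{e'}|)$.

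The core of the argument is to rewrite this double sum. Exchanging summation order, $\sum_\path\sum_{e\in\path}=\sum_{\text{edges }e}\sum_{\path\ni e}$, and noting that a path through an edge $e=(a\to b)$ factors as (a path from an input to $a$) followed by $e$ followed by (a path from $b$ to an output), the inner sum factorizes as $\|\Phito{a}(\eta)\|_1\,\|\Phifrom{b}(\eta')\|_1$, the product of a backward and a forward partial path-norm. Thus the bound reduces to estimating $\sum_{e=(a\to b)}\|\Phito{a}(\eta)\|_1\,\|\Phifrom{b}(\eta')\|_1$.

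Here the normalization does the work. I would first prove, by induction along a topological order—using that every non-output neuron has unit $\ell^1$-norm of incoming weights together with the convention $\|\Phito{u}(\eta)\|_1=1$ at inputs—that $\|\Phito{a}(\eta)\|_1=1$ for every non-output neuron $a$. Since the tail of any edge is a non-output neuron, every backward factor above is exactly $1$. Applying the same identity to $\eta'$ has a second consequence: the forward partial path-norm $\|\Phifrom{v}(\eta')\|_1$ equals the full path-norm restricted to the paths passing through $v$ (the backward part contributes the factor $\|\Phito{v}(\eta')\|_1=1$). As depths strictly increase along every path, each path meets a given depth at most once, so summing over the neurons $v$ at a fixed depth $\ell$ gives $\sum_{v \text{ at depth } \ell}\|\Phifrom{v}(\eta')\|_1\le\|\Phi(\eta')\|_1=\|\Phi(\param')\|_1$. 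It then remains to split the edge sum by the depth of the head $b$: the $\le W^2$ edges whose head is an output neuron each contribute $\|\Phifrom{b}\|_1=1$ (producing the $W^2$ term), while the remaining edges, grouped by head-depth $\ell\in\{1,\dots,L-1\}$ and using in-degree $\le W$, contribute at most $W\|\Phi(\param')\|_1$ per depth, hence $\le LW\|\Phi(\param')\|_1$ overall. Telescoping with the primed factors first replaces $\|\Phi(\param')\|_1$ by $\|\Phi(\param)\|_1$; keeping the smaller of the two yields the $\min$.

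The step I expect to be delicate is not the telescoping but making the two partial-path-norm facts rigorous for a genuine DAG rather than a plain MLP: defining depth via longest-path length, and checking that skip connections, pooling neurons and bias edges are correctly folded into the incoming-weight normalization so that $\|\Phito{a}(\eta)\|_1\equiv1$ still holds. Bias paths, which originate at hidden neurons, must enter the same bookkeeping; I would model a bias as an extra incoming edge from a constant node and verify it does not disturb the counting beyond the stated $W^2$ and $LW$ factors. The "each path meets a given depth at most once" claim only costs an inequality when some paths skip layers, which is in our favor, so the main care is in the normalization bookkeeping rather than in any sharp estimate.
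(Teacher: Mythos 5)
Your proof is essentially correct and reaches the stated constants, but it is organized quite differently from the paper's. The paper proceeds in two stages: first a recursive inequality, proved by topological induction (\Cref{lem:DistParamSpaceNotRescaled}), bounding $\|\Phito{v}(\param)-\Phito{v}(\param')\|_q^q$ by a \emph{maximum over paths ending at $v$} of sums of per-neuron differences, weighted by products $\prod_k\|\paramto{p_k}\|_q^q$ and by $\max_{u}\|\Phito{u}(\param')\|_q^q$; it then specializes to normalized parameters (\Cref{lem:DistParamSpace}), where $\|\paramto{u}\|_q^q\le 1$ and $\|\Phito{u}(\param)\|_q^q\le 1$ for non-output $u$ and $\sum_{v\in\NeuronsOut}\|\paramto{v}\|_q^q\le\|\Phi(\param)\|_q^q$, and finally counts ($|\ant(v)|\le W$, path length $\le L+1$) to obtain $W^2+LW\min(\cdot,\cdot)$. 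You instead telescope each path monomial globally, exchange the sum over paths with the sum over edges, and factor the inner sum as a backward partial path-norm at the tail (in $\eta$) times a forward partial path-norm at the head (in $\eta'$); normalization then kills the backward factors, and grouping heads by depth bounds the forward factors by $\|\Phi(\eta')\|_1$, the symmetric telescoping giving the $\min$. Both routes rest on the same two ingredients (product-difference telescoping plus normalization), but your edge-sum/forward-backward bookkeeping is more direct and avoids the intermediate recursive lemma; what the paper's recursion buys is that it holds for unnormalized parameters and for every $\ell^q$, $q\ge 1$ (the formal \Cref{lem:AppUpperBoundMetric} is an $\ell^q$ statement, and \Cref{lem:DistParamSpaceNotRescaled} is reused in \Cref{app:RecoveringPreviousLipsParam}), whereas a fully expanded telescoping argument is specific to $\ell^1$---which is all the present statement requires.

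One step of yours needs patching. You assert $\|\Phito{a}(\eta)\|_1=1$ for every non-output neuron of a normalized parameter; for the normalization actually used in the formal statement (Algorithm 1 of \citet{Gonon23PathNorm}) one only has $\|\Phito{a}(\eta)\|_1\in\{0,1\}$, since a neuron whose incoming weights and bias all vanish cannot be rescaled to unit norm. For your backward factors the inequality $\le 1$ suffices, but your forward step---$\sum_{b\ \text{at depth}\ \ell}\|\Phifrom{b}(\eta')\|_1\le\|\Phi(\eta')\|_1$---silently uses the \emph{reverse} inequality $\|\Phito{b}(\eta')\|_1\ge 1$, because the mass of paths through $b$ equals $\|\Phito{b}(\eta')\|_1\cdot\|\Phifrom{b}(\eta')\|_1$, and a ``dead'' neuron $b$ with zero incoming mass could in principle carry a huge forward factor (output neurons' incoming weights are unnormalized). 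The fix is the zero-propagation property of the normalization: when a neuron's incoming norm is zero, Algorithm 1 multiplies its outgoing weights by that zero, so for normalized $\eta'$ either $\|\Phito{b}(\eta')\|_1=1$ or $\|\Phifrom{b}(\eta')\|_1=0$; in both cases $\|\Phifrom{b}(\eta')\|_1\le\sum_{p\ni b}|\Phi_p(\eta')|$ and your depth-grouping goes through. Under the informal statement's hypothesis that every non-output incoming norm equals one exactly, this issue is vacuous; and your bias-as-extra-edge bookkeeping only incurs the same benign $W\mapsto W+1$ slack that the paper's own counting also glosses over.
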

The proof is in \Cref{app:ProofUpperBoundMetric}. In all the cases of interest we consider, the lower bound \eqref{eq:ApproximationPathMetrics} is exact as a consequence of \Cref{lem:EqualityPathMetrics}. We leave it to future work to compare the lower bound with the upper bound of \Cref{lemma:UpperBoundMetric} in specific cases where the lower bound is inexact.

\begin{figure}[ht]
 \centering
  \includegraphics[width=\columnwidth]{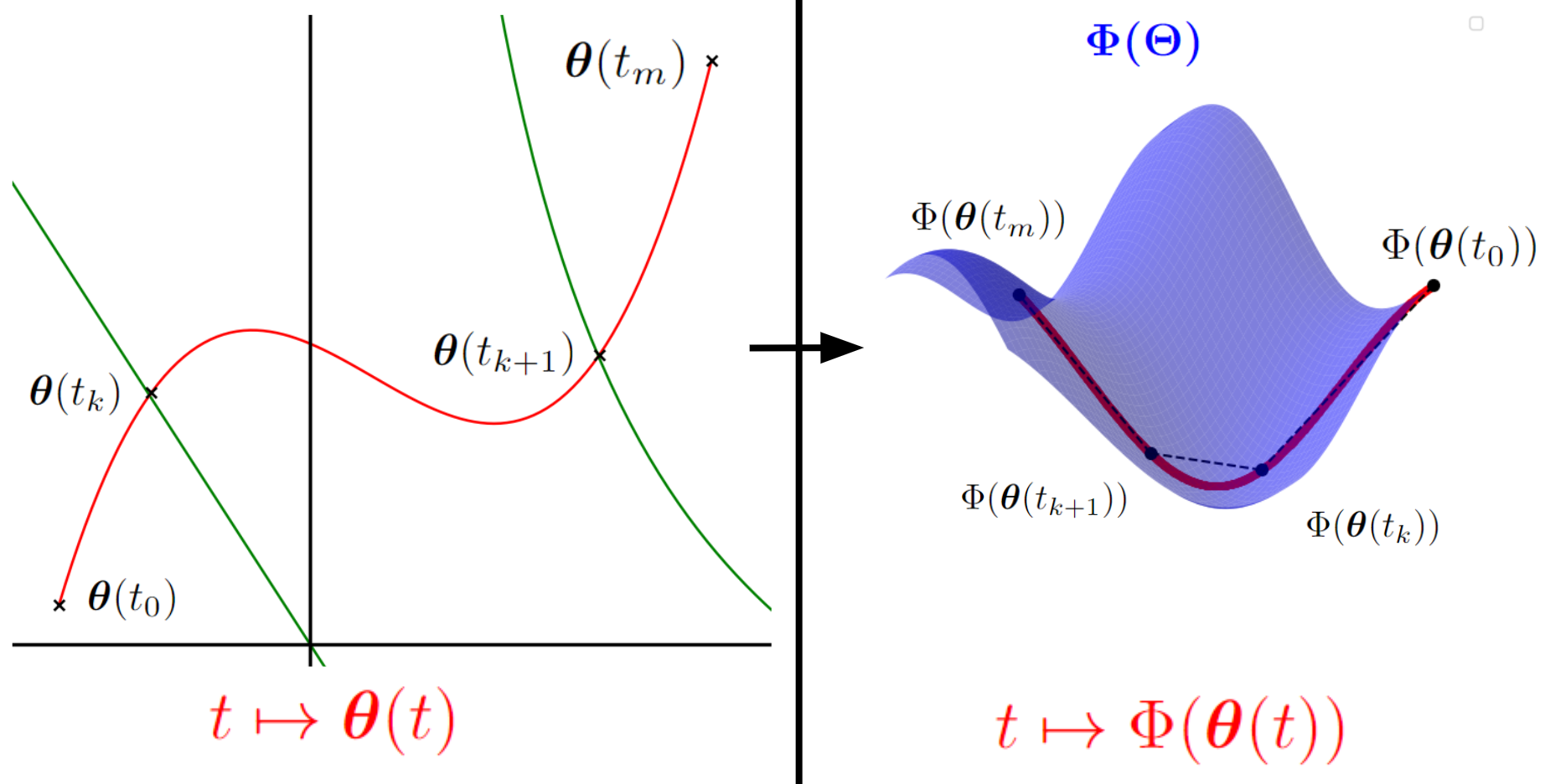}
  \caption{Illustration of the proof of \Cref{thm:LipsParam}, see \Cref{sec_sketch_proof} for an explanation.
  }
  \label{fig:Trajectories}
\end{figure}

\subsection{Improvement over previous Lipschitz bounds}

\Cref{eq:LipsParam} improves on the Lipschitz bound~\eqref{eq:GenLipBound} specified with \Cref{eq:1stPseudoDistance}, as the next result shows.

\begin{lemma}\label{lem:RecoveringPreviousDist}
Consider a simple layered fully-connected neural network architecture with $L\geq 1$ layers, corresponding to functions 
$R_\param(x) = M_L\relu(M_{L-1}\dots \relu(M_1x))$ with each $M_\ell$ denoting a matrix, 
and parameters $\param=(M_1,\dots,M_L
)$. For a matrix $M$, denote by $\|M\|_{1,\infty}$ the maximum $\ell^1$-norm of a row of $M$. Consider $R\geq 1$ and define the set $\Theta$ of parameters $\param=(M_1,\dots,M_L
)$ such that  $
\|M_\ell\|_{1,\infty}
\leq R$ for every $\ell\in\brint{1,L}$. Then, for every parameters $\param,\param'\in\Param$
    \begin{equation}\label{eq:PathMetricBoundMLP}
    \|\Phi(\param)-\Phi(\param')\|_1 \leq L W^2 R^{L-1} \|\param-\param'\|_\infty.
    \end{equation}
    Moreover the right hand-side can be arbitrarily worse that the $\ell^1$-pseudo-metric in the left hand side: over all rescaling-equivalent parameters $\tilde{\theta}\sim \theta$, it holds
    \[
       \sup_{\tilde{\theta}\sim \theta}\frac{\|\tilde{\theta}-\param'\|_\infty}{\|\Phi(\tilde{\theta})-\Phi(\param')\|_1} = \infty.
    \]
\end{lemma}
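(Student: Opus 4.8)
The plan is to prove the inequality by telescoping the difference of the two monomials defining each path-coordinate, and then to prove the ``moreover'' part by exploiting that $\Phi$ is constant along a rescaling orbit whereas the $\ell^\infty$-norm is not. Recall that in a layered fully-connected network every path $p$ joins an input neuron to an output neuron by picking exactly one neuron per layer, and its coordinate $\Phi_p(\param)$ is the product $\prod_{\ell=1}^L \param_\ell^{(p)}$ of the $L$ edge-weights $\param_\ell^{(p)} := (M_\ell)_{i_\ell,i_{\ell-1}}$ along $p$. I would start from the telescoping identity
\[
\Phi_p(\param)-\Phi_p(\param') = \sum_{\ell=1}^L \Big(\prod_{k<\ell}(\param')_k^{(p)}\Big)\big(\param_\ell^{(p)}-(\param')_\ell^{(p)}\big)\Big(\prod_{k>\ell}\param_k^{(p)}\Big),
\]
take absolute values, apply the triangle inequality, and sum over all paths $p$. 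Bounding each middle factor $|\param_\ell^{(p)}-(\param')_\ell^{(p)}|$ by $\|\param-\param'\|_\infty$ pulls that scalar out of the sum and deletes the weight on the $\ell$-th edge, so the residual path-sum factorizes at layer $\ell$.

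The crux — and the step I expect to be the main obstacle — is to bound this factorized path-sum cleanly by $W^2R^{L-1}$. After deleting the $\ell$-th edge weight, the sum splits as a product of a head sum over partial paths from an input neuron to a layer-$(\ell-1)$ neuron (built from the $\param'$-weights) times a tail sum over partial paths from a layer-$\ell$ neuron to an output neuron (built from the $\param$-weights). Each block is a sum of products of absolute edge-weights, and the key is to carry out the summations in forward order, eliminating at each step the incoming-neuron index of one matrix: the inner sum $\sum_{i_{k-1}}|(M_k)_{i_k,i_{k-1}}|$ over the incoming index is exactly the $\ell^1$-norm of a row of the matrix in question ($M'_k$ in the head, $M_k$ in the tail), hence $\le R$ because $\param,\param'\in\Param$. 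Collapsing the $\ell-1$ edges of the head leaves a single free neuron index ($\le W$) and a factor $R^{\ell-1}$; collapsing the $L-\ell$ edges of the tail leaves another free index ($\le W$) and a factor $R^{L-\ell}$, for a product $W^2R^{L-1}$ independent of $\ell$. Summing over the $L$ choices of $\ell$ then gives the claimed $LW^2R^{L-1}\|\param-\param'\|_\infty$. The subtlety to watch is that only summing over \emph{incoming} indices yields a row $\ell^1$-norm (bounded by $R$); summing over an outgoing index instead produces a column sum, controlled only by $WR$, which would inflate the final constant.

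For the ``moreover'' part, the plan is to use the rescaling-invariance $\Phi(\tilde\param)=\Phi(\param)$ for every $\tilde\param\sim\param$, which makes the denominator $\|\Phi(\tilde\param)-\Phi(\param')\|_1=\|\Phi(\param)-\Phi(\param')\|_1$ constant and, for a suitable choice of $\param,\param'$ (e.g.\ $\param'=0$ and $\param$ carrying a single nonzero path through a hidden neuron), strictly positive. I would then exhibit an explicit one-parameter family of rescalings: scaling the incoming weights of that hidden neuron by $\lambda>0$ and its outgoing weights by $1/\lambda$ leaves both the realization and $\Phi$ unchanged, yet makes $\|\tilde\param\|_\infty \ge \lambda\,|\,\text{incoming weight}\,| \to \infty$ as $\lambda\to\infty$, so $\|\tilde\param-\param'\|_\infty\to\infty$ while the denominator stays fixed. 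Hence the ratio diverges and the supremum is $+\infty$. The only points to verify are that the chosen hidden neuron carries a nonzero incoming weight (so the numerator blows up) and lies on a path with nonzero $\Phi$-coordinate (so the fixed denominator is nonzero), which is possible as soon as the architecture has at least one hidden layer.
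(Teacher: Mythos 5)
Your proof is correct, and for the inequality it takes a genuinely more direct route than the paper. The paper obtains \eqref{eq:PathMetricBoundMLP} as a corollary of a general-purpose DAG estimate (\Cref{lem:DistParamSpaceNotRescaled}), proved by induction along a topological sorting of the graph; the splitting $|ab-a'b'|\le |a-a'|\,|b|+|a'|\,|b-b'|$ used there at each neuron is precisely the recursive, neuron-by-neuron form of your flat telescoping of the $L$-fold product along each path. The paper then inserts the same ingredients you do---row sums bounded by $R$, the bound $\|\theta^{\to p_\ell}-(\theta')^{\to p_\ell}\|_1\le|\ant(p_\ell)|\,\|\theta-\theta'\|_\infty\le W\|\theta-\theta'\|_\infty$, the estimate $\|\Phi^{\to u}(\theta')\|_1\le R^{\ell-1}$ (obtained from the same lemma specialized at the zero parameter), and $|\NeuronsOut|\le W$---to land on the identical constant $LW^2R^{L-1}$ in \Cref{eq:UpperBoundPathMetrics}. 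Your head/tail factorization after deleting the $\ell$-th edge is equivalent bookkeeping: your two free indices (at the cut and at the output) are exactly the paper's two factors $|\ant(p_\ell)|\le W$ and $|\NeuronsOut|\le W$, and your remark that one must sum over \emph{incoming} (row) rather than outgoing (column) indices is exactly what keeps the constant at $W^2$ rather than letting an extra $W$ creep in. What the paper's longer route buys is reusability: \Cref{lem:DistParamSpaceNotRescaled} covers arbitrary DAGs with biases and pooling and also powers \Cref{lem:AppUpperBoundMetric}. What your route buys is a short, self-contained argument tailored to the biasless layered case.

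On the ``moreover'' part, the paper offers no written proof (\Cref{app:RecoveringPreviousLipsParam} only establishes \Cref{eq:UpperBoundPathMetrics}); it is implicitly deduced from the rescaling-invariance of $\Phi$ discussed in \Cref{sec:Model}. Your construction---$\theta'=0$, a parameter $\theta$ carrying one nonzero path through a hidden neuron, and the family scaling that neuron's incoming weights by $\lambda$ and its outgoing weights by $1/\lambda$---is a complete argument: the denominator equals the fixed positive number $\|\Phi(\theta)\|_1$ while the numerator grows linearly in $\lambda$. Your closing caveat is also substantive: for $L=1$ there are no hidden neurons, so the rescaling class of $\theta$ is $\{\theta\}$ and $\Phi$ is the identity up to re-indexing, making the supremum finite; the ``moreover'' claim therefore implicitly requires $L\ge 2$, a restriction not visible in the lemma's stated hypothesis $L\ge 1$.
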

The proof of \Cref{lem:RecoveringPreviousDist} is in \Cref{eq:UpperBoundPathMetrics}  in \Cref{app:RecoveringPreviousLipsParam}.

The \emph{invariant} Lipschitz bound \eqref{eq:LipsParam} combined with \eqref{eq:PathMetricBoundMLP} yields a (non-invariant) bound on $\|R_\theta(x)-R_{\theta'}(x)\|_1$:
\[
 \max(\|x\|_\infty,1) L W^2 R^{L-1} \|\theta-\theta'\|_\infty.
\]
In comparison the generic bound~\eqref{eq:GenLipBound} specified with \eqref{eq:1stPseudoDistance} reads
\[
 (W\|x\|_\infty+1) WL^2 R^{L-1} \|\theta-\theta'\|_\infty.
\]
As soon as $\|x\|_\infty \geq 1$ the latter is a looser bound than the former.

\subsection{Implication for scale-invariant sharpness}
\label{sec:sharpness-bridge}
Let $\ell:\R^{\dout}\!\times\R^{\dout}\!\to\mathbb{R}_{+}$ be
$\kappa$-Lipschitz in its first argument with respect to $\ell^1$-norm, 
and assume a data distribution
$\mathcal{D}$ over $(x,y)$.  For any parameter $\theta$ and perturbation radius
$\rho>0$, consider the \emph{scale-adaptative worst-case sharpness} 
(see Definition 2 in \citet{Kwon21AdaptativeSharpnessAwareMinimization}, or Equation 1 in \citet{Andriushchenko23ModernLookSharpnessGeneralization}):
\begin{multline*}
\text{Sharp}_{\rho}(\theta)
\;:=\;\\
\sup_{\;\|\delta \odot |\theta|^{-1}\|_{p}\leq\rho}
\;
\,\mathbb{E}_{(x,y)\sim\mathcal{D}}\!
\Bigl(\ell(R_{\theta+\delta}(x),y)
\;-\;
\ell(R_{\theta}(x),y)\Bigr) 
\end{multline*}

\begin{lemma}
\label{lem:sharpness}
For every $\rho\in(0,1)$ and every $\theta$,
\begin{multline*}
\text{Sharp}_{\rho}(\theta)
\;\;\le\;\;
\kappa
\,\mathbb{E}_{x\sim\mathcal{D}_x}\!\bigl[\max(\|x\|_{\infty},1)\bigr]\\
\,\sup_{\;\|\delta \odot |\theta|^{-1}\|_{p}\leq\rho}
 \|\Phi(\theta+\delta) - \Phi(\theta)\|_1
\end{multline*}
\end{lemma}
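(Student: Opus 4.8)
The plan is to chain three elementary facts: the Lipschitzness of the loss, the rescaling-invariant bound of \Cref{thm:LipsParam}, and monotonicity of expectation and supremum. First I would fix an admissible perturbation $\delta$ with $\|\delta \odot |\theta|^{-1}\|_{p}\leq\rho$ and invoke the $\kappa$-Lipschitzness of $\ell$ in its first argument to write, pointwise in $(x,y)$,
\[
\ell(R_{\theta+\delta}(x),y) - \ell(R_\theta(x),y) \leq \kappa \|R_{\theta+\delta}(x) - R_\theta(x)\|_1 .
\]

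The crucial step is to check that \Cref{thm:LipsParam} is applicable to the pair $(\theta,\theta+\delta)$, i.e.\ that $\theta_i(\theta+\delta)_i\geq 0$ for every coordinate $i$. This is exactly where the hypothesis $\rho\in(0,1)$ is consumed: since $\|v\|_p\geq\|v\|_\infty$ for every $p\geq 1$, the constraint $\|\delta \odot |\theta|^{-1}\|_{p}\leq\rho$ forces $|\delta_i|\leq\rho\,|\theta_i| < |\theta_i|$ for each $i$ with $\theta_i\neq 0$, while $\delta_i=0$ whenever $\theta_i=0$ (otherwise the adaptive norm would be infinite). Hence $\theta_i+\delta_i$ retains the sign of $\theta_i$, the same-sign hypothesis holds, and \Cref{thm:LipsParam} gives
\[
\|R_{\theta+\delta}(x) - R_\theta(x)\|_1 \leq \max(\|x\|_\infty,1)\, \|\Phi(\theta+\delta) - \Phi(\theta)\|_1 .
\]

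I would then combine the two displays and take the expectation over $(x,y)\sim\mathcal{D}$. Because $\|\Phi(\theta+\delta)-\Phi(\theta)\|_1$ does not depend on $x$, it factors out of the expectation, yielding, for each fixed admissible $\delta$,
\[
\mathbb{E}_{(x,y)\sim\mathcal{D}}\bigl(\ell(R_{\theta+\delta}(x),y) - \ell(R_\theta(x),y)\bigr) \leq \kappa\,\mathbb{E}_{x\sim\mathcal{D}_x}[\max(\|x\|_\infty,1)]\,\|\Phi(\theta+\delta) - \Phi(\theta)\|_1 .
\]
Taking the supremum over all $\delta$ with $\|\delta \odot |\theta|^{-1}\|_{p}\leq\rho$ on both sides — the prefactor $\kappa\,\mathbb{E}_{x\sim\mathcal{D}_x}[\max(\|x\|_\infty,1)]$ being a nonnegative constant independent of $\delta$, so it pulls through the supremum — delivers precisely the claimed bound on $\text{Sharp}_\rho(\theta)$.

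The argument is essentially a short assembly, so I do not expect a genuine obstacle; the only real content is the sign verification. The main point to get right is therefore confirming that $\rho<1$ combined with $\|\cdot\|_p\geq\|\cdot\|_\infty$ guarantees the same-sign assumption of \Cref{thm:LipsParam}, and handling cleanly the convention $\theta_i=0\Rightarrow\delta_i=0$ implicit in the adaptive normalization by $|\theta|^{-1}$.
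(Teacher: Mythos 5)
Your proposal is correct and follows essentially the same route as the paper's own proof: Lipschitzness of the loss, then verifying the same-sign hypothesis via $\|\cdot\|_{p}\geq\|\cdot\|_{\infty}$ and $\rho<1$ so that \Cref{thm:LipsParam} applies, then taking expectation and supremum. If anything, you are slightly more careful than the paper (handling the $\theta_i=0$ case explicitly and writing $|\delta_i|$ rather than $\delta_i$ in the sign check), but the argument is the same.
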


\begin{proof}
Lipschitzness of the loss yields $\ell(R_{\theta+\delta}(x),y)
-
\ell(R_{\theta}(x),y)
\leq 
\kappa\|R_{\theta+\delta}(x) - R_{\theta}(x)\|_1$. 
The condition $\|\delta \odot |\theta|^{-1}\|_{p}\leq\rho$ 
implies $\|\delta \odot |\theta|^{-1}\|_{\infty}\leq\rho$. 
Thus $\delta_i \leq |\theta_i|\rho<
|\theta_i|$ for every coordinate $i$ and we get $\sgn(\theta_i+\delta_i)=\sgn(\theta_i)$. Therefore \Cref{thm:LipsParam} applies and gives 
$\|R_{\theta+\delta}(x) - R_{\theta}(x)\|_1\leq \max(\|x\|_\infty,1) \|\Phi(\theta+\delta) - \Phi(\theta)\|_1$.
\end{proof}

\Cref{lem:sharpness} shows that our path-metric controls the scale-adaptative
sharpness notions used, e.g., in \cite{Kwon21AdaptativeSharpnessAwareMinimization} and
\cite{Andriushchenko23ModernLookSharpnessGeneralization}.

\subsection{Proof sketch of \Cref{thm:LipsParam} (full proof in \Cref{app:ProofLipsParam})}\label{sec_sketch_proof}

Given an input $x$, the proof of \Cref{thm:LipsParam} consists in defining a trajectory $t\in[0,1]\to\param(t)\in\Param$ ({\bf\color{red} red} curve in \Cref{fig:Trajectories}) that starts at $\param$, ends at $\param'$, and with finitely many breakpoints $0=t_0< t_1< \dots < t_m=1$ such that the path-activations $A(\param(t),x)$ are constant on the open intervals $t\in(t_k,t_{k+1})$. Each breakpoint corresponds to a value where the activation of at least one path (hence at least one neuron) changes in the neighborhood of $\theta(t)$. For instance, in the left part of \Cref{fig:Trajectories}, the straight {\bf\color{darkgreen} green} line (resp. quadratic {\bf\color{darkgreen} green} curve) corresponds to a change of activation of a ReLU neuron (for a given input $x$ to the network) in the first (resp. second) layer.

With such a trajectory, given the key property~\eqref{eq:KeyLinearization}, each quantity $|R_{\param(t_k)}(x) - R_{\param(t_{k+1})}(x)|$ can be controlled in terms of  $\|\Phi(\param(t_k))-\Phi(\param(t_{k+1}))\|_1$, and if the path is ``nice enough'', then this control can be extended globally from $t_0$ to $t_m$.

There are two obstacles: 1) proving that there are finitely many breakpoints $t_k$ as above (think of $t\mapsto t^{n+2}\sin(1/t)$ that is $n$-times continuously differentiable but still crosses $t=0$ an infinite number of times around zero), and 2) proving that the length $\sum_{k=1}^m \|\Phi(\param(t_k))-\Phi(\param({t_{k+1}}))\|_1$ of the broken line with vertices $\Phi(\param(t_k))$ (dashed line on the right part of \Cref{fig:Trajectories}) is bounded from above by $\|\Phi(\param)-\Phi(\param')\|_1$ times a reasonable factor. Trajectories satisfying these two properties are called ``admissible'' trajectories.

The first property is true as soon as the trajectory $t\mapsto\param(t)$ is smooth enough (analytic, say). For this, we will notably exploit that  the output of a ReLU neuron in the $d$-th layer 
of a layered fully-connected network is a piecewise polynomial function of the parameters $\param$ of degree at most $d$ \citep[consequence of Lemma A.1]{Gonon23PathNorm}, \citep[consequence of Propositions 1 and 2]{BonaPellissier22NipsIdentifiability}.
The property second is true {\em with factor one} thanks to a monotonicity property of the chosen trajectory.

The core of the proof consists in exhibiting a trajectory with these two properties. To the best of our knowledge, the proof of Inequality~\eqref{eq:LipsParamIntro} is the first to {\em practically leverage the idea of ``adequately navigating'' through the different regions in $\param$ where the network is polynomial}\footnote{The mapping $(\param,x)\mapsto R_\param(x)$ is indeed known to be piecewise polynomial in the coordinates of $\param$ \citep[consequence of   Lemma A.1]{Gonon23PathNorm}\citep[consequence of Propositions 1 and 2]{BonaPellissier22NipsIdentifiability}.} by respecting the geometry induced by $\Phi$, see \Cref{fig:Trajectories} for an illustration.

\section{Rescaling-Invariant Pruning}
\label{sec:pruning}

We exploit Inequality~\eqref{eq:LipsParamIntro} to design a pruning rule
that is \emph{both} effective and invariant to neuron-wise rescaling.
Instead of ranking weights by their magnitude, we rank them by their
\emph{$\ell^{1}$-path-metric} contribution.  
We show in a proof-of-concept on a ResNet-18 trained on ImageNet-1k under the
lottery-ticket “rewind-and-fine-tune’’ schedule
\citep{Frankle20EarlyPhaseTraining} that this \textit{path-magnitude} rule
achieves the same accuracy as classical magnitude pruning while becoming totally 
immune to arbitrary rescalings.

\subsection{Pruning: a quick overview}

Pruning typically involves ranking weights by a chosen criterion and removing (setting to zero) those deemed less important \cite{Han15DeepCompression}. Early criteria considered either weight magnitudes \cite{Hanson88Magnitude,Han15DeepCompression} or the loss’s sensitivity to each weight \cite{LeCun89OBD,Hassibi92OBS}. Building on these foundations, more sophisticated pruning methods have emerged, often formulated as complex optimization problems solved via advanced algorithms. For example, consider the \emph{entrywise} loss’s sensitivity criterion of \citep{LeCun89OBD}. In principle, all the costs should be recomputed after each pruning decision, since removing one weight affects the costs of the others. A whole literature focuses on turning the cost of \citep{LeCun89OBD} into an algorithm that would take into account these global dependencies \cite{Singh20WoodFisher,Yu22COBS,Benbaki23Chita}. This line of work recently culminated in CHITA \cite{Benbaki23Chita}, a pruning approach that scales up to millions of parameters through substantial engineering effort.

Here, we introduce a \emph{path-magnitude} cost defined for each \emph{individual weight} but that depends on the \emph{global} configuration of the weights. Just as sensitivity-based costs \cite{LeCun89OBD}, these costs should in principle be re-computed after each pruning decision. While taking these global dependencies into account is expected to provide better performance, this is also expected to require a huge engineering effort, similar to what has been done in \cite{Singh20WoodFisher,Yu22COBS,Benbaki23Chita}, which is beyond the scope of this paper. Our goal here is more modest: we aim at providing a simple proof-of-concept to show the promises of the path-lifting for \emph{rescaling-invariant} pruning.

\begin{table*}[ht]
\centering
\begin{threeparttable}
\caption{Comparison of pruning criteria across key properties. 
Being \emph{data-specific} or \emph{loss-specific} can be both a strength (leveraging the training loss and data for more accurate pruning) and a limitation (requiring access to additional information). Being \emph{rescaling-invariant} ensures the pruning mask is unaffected by neuron-wise weight rescaling.
}
\label{tab:PruningComparison}
\begin{tabular}{|x{0.17\textwidth}||x{0.12\textwidth}|x{0.12\textwidth}|x{0.07\textwidth}|x{0.07\textwidth}|x{0.12\textwidth}|x{0.12\textwidth}|}
\hline
{Criterion} &
\textbf{Rescaling-Invariant} &
\textbf{Error bound 
} &
\textbf{Data-Specific} &
\textbf{Loss-Specific} &
\textbf{Efficient to Compute} &
\textbf{Versatile}\tnote{a}
\\
\hline\hline

{Magnitude} 
& No
& Yes -- \eqref{eq:GenLipBound}-\eqref{eq:1stPseudoDistance}
& No
& No
& Yes
& Yes
\\
\hline

{Loss-Sensitivity (Taylor Expansion)}
& 
Yes in theory 
Not in practice\tnote{c} 
& No
& Yes
& Yes
& Depends\tnote{b}
& Yes
\\
\hline
\hline
\textbf{Path-Magnitude}
& Yes
& Yes -- \eqref{eq:Garantie1}
& No
& No
& Yes
& Yes
\\
\hline
\end{tabular}
\begin{tablenotes}
\footnotesize
\item[a] Can be used to design greedy approaches (including $\ell^0$-based methods) and supports both structured and unstructured pruning.
\item[b] Depends on how higher-order derivatives of the loss are taken into account. E.g., using only the diagonal of the Hessian can be relatively quick, but computing the full Hessian is infeasible for large networks. See \Cref{tab:pruning_times} for experiments.
\item[c] See \Cref{eq:InvarianceOBD} in \Cref{app:ComputationalCostPruning} for invariance in theory, and end of \Cref{app:ComputationalCostPruning} for non-invariance in practice.
\end{tablenotes}
\end{threeparttable}
\end{table*}

{\bf Notion of pruned parameter}.
Considering a neural network architecture given by a graph $G$, we use the shorthand  $\R^G$ to denote the corresponding set of parameters (see \Cref{def:NN} for a precise definition).
By definition, a pruned version $\theta'$ of $\theta \in \R^{G}$ is a "Hadamard" product $\theta' = s \odot \theta$, where $s \in \{0,1\}^{G}$ and $\|s\|_0$ is "small". We denote $\mathbf{1}_{G} \in \R^{G}$ the vector filled with ones, $e_{i} \in \R^{G}$ the $i$-th canonical vector, $s_i := \mathbf{1}_{G}-e_{i}$, and introduce the specialized notation $\theta_{-i} := s_i \odot \theta$ for the vector where a single entry (the weight of an edge or the bias of a hidden or output 
neuron) of $\theta$, indexed by $i$, is set to zero.

\subsection{Proposed rescaling-invariant pruning criterion}
The starting point of the proposed pruning criterion is that, given any $\theta$, the 
pair $\theta,\theta'$ with $\theta' := s \odot \theta$ satisfies the assumptions of  \Cref{thm:LipsParam}, hence for all input $x$ we have $| R_\theta(x) - R_{\theta'}(x) | \leq \| \Phi(\theta) - \Phi(\theta')\|_1 \max(1,\|x\|_\infty)$. Specializing this observation to the case where a single entry (the weight of an edge, or the bias of hidden or output 
neuron indexed by $i$)
of $\theta$ is pruned ({\em i.e.}, $\theta'=\theta_{-i}$) suggests the following definition, which will serve as a \emph{pruning criterion}:
\begin{definition}
We denote 
\begin{equation}\label{eq:PhiCost}
\PhiCost(\theta, i) := \|\Phi(\theta)-\Phi(
\theta_{-i}
)\|_{1}.
\end{equation}
This measures the contribution to the path-norm of all paths $p$ containing entry $i$: when $i \notin p$ we have $\Phi_{p}(\theta_{-i})=\Phi_{p}(\theta)$, while otherwise $\Phi_{p}(\theta_{-i})=0$. Since $\theta$ and $\theta_{-i}$ satisfy the assumptions of \Cref{lem:EqualityPathMetrics} we have
\begin{align}
     \PhiCost(\theta,i)
     & \underset{\eqref{eq:EqualityPathMetrics}}{=} \|\Phi(\theta)\|_1 - \|\Phi(\theta_{-i})\|_1 \label{eq:Garantie2}\\
 &= \sum_{p\in\paths} |\Phi_p(\theta)|-\sum_{p\in\paths: i\notin p} |\Phi_p(\theta)|\notag\\
 & = \sum_{p\in\paths: i\in p} |\Phi_p(\theta)|\label{eq:PhiCostAlt}
\end{align}
\end{definition}
In light of~\eqref{eq:LipsParam}, 
to limit the impact of pruning on the perturbation of the initial function $R_{\theta}$, 
it is natural to choose a coordinate $i$ of $\theta$ leading to a small value of this criterion.

\begin{lemma}\label{lem:PropertiesPathMag}
 $\PhiCost$ enjoys the following properties:
\begin{itemize}
\item {\bf rescaling-invariance}: for each $\theta \in \R^{G}$ and index $i$, $\PhiCost(\theta,i) = \PhiCost(\tilde{\theta},i)$ 
for every rescaling-equivalent parameters $\tilde{\theta} \sim \theta$;
\item {\bf error bound:} denote $s:= \mathbf{1}_{G}-\sum_{i \in I} e_{i}$ where $I$ indexes entries of $\theta \in \R^{G}$  to be pruned. We have
\begin{equation}\label[ineq]{eq:Garantie1}
\begin{multlined}
  |R_{\theta}(x)-R_{s \odot \theta}(x)| \\
  \leq \Big(\sum_{i \in I} \PhiCost(\theta,i)\Big) \max(1,\|x\|_{\infty}).
\end{multlined}
\end{equation}
\item {\bf computation with only two forward passes}: using \Cref{eq:Garantie2} and the fact that $\|\Phi(\cdot)\|_1$ is computable in one forward pass \cite{Gonon23PathNorm}.\item {\bf efficient joint computation {\em for all entries}:} we have
\begin{equation}\label{eq:Garantie3}
(\PhiCost(\theta,i))_{i} = \theta\odot \nabla_\theta \|\Phi(\theta)\|_1
\end{equation}
that enables computation via auto-differentiation.
\end{itemize}
\end{lemma}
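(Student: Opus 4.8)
The plan is to dispatch the four bullet points in turn, taking the closed form \eqref{eq:PhiCostAlt}, namely $\PhiCost(\theta,i)=\sum_{p\in\paths:\,i\in p}|\Phi_p(\theta)|$, as the common workhorse. For \textbf{rescaling-invariance} I would read the claim straight off this formula. The graph — hence the path set $\paths$ and the incidence relation $i\in p$ — does not depend on the parameters, and each coordinate $\Phi_p$ of the path-lifting is rescaling-invariant, so $\Phi_p(\tilde\theta)=\Phi_p(\theta)$ for every path $p$ whenever $\tilde\theta\sim\theta$. Summing the invariant quantities $|\Phi_p|$ over the fixed index set $\{p:i\in p\}$ gives $\PhiCost(\tilde\theta,i)=\PhiCost(\theta,i)$ at once.

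For the \textbf{error bound} \eqref{eq:Garantie1} I would first check that $\theta$ and $s\odot\theta$ satisfy the hypothesis of \Cref{thm:LipsParam}: since $s\in\{0,1\}^{G}$ one has $\theta_j(s\odot\theta)_j=s_j\theta_j^2\geq 0$ coordinatewise, so no sign flip occurs. \Cref{thm:LipsParam} then yields $|R_{\theta}(x)-R_{s\odot\theta}(x)|\leq \max(1,\|x\|_\infty)\,\|\Phi(\theta)-\Phi(s\odot\theta)\|_1$, and it remains to bound the path-metric by $\sum_{i\in I}\PhiCost(\theta,i)$. For a path $p$ one has $\Phi_p(s\odot\theta)=\Phi_p(\theta)$ when $p$ contains no pruned index and $\Phi_p(s\odot\theta)=0$ otherwise, so
\[
\|\Phi(\theta)-\Phi(s\odot\theta)\|_1=\sum_{p:\,p\cap I\neq\emptyset}|\Phi_p(\theta)|\leq \sum_{p}|\Phi_p(\theta)|\,|\{i\in I:i\in p\}|=\sum_{i\in I}\PhiCost(\theta,i),
\]
where the inequality uses $|\{i\in I:i\in p\}|\geq 1$ for every path $p$ meeting $I$.

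The \textbf{two-forward-pass} claim is immediate from \eqref{eq:Garantie2} (itself a consequence of \Cref{lem:EqualityPathMetrics}) together with the cited fact that $\|\Phi(\cdot)\|_1$ costs one forward pass. For the \textbf{joint computation} \eqref{eq:Garantie3} I would invoke a homogeneity (Euler-type) identity: each $\Phi_p$ is a monomial in $\theta$, so $|\Phi_p(\theta)|=\prod_{j\in p}|\theta_j|$ is $1$-homogeneous in $\theta_i$ when $i\in p$ and constant in $\theta_i$ otherwise. Hence, wherever $\theta_i\neq 0$, the chain rule gives $\theta_i\,\partial_{\theta_i}|\Phi_p(\theta)|=|\theta_i|\prod_{j\in p,\,j\neq i}|\theta_j|\,\mathbf{1}[i\in p]=|\Phi_p(\theta)|\,\mathbf{1}[i\in p]$; summing over $p$ and using \eqref{eq:PhiCostAlt} yields $\theta_i\,\partial_{\theta_i}\|\Phi(\theta)\|_1=\PhiCost(\theta,i)$, which is exactly \eqref{eq:Garantie3} read coordinatewise.

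The one genuinely non-cosmetic step will be the subadditivity inequality in the error bound: pruning the whole set $I$ simultaneously kills each path touching $I$ exactly once, whereas $\sum_{i\in I}\PhiCost(\theta,i)$ counts such a path once per pruned index it contains, and the argument hinges on recognizing that this over-counting only loosens the bound. A secondary point of care is the differentiability underlying \eqref{eq:Garantie3}: the identity holds wherever $\theta_i\neq 0$, which is enough in practice since auto-differentiation evaluates the (sub)gradient at the current, typically nonzero, weights.
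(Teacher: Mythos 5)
Your proof is correct, and three of the four bullets (rescaling-invariance, the two-forward-pass claim, and the Euler-identity derivation of \eqref{eq:Garantie3}) match the paper's argument essentially line for line; the genuine divergence is in the error bound. The paper proves \eqref{eq:Garantie1} by telescoping: it prunes the indices of $I$ one at a time, setting $s_j:=\mathbf{1}_{G}-\sum_{\ell\leq j}e_{i_\ell}$, notes that each consecutive pair $(s_{j-1}\odot\theta,\,s_j\odot\theta)$ satisfies the domination hypothesis of \Cref{lem:EqualityPathMetrics}, and converts the total path-metric into an exact sum of intermediate costs,
\[
\|\Phi(\theta)-\Phi(s\odot\theta)\|_1=\sum_{j=1}^{|I|}\PhiCost(s_{j-1}\odot\theta,i_j)\;\leq\;\sum_{j=1}^{|I|}\PhiCost(\theta,i_j),
\]
with the inequality entering only at the last step, when the ``updated'' costs are replaced by the original ones via \eqref{eq:PhiCostAlt}. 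You instead identify $\|\Phi(\theta)-\Phi(s\odot\theta)\|_1$ in one shot as $\sum_{p:\,p\cap I\neq\emptyset}|\Phi_p(\theta)|$ (any path hit by a pruned index dies, all others are untouched) and bound it by double counting, since each such path is counted $|\{i\in I: i\in p\}|\geq 1$ times in $\sum_{i\in I}\PhiCost(\theta,i)$. Both routes expose the same source of slack---paths touching several pruned indices are over-counted---but yours is more elementary and bypasses \Cref{lem:EqualityPathMetrics} entirely, whereas the paper's telescoping has the side benefit of exhibiting the exact intermediate identity above, which is precisely the ``recompute costs after each pruning decision'' quantity discussed in \Cref{subsection:DiscussionPruning}. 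Your closing caveat about the non-differentiability of $|\Phi_p|$ at $\theta_i=0$ is a point the paper glosses over (its formula $\partial_{\theta_i}|\Phi_p(\theta)|=\sgn(\theta_i)\prod_{j\in p,\,j\neq i}|\theta_j|$ likewise presumes $\theta_i\neq 0$), and it is harmless since both sides of \eqref{eq:Garantie3} vanish there.
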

The proof is given in \Cref{app:PropPathMag}.
We summarize these properties in \Cref{tab:PruningComparison}.
\subsection{Considered (basic) path-magnitude pruning method}\label{subsec:BasicPhiPruning}
Equipped with $\PhiCost$, a basic rescaling-invariant pruning approach is to minimize the upper-bound~\eqref{eq:Garantie1}. This is achieved via simple {\em reverse} hard thresholding:
\begin{enumerate}
\item \textbf{Score all weights.}
      The entire vector $(\PhiCost(\theta,i))_{i}$ can be produced in
      \emph{one} reverse-mode autograd pass via
      Eq.~\eqref{eq:Garantie3}.
\item \textbf{Prune.}
      Zero-out the weights with the smallest scores.
\end{enumerate}
To the best of our knowledge, this is the first practical network pruning method that is both \emph{invariant} under rescaling symmetries and \emph{endowed with guarantees} such as~\eqref{eq:Garantie2} on modern networks.

\begin{table}[ht]
    \centering
    \caption{
    Run-time (in milliseconds) to score \emph{all} weights. Time  of a forward pass included for reference. 
    Entries in "OBD" and "Forward" columns show times for batch-sizes $1$ and $128$ (e.g., “13--60” means $13$ ms at batch size $1$ vs. $60$ ms at batch size $128$). See \Cref{app:ComputationalCostPruning} for details.
    }
    \label{tab:pruning_times}
    \begin{tabular}{@{}lcccc@{}}
        \toprule
        {Network} & {Forward} & {Mag} & {OBD} & \textbf{Path-Mag} \\
        \midrule
        AlexNet 
        &
        1.7--133
        &
        0.5
        &
        13--60
        &
        14
        \\
        VGG16 
        &
        2.3--198
        &
        1.4
        &
        31--675
        &
        61
        \\
        ResNet18 
        &
        3.6--142
        &
        3.2
        &
        51--155
        &
        32
        \\
        \bottomrule
    \end{tabular}
\end{table}

While \Cref{tab:pruning_times} shows that path-magnitude pruning is \emph{computationally feasible}, we must also verify that when injected in usual pruning pipelines, it yields \emph{acceptable accuracies}.

\subsection{Proof-of-concept study}\label{sec:pruning-exp}

As a simple proof-of-concept, we prune a dense ResNet-18 trained on ImageNet-1k.

\textbf{Setup.}  Dense ResNet-18 on ImageNet-1k, standard training
hyper-parameters, lottery-ticket “rewind-and-fine-tune” schedule
\citep{Frankle21MissingTheMark}. We benchmark three pruning criteria:
(i) magnitude, (ii) magnitude after a \emph{random} neuron-wise
rescaling, (iii) our path-magnitude. See \Cref{app:expes} for details.

\textbf{Results.}  \Cref{tab:pruning-acc-body} reports top-1 accuracy after
fine-tuning when pruning either 40, 60 or 80\% of the weights.  
Path-magnitude matches\footnote{We performed \emph{no} extra
hyper-parameter tuning for path-magnitude; we reused the lottery-ticket
settings published for magnitude pruning in
\citet{Frankle21MissingTheMark}.} magnitude pruning on the un-rescaled 
network and \emph{completely eliminates} the 5–50\% accuracy drop incurred when
magnitude is applied after rescaling.  \Cref{fig:TrainingCurves} shows the full
training trajectory at 40 \% sparsity.

\textbf{Runtime.}  Path-magnitude scores for all weights are computed in
32 ms (\Cref{tab:pruning_times}), comparable to a single forward pass (see \Cref{app:ComputationalCostPruning} for details).

These results confirm that rescaling invariance is not just cosmetic: it
prevents large accuracy losses under benign weight re-scalings while keeping
the computational cost low.  A broader comparison with structured and
iterative methods such as CHITA is left for future work.

\begin{table}[t]
\centering
\caption{Top-1 ImageNet accuracy (\%) on ResNet-18 after one-shot pruning,
rewind, and 85-epoch fine-tune. Original accuracy: $67.7\%$. Three pruning levels shown; more in \Cref{app:expes}.
}
\label{tab:pruning-acc-body}
\begin{tabular}{@{}lccc@{}}
\toprule
Pruning level                     & 40\% & 60\% & 80\% \\ 
\midrule
\textbf{Path-magnitude}    & 68.6 & {67.9} & {66.0} \\ 
\bottomrule
Magnitude                  & 68.8 & 68.2 & 66.5 \\
Magnitude  (rescaled)      & 63.1 & 57.5 & 15.8 \\
\end{tabular}
\vspace{-0.8em}
\end{table}

\begin{figure}[htbp]
\centering
\includegraphics[width=\columnwidth]{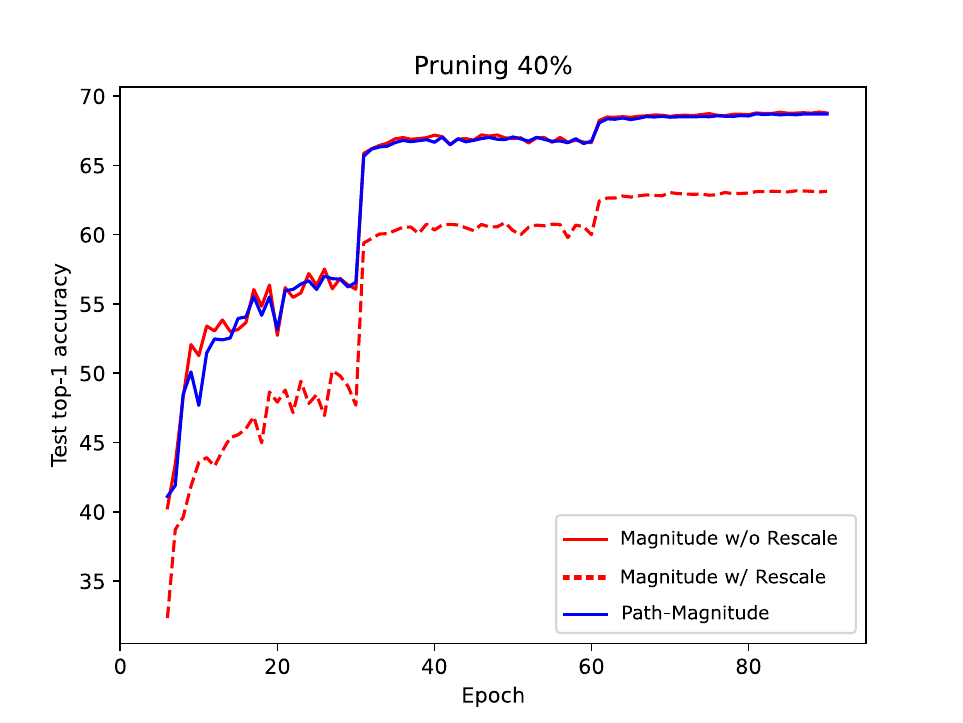}
\caption{Top-1 accuracy during fine-tuning at 40 \% sparsity.
Path-magnitude overlaps exactly with itself after random neuron-wise rescaling, while
magnitude pruning degrades.}
\label{fig:TrainingCurves}
\vspace{-1.2em}
\end{figure}

\subsection{Discussion and possible future extension}\label{subsection:DiscussionPruning}

The cost $\PhiCost(\theta,i)$ is defined per weight, but its value for a given weight indexed by $i$ also depends on 
the other weights. Therefore, one could hope to achieve better pruning properties if, once a weight is pruned, the path-magnitude costs of the remaining weights were updated. 
This is reminiscent of the loss-sensitivity cost \cite{LeCun89OBD} that associates to each weight $i$ (a surrogate of) the difference $\ell(\theta_{-i}) - \ell(\theta)$, where $\ell$ is a given loss function. The challenge is similar in both cases: how to account for \emph{global} dependencies between the pruning costs associated to each \emph{individual} weight? In this direction, a whole literature has developed techniques attempting to \emph{globally} minimize (a surrogate of) $\ell(s\odot \theta) - \ell(\theta)$ over the (combinatorial) choice of a support $s$ satisfying an $\ell^0$-constraint. Such approaches have been scaled up to million of parameters in \cite{Benbaki23Chita} by combining a handful of clever algorithmic designs. 
Similar iterative or greedy strategies could be explored to aim at solving the (seemingly) combinatorial $\ell^0$-optimization problem $\|\Phi(s\odot \theta)-\Phi(\theta)\|_1$.

\section{Conclusion}
We introduced a new Lipschitz bound on the distance between two neural network realizations, leveraging the path-lifting framework of \citet{Gonon23PathNorm}. By formulating this distance in terms of the $\ell^1$-path-metric, our result applies to a broad class of modern ReLU networks—including ones like ResNets or AlphaGo—and crucially overcomes the arbitrary pessimism arising in non-invariant parameter-based bounds. Beyond providing a theoretical guarantee, we also argued that this metric can be computed efficiently in practical scenarios such as pruning and quantization.

We then demonstrated how to apply path-lifting to pruning: the \emph{path-magnitude} criterion defines a rescaling-invariant measure of the overall contribution of a weight. In a proof-of-concept on a ResNet-18 trained on ImageNet, \emph{path-magnitude} pruning yields an accuracy on par with standard magnitude pruning. This connects the theoretical notion of path-lifting to a practical goal: making pruning decisions that cannot be undermined by mere neuron-wise rescaling.

This work raises several directions for future research. First, a natural challenge is to establish sharper versions of our core result (\Cref{thm:LipsParam}), typically with metrics still based on the path-lifting but using $\ell^p$-norms with $p>1$, or by deriving functional bounds in expectation (over a given probability distribution of inputs). 

Second, more advanced iterative algorithms, akin to second-order pruning techniques, might benefit from path-lifting as a fundamental building block, improving upon the simple one-pass approach used in our proof-of-concept while retaining invariance properties (see \Cref{subsection:DiscussionPruning}).

Finally, although our main theorem improves existing Lipschitz bounds and extends them to a wide range of network architectures, the potential applications of the path-lifting perspective--and its invariance under rescaling--are far from exhausted. Quantization and generalization, in particular, are two important areas where the present findings might stimulate further developments on metrics that offer both theoretical grounding and compelling practical properties.

\section*{Acknowledgements}
This work was supported in part by the AllegroAssai ANR-19-CHIA-0009, by the NuSCAP ANR-20-CE48-0014 projects of the French Agence Nationale de la Recherche and by the SHARP ANR
project ANR-23-PEIA-0008 in the context of the France 2030 program.

The authors thank the Blaise Pascal Center for the computational means. It uses the SIDUS solution \citep{quemener2013sidus} developed by Emmanuel Quemener.

\section*{Impact Statement}


This paper presents work whose goal is to advance the field of
Machine Learning. There are many potential societal consequences
of our work, none which we feel must be specifically highlighted here.


\bibliography{main}
\bibliographystyle{icml2025}

\newpage
\appendix
\onecolumn

{\large\bf Appendices}

\section{Path-lifting, activations, and a fixed incidence matrix}
\label{app:Model}
We recall the construction of \citet{Gonon23PathNorm}, but instead of 
considering the path-activation matrix $A(\theta,x)$ as in \citet{Gonon23PathNorm}, 
we introduce two new objects $A$ and $a(\theta,x)$ that lead to mathematically equivalent formulas 
but to a lighter proof of
\Cref{thm:LipsParam}:  
\begin{itemize}
    \item the \emph{path-activation vector} $a(\theta,x)$, and  
    \item a \emph{fixed}  incidence matrix $A$ that depends only on the DAG
  architecture, never on $\theta$ or $x$.
\end{itemize}

\subsection{Network architecture}

\begin{definition}[ReLU and $k$-max-pooling activation functions]
\label{def:ReluKpool}
The ReLU function is defined as $\relu(x):=x\mathbbm{1}_{x\geq 0}$ for $x\in\R$. The $k$-max-pooling function $\kpool(x) := x_{(k)}$ returns  the $k$-th largest coordinate of $x\in\R^d$.
\end{definition}

\begin{definition}[DAG-ReLU neural network \citep{Gonon23PathNorm}]
\label{def:NN} Consider a Directed Acyclic Graph (DAG) $G=(\NeuronSet, E)$ with edges $E$, and vertices $\NeuronSet$ called neurons. For a neuron $v$, the sets $\ant(v),\suc(v)$ of antecedents and successors of $v$ are $\ant(v) := \{u\in\NeuronSet, u\to v \in E\}, \suc(v) := \{u\in\NeuronSet, v\to u \in E\}$. Neurons with no antecedents (resp. no successors) are called input (resp. output) neurons, and their set is denoted $\NeuronsIn$ (resp. $\NeuronsOut$). Neurons in $\NeuronSet\setminus(\NeuronsIn\cup\NeuronsOut)$ are called hidden neurons. Input and output dimensions are respectively $\din := |\NeuronsIn|$ and $\dout := |\NeuronsOut|$.

$\bullet$ A {\bf ReLU neural network architecture} is a tuple $(G, (\rho_v)_{v\in\NeuronSet\setminus\NeuronsIn})$ composed of a DAG $G=(\NeuronSet,E)$ with attributes $\rho_v\in \{\id,\relu\}\cup \{\kpool, k\in\Ns\}$ for $v\in\NeuronSet\setminus(\NeuronsOut\cup\NeuronsIn)$ and $\rho_v=\id$ for $v\in\NeuronsOut$. 
We will again denote the tuple $(G, (\rho_v)_{v\in\NeuronSet\setminus\NeuronsIn})$ by $G$, and it will be clear from context whether the results depend only on $G=(\NeuronSet,E)$ or also on its attributes. Define $\NeuronSet_{\rho}:=\{v\in \NeuronSet, \rho_v=\rho\}$ for an activation $\rho$, and $\NeuronSet_{\pool} :=\cup_{k\in\Ns} \NeuronSet_{\kpool}$. A neuron in $\NeuronSet_{\pool}$ is called a $*$-max-pooling neuron. For $v\in\NeuronSet_{\pool}$, its kernel size is defined as being $|\ant(v)|$.

$\bullet$ {\bf Parameters} associated with this architecture are vectors\footnote{For an index set $I$, denote $\R^I = \{(\param_i)_{i\in I}, \param_i\in\R\}$.} $\param\in\R^G:=\R^{E\cup \NeuronSet\setminus\NeuronsIn}$. We call bias $\bias_v:=\param_v$ the coordinate associated with a neuron $v$ (input neurons have no bias), and denote $\paramfromto{u}{v}$ the weight associated with an edge $u\to v\in E$. We will often denote $\paramto{v}:=(\paramfromto{u}{v})_{u\in\ant(v)}$ and $\paramfrom{v}:=(\paramfromto{u}{v})_{u\in\suc(v)}$.

$\bullet$ The {\bf realization} of a neural network with parameters $\param\in\R^G$ is the function
$R^{G}_\param:\R^{\NeuronsIn}\to\R^{\NeuronsOut}$ (simply denoted $R_\param$ when $G$ is clear from the context) defined for every input $x\in\R^{\NeuronsIn}$ as
\[
    R_\param(x) := (v(\param, x))_{v\in\NeuronsOut},
\]
where we use the same symbol $v$ to denote a neuron $v\in\NeuronSet$ and the associated function $v(\param,x)$, defined as $v(\param, x) := x_v$ for an input neuron $v$, and defined by induction otherwise
\begin{equation*}\label{def:Neurons}
    v(\param, x) := \left\{\begin{array}{cc}
        \rho_v(\bias_v + \sum_{u\in\ant(v)} u(\param, x)\paramfromto{u}{v}) & \textrm{if }\rho_v=\relu\textrm{ or }\rho_v=\id,\\
        \kpool\left((\bias_v + u(\param, x)\paramfromto{u}{v})_{u\in\ant(v)} \right) & \textrm{if }\rho_v=\kpool.
    \end{array}\right.
\end{equation*}
\end{definition}

\subsection{Paths and the path-lifting}

\begin{definition}[Paths and depth in a DAG \citep{Gonon23PathNorm}]\label{def:Paths}
Consider a DAG $G=(N,E)$ as in \Cref{def:NN}. A path of $G$ is any sequence of neurons $v_0,\dots, v_d$ such that each $v_{i}\to v_{i+1}$ is an edge in $G$. Such a path is denoted $p = v_0 \to \ldots \to v_d$. This includes paths reduced to a single $v\in\NeuronSet$, denoted $p=v$. The {\em length} of a path is $\length{p}=d$ (the number of edges). We will denote $p_\ell := v_\ell$ the $\ell$-th neuron for a general $\ell\in\{0,\dots,\length{p}\}$ and use the shorthand $\pathend{p}=v_{\length{p}}$ for the last neuron. The {\em depth of the graph} $G$ is the maximum length over all of its paths. If $v_{d+1} \in \suc(\pathend{p})$ then $p \to v_{d+1}$ denotes the path $v_0 \to \ldots \to v_d \to v_{d+1}$. We denote by $\paths^{G}$ (or simply $\paths$) the set of paths ending at an output neuron of $G$.
\end{definition}

\begin{definition}[Sub-graph ending at a given neuron]\label{def:subgraphToV}
    Given a neuron $v$ of a DAG $G$, we denote $G^{\to v}$ the graph deduced from $G$ by keeping only the largest subgraph with the same inputs as $G$ and with $v$ as a single output: every neuron $u$ with no path to reach $v$ through the edges of $G$ is removed, as well as all its incoming and outcoming edges. We will use the shorthand $\pathsto{v}:=\paths^{G^{\to v}}$ to denote the set of paths in $G$ ending at $v$.
\end{definition}

\begin{definition}[Path-lifting $\Phi(\theta)$]
\label{def:Phi}
Consider a DAG-ReLU neural network $G$ as in \Cref{def:NN} and parameters $\param\in\R^G$ associated with $G$. For $p\in\paths$, define
\[
\Phi_p(\param) := \left\{\begin{array}{cc}
    \prod\limits_{\ell=1}^{\length{p}} \paramfromto{v_{\ell-1}}{v_{\ell}} & \textrm{if }p_0\in\NeuronsIn,\\
    \bias_{p_0} \prod\limits_{\ell=1}^{\length{p}} \paramfromto{v_{\ell-1}}{v_{\ell}} & \textrm{otherwise},
\end{array}\right.
\]
where an empty product is equal to $1$ by convention. The path-lifting $\Phi^{G}(\param)$ of $\param$ is
\[
    \Phi^G(\param) := (\Phi_p(\param))_{p\in\paths^G}.
\]
This is often denoted $\Phi$ when the graph $G$ is clear from the context. We will use the shorthand $\Phito{v} := \Phi^{G^{\to v}}$ to denote the path-lifting associated with $G^{\to v}$ (\Cref{def:subgraphToV}).
\end{definition}

\subsection{Path-activation vector and fixed incidence matrix}

\begin{definition}[Activation of edges, neurons, and paths]
\label{def:Activation}
Given $\theta,x$, the activation of an edge $u\to v$ is
$a_{u\to v}(\theta,x):=1$ if $v$ is identity,  
$\mathbbm 1_{v(\theta,x)>0}$ if $v$ is ReLU,  
and for $k$-max-pool it is $1$ only for the (lexicographically) first
antecedent achieving the $k$-th maximum.
For a neuron $v$ set
$a_v(\theta,x):=1$ if $v$ is input, identity, or max-pool, and
$\mathbbm 1_{v(\theta,x)>0}$ if $v$ is ReLU.
For a path $p=v_0\to\cdots\to v_d$ define
\[
  a_p(\theta,x):=a_{v_0}(\theta,x)
                \prod_{\ell=1}^{d}a_{v_{\ell-1}\!\to v_\ell}(\theta,x)
  \;\in\{0,1\}.
\]
The \emph{path-activation vector} is
$a(\theta,x):=(a_p(\theta,x))_{p\in\mathcal P}\in\{0,1\}^{\mathcal P}$.
\end{definition}

\begin{definition}[Fixed incidence matrix $A$]
\label{def:MatrixA}
Consider a new symbol $\biasNeuron$  that is not used to denote neurons. 
Instead of considering as in \cite{Gonon23PathNorm} the path-activations matrix $\activation(\param,x)\in\R^{\paths\times (\BiasAndNeuronsIn)}$ whose coordinates are indexed by paths $p\in\paths$ and neurons $u\in\BiasAndNeuronsIn$ and are given by
\[
(\activation(\param,x))_{p, u}:= \left\{\begin{array}{cc}
        a_p(\param,x) \mathbbm{1}_{p_0=u} & \textrm{if }u\in\NeuronsIn, \\
        a_p(\param,x) \mathbbm{1}_{p_0\notin{\NeuronsIn}} & \textrm{otherwise when }u=\biasNeuron.
    \end{array}\right.
\]
we define a fixed incidence matrix $A$, which corresponds to the all-activated case in the definition of $\activation(\param,x)$ above, and which 
maps input neurons to the path they belong to:
\[
  A_{p,u}\;:=\;
  \begin{cases}
    1 & \text{if }u\in N_{\mathrm{in}}\text{ and }p_0=u,\\
    1 & \text{if }u=\biasNeuron\text{ and }p_0\notin N_{\mathrm{in}},\\
    0 & \text{otherwise},
  \end{cases}
\]
so $A\in\{0,1\}^{\mathcal P\times(|N_{\mathrm{in}}|+1)}$ depends
\emph{only} on the graph.
\end{definition}

\subsection{Key inner-product identity}

With our new notations, \Cref{eq:KeyLinearization} (corresponding to Theorem A.1 in \cite{Gonon23PathNorm}) can be rewritten as:
\begin{equation}
  R_\theta(x)\;=\;
  \bigl\langle\,
        \underbrace{\Phi(\theta)\odot a(\theta,x)}_{\text{path weights}},
        \;
        \underbrace{A}_{\text{fixed incidence}} \begin{pmatrix}
            x\\
            1
        \end{pmatrix}
  \bigr\rangle.
  \tag{4$'$}\label{eq:key-linearization-a}
\end{equation}

\begin{figure}[htbp]
 \centering
    \begin{subfigure}{\textwidth}
        \centering
        \includegraphics[scale=0.2]{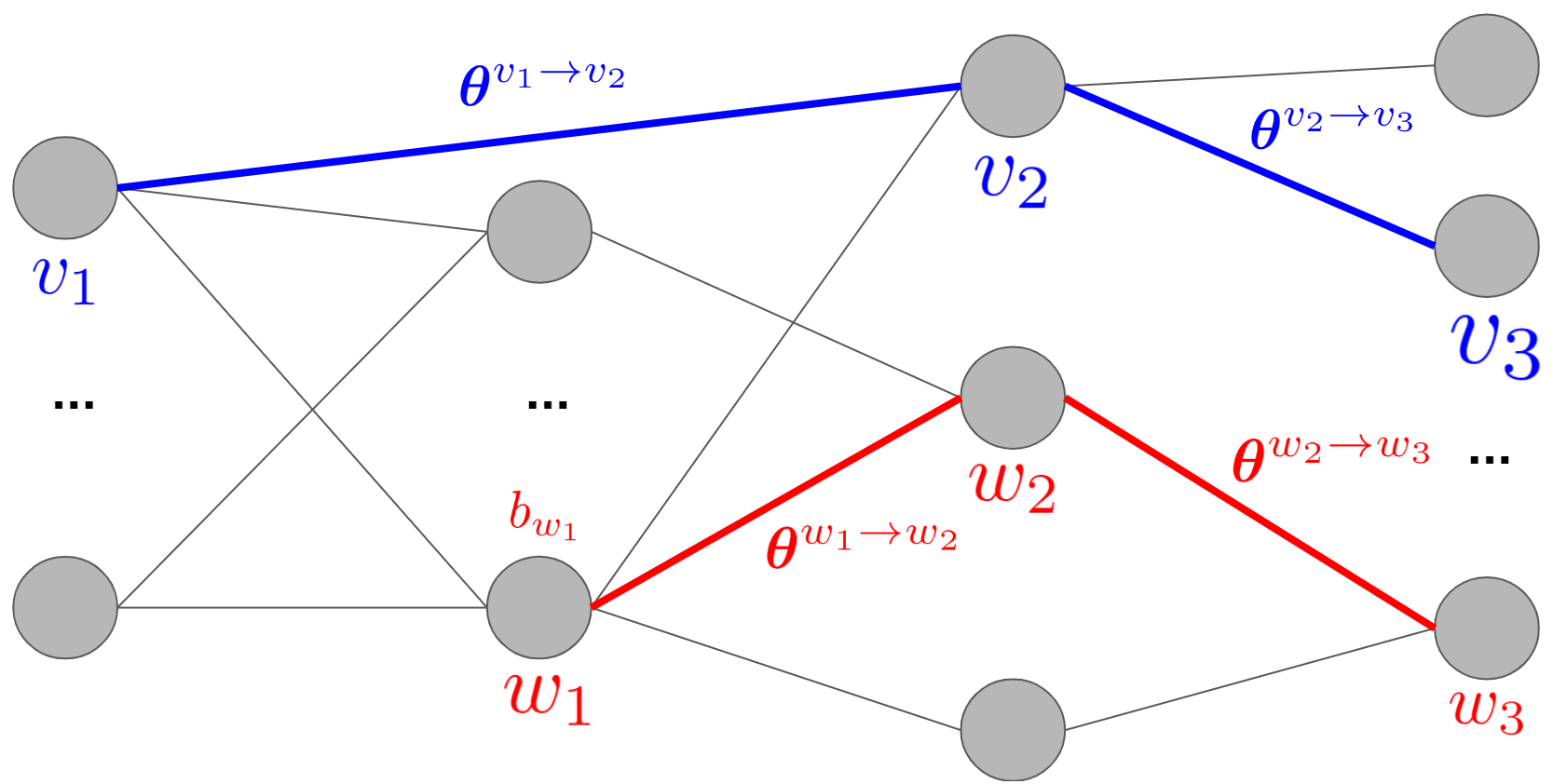}
    \end{subfigure}%

    \vspace{5em} 

    \begin{subfigure}{\textwidth}
        \centering
\[
        A \quad = \quad\quad \begin{tikzpicture}[decoration=brace, baseline=(current bounding box.center), scale=0.01]
    \matrix (m) [matrix of math nodes,left delimiter=(,right delimiter={)}] {
        \textcolor{white}{0} & & & \dots & & & \textcolor{white}{0} & \\
        0 & \dots & 0 & \textcolor{blue}{1} & 0 & \dots & 0 & 0\\
         \textcolor{white}{0} & & & \dots & & & \textcolor{white}{0} & \textcolor{white}{0} \\
        \textcolor{white}{0} & & & & & & & \vdots\\
        \textcolor{white}{0} & & & 0 & & & & \textcolor{red}{1}\\
        \textcolor{white}{0} & & & & & & \textcolor{white}{0} & \vdots\\
    };

    \node[left=1.3em of m-2-1.west] {$\textcolor{blue}{p}$};
    \node[left=1.3em of m-5-1.west] {$\textcolor{red}{p'}$};

    \node[above=0.5em of m-1-4.north] {$\textcolor{blue}{v_1}$};
    \node[above=1.8em of m-2-8.north] {$\quad\biasNeuron$};

    \draw[decorate,transform canvas={xshift=-2.8em},thick] (m-3-1.west) -- node[left=2pt] {$\paths_I$} (m-1-1.west);
    \draw[decorate,transform canvas={xshift=-2.8em},thick] (m-6-1.west) -- node[left=2pt] {$\paths_H$} (m-4-1.west);
    \draw[decorate,transform canvas={yshift=1.5em},thick] (m-1-1.north) -- node[above=2pt] {$\NeuronsIn$} (m-1-7.north);

    \draw[dashed, gray] (m-3-1.south west) -- (m-3-8.south east);
    \draw[dashed, gray] (m-1-7.north east) -- (m-6-7.south east);
\end{tikzpicture}
        \]
    \end{subfigure}
    \caption{The coordinate of the path-lifting $\Phi$ associated with the path ${\color{blue} p = v_1 \to v_2\to v_3}$ is ${\color{blue} \Phi_p(\param) = \paramfromto{v_1}{v_2}\paramfromto{v_2}{v_3}}$
    since it starts from an input neuron (\Cref{def:Phi}). While the path ${\color{red} p' = w_1 \to w_2\to w_3}$ starts from a hidden neuron (in $\NeuronSet\setminus(\NeuronsIn\cup\NeuronsOut)$), so there is also the bias of $w_1$ to take into account: ${\color{red} \Phi_{p'}(\param) = b_{w_1}\paramfromto{w_1}{w_2}\paramfromto{w_2}{w_3}}$. 
    As specified in \Cref{def:Activation}, the columns of the incidence matrix $A$ are indexed by $\NeuronsIn\cup\{\biasNeuron\}$ and its rows are indexed by $\paths=\paths_I\cup\paths_H$, with $\paths_I$ the set of paths in $\paths$ starting from an input neuron, and $\paths_H$ the set of paths starting from a hidden neuron.
    }
    \label{fig:DefPhiApp}
\end{figure}

\section{Proof of \Cref{thm:LipsParam}}\label{app:ProofLipsParam}

In this section, we prove a slightly stronger version of \Cref{thm:LipsParam}. We do not state this stronger version in the main body as it requires having in mind the definition of the path-lifting $\Phi$, recalled in \Cref{def:Phi}, to understand the following notations. For parameters $\param$, we will denote $\PhiEdge(\param)$ (resp. $\PhiBias(\param)$) the sub-vector of $\Phi(\param)$ corresponding to the coordinates associated with paths starting from an input (resp. hidden) neuron. Thus, $\Phi(\param)$ is the concatenation of $\PhiEdge(\param)$ and $\PhiBias(\param)$.

\begin{theorem}\label{thm:LipsParamApp}
    Consider a ReLU neural network as in \Cref{def:NN}, with output dimension equal to one. Consider associated parameters $\param,\param'$. If for every coordinate $i$, $\param_i$ and $\param'_i$ have the same signs or at least one of them is zero ($\param_i\param'_i\geq 0$), we have for every input $x$:
\begin{equation}\label[ineq]{eq:LipsParamApp}
    |R_\param(x)-R_{\param'}(x)| \leq \|x\|_{\infty} \|\PhiEdge(\param)-\PhiEdge(\param')\|_1 + \|\PhiBias(\param)-\PhiBias(\param')\|_1.
\end{equation}
Moreover, for every neural network architecture, there are non-negative parameters $\param\neq \param'$ and a non-negative input $x$ such that \Cref{eq:LipsParam} is an equality.
\end{theorem}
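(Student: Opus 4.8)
The plan is to linearize the nonlinear map $\param\mapsto R_\param(x)$ by exploiting the identity \eqref{eq:key-linearization-a}, namely $R_\param(x)=\langle \Phi(\param)\odot a(\param,x),\,A\,(x,1)^\top\rangle$. Writing $M:=A\,(x,1)^\top\in\R^{\paths}$ so that $R_\param(x)=\sum_{p\in\paths}\Phi_p(\param)\,a_p(\param,x)\,M_p$, one has $|M_p|=|x_{p_0}|\le\|x\|_\infty$ when $p$ starts at an input neuron (i.e.\ $p\in\paths_I$, contributing to $\PhiEdge$) and $|M_p|=1$ when $p$ starts at a hidden neuron ($p\in\paths_H$, contributing to $\PhiBias$). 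The decisive feature is that once the activation pattern $a(\param,x)$ is \emph{frozen}, $R_\param(x)$ is linear in $\Phi(\param)$. I would therefore connect $\param$ to $\param'$ by a trajectory $t\in[0,1]\mapsto\param(t)$ along which $a(\param(t),x)$ is piecewise constant, control the output variation on each constant piece, and reassemble by the triangle inequality $|R_\param(x)-R_{\param'}(x)|\le\sum_k|R_{\param(t_{k+1})}(x)-R_{\param(t_k)}(x)|$.

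For the trajectory I would take the \emph{geometric} (log-linear) interpolation $\param_e(t):=\sgn(\param_e)\,|\param_e|^{1-t}|\param'_e|^{t}$ of each coordinate (same for biases). Its virtue is monotonicity of every path-coordinate: since $\Phi_p$ is a product of the entries along $p$ (times the starting bias for $p\in\paths_H$), one gets $\Phi_p(\param(t))=\pm\,|\Phi_p(\param)|^{1-t}|\Phi_p(\param')|^{t}$, a monotone function of $t$. Because zero coordinates break the logarithm, I would first prove the bound when all same-sign coordinates are nonzero, then recover the general same-sign case by density: approximate $\param,\param'$ by strictly-same-sign nonzero parameters $\param^\varepsilon\to\param$, $\param'^\varepsilon\to\param'$, apply the inequality, and pass to the limit using continuity of $R_\cdot(x)$ and of $\Phi$ in the parameters.

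The main obstacle is guaranteeing \emph{finitely many} breakpoints. Here I would combine real-analyticity of the trajectory in $t$ (each $\param_e(t)$ is an exponential of an affine function of $t$) with the fact that each neuron's pre-activation is a piecewise-polynomial, hence piecewise-analytic, function of $\param$ of controlled degree (a consequence of Lemma A.1 of \citet{Gonon23PathNorm}): within any fixed activation region the pre-activations are analytic and not identically zero in $t$, so their zero sets are finite, producing finitely many toggling times $0=t_0<\dots<t_m=1$. On each open interval $(t_k,t_{k+1})$ the pattern $a(\param(t),x)=:a^{(k)}$ is constant, so differentiating the frozen linearization gives $|R_{\param(t_{k+1})}(x)-R_{\param(t_k)}(x)|\le\int_{t_k}^{t_{k+1}}\sum_p|\tfrac{d}{dt}\Phi_p(\param(t))|\,|a^{(k)}_p|\,|M_p|\,dt$; bounding $|a^{(k)}_p|\le1$ and $|M_p|$ as above splits this into an input-path term weighted by $\|x\|_\infty$ and a hidden-path term weighted by $1$. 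Summing over $k$ and using monotonicity of each $\Phi_p(\param(t))$, the per-coordinate total variation telescopes exactly, $\sum_k\int_{t_k}^{t_{k+1}}|\tfrac{d}{dt}\Phi_p|\,dt=|\Phi_p(\param)-\Phi_p(\param')|$, which yields the factor-one bound $\|x\|_\infty\|\PhiEdge(\param)-\PhiEdge(\param')\|_1+\|\PhiBias(\param)-\PhiBias(\param')\|_1$, i.e.\ exactly \eqref{eq:LipsParamApp} (and hence \eqref{eq:LipsParam}).

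For the sharpness claim I would exhibit one tight instance valid for any architecture: take $\param$ with strictly positive weights and biases, $\param':=\lambda\param$ for some $\lambda\in(0,1)$ (or any entrywise-smaller positive $\param'$), and the constant input $x=\mathbf 1$, so $\|x\|_\infty=1$ and $M_p=1$ for every $p$. With non-negative parameters and input, every ReLU and every selected max-pool route is consistent for $\param$ and $\param'$; by perturbing only along the commonly-active paths one forces $\Phi_p(\param)=\Phi_p(\param')$ on inactive paths (so they vanish from the right-hand side) while $\Phi_p(\param)-\Phi_p(\param')\ge0$ on active ones. Then each inequality used above ($|a_p|\le1$, $|M_p|\le\|x\|_\infty$, and the triangle inequality) becomes an equality, so $R_\param(x)-R_{\param'}(x)=\sum_p(\Phi_p(\param)-\Phi_p(\param'))=\|\Phi(\param)-\Phi(\param')\|_1$, matching the bound and showing it is attained.
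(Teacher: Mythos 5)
Your proof of the inequality itself follows essentially the same route as the paper's: the frozen-activation linearization via \eqref{eq:key-linearization-a}, the coordinate-wise geometric interpolation $\theta_i(t)=\sgn(\theta_i)\,|\theta_i|^{1-t}|\theta'_i|^{t}$, reduction to nonzero coordinates by a continuity/density argument, finiteness of breakpoints via analyticity, and exact telescoping from the monotonicity of $t\mapsto\Phi_p(\theta(t))$ --- these are precisely Steps 1--5 of the paper's proof, with your total-variation integral replacing the paper's sum of frozen-activation bounds over intervals. Two points there need tightening but are fixable: (i) your breakpoint argument is circular as stated ("within any fixed activation region'' presupposes the very regions whose finiteness you are proving); the paper resolves this by induction along a topological ordering of the neurons (\Cref{lem:FiniteNumberOfBreakpoints}), establishing neuron by neuron that outputs are piecewise analytic with finitely many pieces; (ii) a pre-activation \emph{can} be identically zero in $t$, a case you exclude by assertion --- it must be treated separately (the activation then simply never switches, so no breakpoints arise).

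The genuine gap is in the sharpness claim. Your primary construction ($\theta'=\lambda\theta$, $x=\mathbf 1$) fails as soon as the architecture contains a max-pooling neuron: such a neuron selects exactly one incoming edge, so some paths are inactive; uniform scaling changes $\Phi_p$ on those inactive paths too, so they contribute to $\|\Phi(\theta)-\Phi(\theta')\|_1$ but not to $R_\theta(x)-R_{\theta'}(x)$, giving \emph{strict} inequality. Your patch --- "perturb only along the commonly-active paths'' --- is the right instinct but is not a construction: what you actually need is a parameter \emph{coordinate} all of whose paths are active (perturbing a weight that lies on both an active and an inactive path re-creates the same problem, since one weight generally sits on many paths), together with a guarantee that the perturbation does not flip any max-pool selection. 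Neither is established, and for an arbitrary DAG the existence of such an edge weight is not obvious. The paper sidesteps all of this with a far more economical construction: put weight $a$ (resp.\ $b$) on a single input--output path, set \emph{every other coordinate to zero}, and feed an input supported on that path's input neuron; then only one coordinate of $\Phi$ differs and both sides equal $|a^{d}-b^{d}|\,x_{v_0}$. If you want to salvage your approach, note that the output neuron's bias $b_o$ lies on exactly one path (the length-zero path $p=o$), which is \emph{always} active; taking $\theta$ positive, $\theta'$ equal to $\theta$ except for a decreased $b_o$, and $x=\mathbf 1$ gives equality for any architecture.
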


\Cref{thm:LipsParamApp} is intentionally stated with scalar output to avoid imposing a specific norm on the outputs; readers can naturally extend it to the vector-valued setting using the norm most relevant to their application. 
As an example, we derive the next corollary for $\ell^q$-norms on the outputs, which corresponds to the \Cref{thm:LipsParam} given in the text body (except for the equality case, which is also an easy consequence of the equality case of \Cref{eq:LipsParamApp}).
\begin{corollary}\label{cor:LipsParamLq}
     Consider an exponent $q\in[1,\infty)$ and a ReLU neural network as in \Cref{def:NN}. Consider associated parameters $\param,\param'$. If for every coordinate $i$, it holds $\param_i\param'_i\geq 0$, then for every input $x\in\R^{\din}$:
     \[
     \|R_\param(x)-R_{\param'}(x)\|_q \leq \max(\|x\|_{\infty},1) \|\Phi(\param)-\Phi(\param')\|_1.
     \]
\end{corollary}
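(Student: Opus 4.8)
The plan is to prove the scalar case~\eqref{eq:LipsParamApp} first and then lift it to the $\ell^q$-vector-valued corollary, since the corollary is an immediate consequence. For the scalar bound I would follow the trajectory strategy sketched in \Cref{sec_sketch_proof}. Fix an input $x$ and construct an analytic path $t\in[0,1]\mapsto\param(t)$ joining $\param$ to $\param'$. The natural choice respecting the geometry induced by $\Phi$ is a \emph{monotone} path in the lifting: ideally one wants each coordinate $\Phi_p(\param(t))$ to move monotonically from $\Phi_p(\param)$ to $\Phi_p(\param')$, so that the length $\sum_k\|\Phi(\param(t_k))-\Phi(\param(t_{k+1}))\|_1$ of the broken line collapses (by telescoping of absolute values, exactly as in the proof of \Cref{lem:EqualityPathMetrics}) to $\|\Phi(\param)-\Phi(\param')\|_1$ with \emph{factor one}. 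The same-sign hypothesis $\param_i\param'_i\geq0$ is what makes such a monotone interpolation possible without any coordinate of $\Phi$ having to change sign (and then come back), since each monomial $\Phi_p$ keeps a fixed sign along the path.

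The key steps, in order, would be: (i) define $\param(t)$ coordinatewise (a component-wise linear interpolation $\param(t)=(1-t)\param+t\param'$ is the first candidate, and the same-sign assumption guarantees $|\param_i(t)|$ is monotone, hence $|\Phi_p(\param(t))|$ is monotone as a product of monotone nonnegative factors); (ii) show the activation vector $a(\param(t),x)$ is piecewise constant with \emph{finitely many} breakpoints—this rests on the fact, quoted in the sketch, that each pre-activation is a piecewise-polynomial (in fact polynomial, of degree bounded by the depth) function of $t$ along an analytic path, so it has finitely many zeros unless identically zero; (iii) on each interval of constancy use the inner-product identity~\eqref{eq:key-linearization-a}, $R_{\param(t)}(x)=\langle\Phi(\param(t))\odot a(\param(t),x),A(x;1)\rangle$, with $a$ and $A$ \emph{frozen}, to bound $|R_{\param(t_k)}(x)-R_{\param(t_{k+1})}(x)|$ by $\|A^\top(a\odot(\Phi(\param(t_k))-\Phi(\param(t_{k+1}))))\|$ paired against $(x;1)$, which separates into an $\|x\|_\infty$-weighted input part $\PhiEdge$ and an unweighted bias part $\PhiBias$; (iv) sum over the intervals and invoke the monotonicity from (i) to telescope the sum of increments into the single difference $\|\PhiEdge(\param)-\PhiEdge(\param')\|_1$ (resp.\ bias part), yielding~\eqref{eq:LipsParamApp}. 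Finally, the sharpness claim is handled by exhibiting, for any architecture, nonnegative $\param\neq\param'$ and nonnegative $x$ making every inequality tight—nonnegativity removes all absolute-value slack and forces the Hölder/triangle steps to be equalities.

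To deduce \Cref{cor:LipsParamLq} from~\eqref{eq:LipsParamApp}, I would apply the scalar bound \emph{output-coordinate by output-coordinate}. For each output neuron $o$, restricting $\Phi$ to paths ending at $o$ gives $|R_\param(x)_o-R_{\param'}(x)_o|\leq\|x\|_\infty\|\PhiEdge_o(\param)-\PhiEdge_o(\param')\|_1+\|\PhiBias_o(\param)-\PhiBias_o(\param')\|_1\leq\max(\|x\|_\infty,1)\,\|\Phi_o(\param)-\Phi_o(\param')\|_1$, where $\Phi_o$ collects the coordinates of $\Phi$ indexed by paths ending at $o$. Since the path index sets for distinct output neurons are disjoint, $\sum_o\|\Phi_o(\param)-\Phi_o(\param')\|_1=\|\Phi(\param)-\Phi(\param')\|_1$. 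Hence, writing $c:=\max(\|x\|_\infty,1)$ and $m_o:=\|\Phi_o(\param)-\Phi_o(\param')\|_1$, we have $|R_\param(x)_o-R_{\param'}(x)_o|\leq c\,m_o$ for every $o$, so
\begin{equation*}
\|R_\param(x)-R_{\param'}(x)\|_q=\Big(\sum_o |R_\param(x)_o-R_{\param'}(x)_o|^q\Big)^{1/q}\leq c\Big(\sum_o m_o^q\Big)^{1/q}=c\,\|(m_o)_o\|_q\leq c\sum_o m_o=c\,\|\Phi(\param)-\Phi(\param')\|_1,
\end{equation*}
using $\|\cdot\|_q\leq\|\cdot\|_1$ for $q\geq1$. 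This is exactly the claimed bound.

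The main obstacle I anticipate is step (ii), namely rigorously guaranteeing \emph{finitely many} activation breakpoints and, simultaneously, that the naive linear interpolation is genuinely ``admissible'' in the monotonicity sense of step (i) across those breakpoints. The subtlety flagged in the sketch ($t\mapsto t^{n+2}\sin(1/t)$) is that smoothness alone does not preclude infinitely many sign changes, so one must lean on \emph{analyticity/polynomiality}: along a polynomial path each neuron's pre-activation is a polynomial in $t$, which is either identically zero or has finitely many roots. Care is needed at max-pooling neurons, where the active antecedent is selected by a lexicographic tie-break and the ``which-input-is-largest'' partition of $[0,1]$ must also be shown to have finitely many pieces (again a finite union of polynomial-root sets). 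Reconciling the frozen-activation estimate of step (iii) with the global telescoping of step (iv)—ensuring the broken-line length in $\Phi$-space does not exceed the straight-line distance, i.e.\ the factor is truly one and not merely bounded—is where the monotonicity argument must be made airtight; this is the crux that the same-sign assumption is designed to unlock.
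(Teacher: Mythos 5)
Your second half---the reduction of \Cref{cor:LipsParamLq} to the scalar bound of \Cref{thm:LipsParamApp}---is correct and coincides with the paper's own proof of the corollary: apply the scalar theorem to each sub-network $G^{\to v}$, $v\in\NeuronsOut$, use that paths ending at distinct output neurons form disjoint blocks of $\Phi$ so that the blockwise $\ell^1$-norms sum to $\|\Phi(\param)-\Phi(\param')\|_1$, and conclude with $\|\cdot\|_q\leq\|\cdot\|_1$. No issue there.

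The genuine gap is in step (i) of your scalar argument, and you yourself flagged it as the crux without resolving it correctly. The claim that along the linear interpolation $\param(t)=(1-t)\param+t\param'$ each $|\Phi_p(\param(t))|$ is monotone ``as a product of monotone nonnegative factors'' is false: a product of nonnegative monotone functions is monotone only when all factors move in the same direction, and the same-sign hypothesis does not enforce that. Concretely, take one ReLU hidden neuron with two inputs and one output, no biases, weights $(u_1,u_2,v)=(1,2,2)$ for $\param$ and $(2,1,1)$ for $\param'$ (all positive), and input $x=(1,-1)$. Along your linear path the pre-activation of the hidden neuron is $2t-1$, so there is a breakpoint at $t^*=\tfrac12$, and the two path coordinates are $\Phi_{p_1}(\param(t))=(1+t)(2-t)$ (non-monotone) and $\Phi_{p_2}(\param(t))=(2-t)^2$. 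Your summed bound over the two activation intervals is $\|\Phi(\param(0))-\Phi(\param(\tfrac12))\|_1+\|\Phi(\param(\tfrac12))-\Phi(\param(1))\|_1=2+1.5=3.5$, whereas $\|\Phi(\param)-\Phi(\param')\|_1=3$: the broken line in $\Phi$-space is strictly longer than the claimed right-hand side, so the factor-one telescoping of your step (iv) fails and your argument only yields a strictly weaker inequality than the one stated. This is exactly why the paper does not interpolate linearly but uses the geometric trajectory \eqref{eq:trajectory}, $\param_i(t)=\sgn(\param_i)\,|\param_i|^{1-t}|\param'_i|^{t}$ (after first reducing to nonzero coordinates by continuity, Step~1): along it every coordinate satisfies $\Phi_p(\param(t))=\sgn(\Phi_p(\param))\,|\Phi_p(\param)|^{1-t}|\Phi_p(\param')|^{t}$, which \emph{is} monotone in $t$, so the telescoping is exact no matter where the (finitely many, by analyticity) breakpoints fall; in the example above it gives $2+1=3$ as required. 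The same-sign hypothesis is what makes this geometric path well-defined and continuous from $\param$ to $\param'$; it does not rescue the linear one. The remainder of your outline (frozen-activation estimate via \eqref{eq:key-linearization-a}, finiteness of breakpoints via analyticity, max-pool tie-breaking) matches the paper's Steps 2, 3 and 5 and is sound.
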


\begin{proof}[Proof of \Cref{cor:LipsParamLq}]
    By definition of the model, it holds:
    \[
    \|R_\param(x)-R_{\param'}(x)\|_q^q = \sum_{v\in\NeuronsOut} |v(\param,x) - v(\param',x)|^q.
    \]
    Recall that $\Phito{v}$ is the path-lifting associated with the sub-graph $G^{\to v}$ (\Cref{def:Phi}). By \Cref{thm:LipsParamApp}, it holds:
    \[
    |v(\param,x) - v(\param',x)|^q \leq \max(\|x\|^q_\infty,1) \|\Phito{v}(\param) - \Phito{v}(\param')\|_1^q.
    \]
    Since $\Phi(\param)=(\Phito{v}(\param))_{v\in\NeuronsOut}$, this implies:
    \begin{align*}
        \|R_\param(x)-R_{\param'}(x)\|_q^q\leq \max(\|x\|^q_\infty,1) \|\Phi(\param) - \Phi(\param')\|_1^q. \tag*{\qedhere}
    \end{align*}
\end{proof}

\begin{figure}[ht]
    \centering
    \begin{tikzpicture}[scale=1.5, every node/.style={minimum size=10pt, inner sep=0pt}]
    \node[fill=gray, circle, draw] (neuron1) at (0,0) {};
    \node[fill=gray, circle, draw] (neuron2) at (1,0) {};
    \node[fill=gray, circle, draw] (neuron3) at (2,0) {};
    \draw[->] (neuron1) -- (neuron2) node[midway, above] {$1$};
    \draw[->] (neuron2) -- (neuron3) node[midway, above] {$1$};

    \node[fill=gray, circle, draw] (neuronA) at (5,0) {};
    \node[fill=gray, circle, draw] (neuronB) at (6,0) {};
    \node[fill=gray, circle, draw] (neuronC) at (7,0) {};
    \draw[->] (neuronA) -- (neuronB) node[midway, above] {$-1$};
    \draw[->] (neuronB) -- (neuronC) node[midway, above] {$-1$};
\end{tikzpicture}
    \caption{Counter-example showing that the conclusion of  \Cref{thm:LipsParam} does not hold when the parameters have opposite signs. If the hidden neurons are ReLU neurons, the left network implements $R_{\param}(x) = \relu(x)$ (with $\param=(1 \quad 1)^T$) and the right network implements $R_{\param'}(x) = -\relu(-x)$ (with $\param'=(-1 \quad -1)^T$). \Cref{eq:LipsParam} does not hold since there is a single path and the product of the weights along this path is equal to one in both cases, so that $\Phi(\param)=\Phi(\param')=1$ (cf \Cref{sec:Model}) while these two functions are nonzero and have disjoint supports.}
    \label{fig:counterexample3.1}
\end{figure}

\begin{proof}[Proof of \Cref{thm:LipsParamApp}]
A geometric illustration of the spirit of the proof is given in \Cref{fig:Trajectories}, as detailed in the figure caption.

\vspace{.3em}
\textbf{Step 1 – Reduction to non-zero coordinates.}
Since both sides of \eqref{eq:LipsParamApp} are continuous in
$(\theta,\theta')$, without loss of generality it is enough to prove it for weight vectors $\theta,\theta'$ with no
zero entries. 

\vspace{.3em}
\textbf{Step 2 – Proof for parameters leading to the same activations $\boldsymbol{a(\theta,x)=a(\theta',x)}$.}
If the two parameters activate exactly the same paths on~$x$, then
\eqref{eq:key-linearization-a} yields
\[
\lvert R_\theta(x)-R_{\theta'}(x)\rvert
   =\bigl\lvert\langle 
          (\Phi(\theta)-\Phi(\theta'))\odot a(\theta,x),\; A\begin{pmatrix}
              x\\
              1
          \end{pmatrix}\rangle\bigr\rvert
   \leq \|(\Phi(\theta)-\Phi(\theta'))\odot a(\theta,x)\|_1\,
        \|A\begin{pmatrix}
              x\\
              1
          \end{pmatrix}\|_\infty.
\]
Because $A$  is binary with
at most one “1’’ per row, 
$\|A\begin{pmatrix}
              x\\
              1
          \end{pmatrix}\|_1
 \leq\|\begin{pmatrix}
              x\\
              1
          \end{pmatrix}\|_1$. 
Moreover, $a(\theta,x)$ is a binary vector so 
$\|(\Phi(\theta)-\Phi(\theta'))\odot a(\theta,x)\|_1 \leq \|\Phi(\theta)-\Phi(\theta')\|_1$. 
This gives the bound of \Cref{thm:LipsParam} 
in the simple case where $a(\theta,x)=a(\theta',x)$:
\[
\lvert R_\param(x)-R_{\param'}(x)\rvert \leq \max(\|x\|_{\infty},1) \|\Phi(\param)-\Phi(\param')\|_1.
\]

To prove the slightly stronger bound appearing in \Cref{thm:LipsParamApp}, 
first split the paths 
depending on whether they start at an input neuron or at a hidden neuron:
\[
   \bigl\lvert\langle 
          (\Phi(\theta)-\Phi(\theta'))\odot a(\theta,x),\; A\begin{pmatrix}
              x\\
              1
          \end{pmatrix}\rangle\bigr\rvert
\leq
\bigl\lvert\langle 
          (\PhiEdge(\theta)-\PhiEdge(\theta'))\odot a^I(\theta,x),\; A^I x\rangle\bigr\rvert
+ \bigl\lvert\langle 
          (\PhiBias(\theta)-\PhiBias(\theta'))\odot a^H(\theta,x),\; A^H \rangle\bigr\rvert
\]
and then apply the same argument on each part to get:
\[
\lvert R_\param(x)-R_{\param'}(x)\rvert \leq \|x\|_{\infty} \|\PhiEdge(\param)-\PhiEdge(\param')\|_1 + \|\PhiBias(\param)-\PhiBias(\param')\|_1.
\]
\vspace{.3em}
\textbf{Step 3 – A bound for a trajectory with finitely many break-points.}
Let $t\mapsto\theta(t)$ be any continuous curve from $\theta$ to
$\theta'$ such that \emph{the activation vector $a(\theta(t),x)$ is constant on finitely many
intervals $(t_k,t_{k+1})$} 
with $t_k<t_{k+1}$ and 
$[0,1]=\cup_{k=0}^m [t_k,t_{k+1}]$.
Applying Step 2 on 
every interval\footnote{
Formally, apply Step 2 to a pair
$\theta(t)$, $\theta(t')$, 
with $t,t'$ in the \emph{open} interval $(t_k,t_{k+1})$, let $t\to t_k$ and $t'\to t_{k+1}$, and conclude by continuity of both sides.
}, 
and summing gives
\begin{equation}
    \lvert R_\theta(x)-R_{\theta'}(x)\rvert
\;\le\;
\|x\|_\infty
\sum_{k}\|\PhiEdge(\theta(t_{k+1}))-\PhiEdge(\theta(t_k))\|_1
\;+\;
\sum_{k}\|\PhiBias(\theta(t_{k+1}))-\PhiBias(\theta(t_k))\|_1.
\tag{A.2}
\label{eq:traj-finite-bp}
\end{equation}

\vspace{.3em}
\textbf{Step 4 – Construction of a monotone path in log-space.}
For each coordinate index $i$ of the vector $\theta$ define
\begin{equation}
\label{eq:trajectory}
\theta_i(t)\;=\;
\operatorname{sgn}(\theta_i)\,
|\theta_i|^{1-t}\,|\theta'_i|^{t},\qquad t\in[0,1].
\tag{A.3}
\end{equation}
This trajectory $t\to\theta(t)$ is well-defined since 
by Step~1 we assumed without loss of generality that the coordinates of $\theta$ and $\theta'$ are non-zero. Moreover, since $\sgn(\theta)=\sgn(\theta')$, this trajectory goes from $\theta$ to $\theta'$ and we can use \eqref{eq:traj-finite-bp} provided that this path has only finitely many break-points.

For every path $p$, the scalar function
$t \mapsto \Phi_p(\theta(t))=|\Phi_p(\theta)|^{1-t}|\Phi_p(\theta')|^{t}$
is monotone, so developing the $\ell^1$-norms 
in \eqref{eq:traj-finite-bp} yields sums that telescope exactly:
\[
\sum_{k}\|\PhiEdge(\theta_{k+1})-\PhiEdge(\theta_k)\|_1
      =\|\PhiEdge(\theta)-\PhiEdge(\theta')\|_1,
\quad
\text{and similarly for }\PhiBias.
\]
Thus \Cref{thm:LipsParamApp} follows from \eqref{eq:traj-finite-bp} \emph{provided}
the path has only finitely many break-points.

\vspace{.3em}
\textbf{Step 5 – Proving the existence of finitely many break-points (technical).} 
Each coordinate in \eqref{eq:trajectory} is an analytic function of
$t$, and the activation of a ReLU or max-pool neuron evaluated on an analytic input can only change at isolated roots. 
On the compact interval $[0,1]$ there can be only
finitely many such roots.  \Cref{lem:FiniteNumberOfBreakpoints} formalizes this argument and completes
the proof of \eqref{eq:LipsParamApp}.

To prove \Cref{thm:LipsParamApp}, it remains to prove the claim about the equality case: we must find $\theta\neq \theta'$ and an input $x$ such that the inequality is actually an equality.

\textbf{Sharpness of the bound (equality cases) in \Cref{thm:LipsParamApp}} Consider an input neuron $v_0$ and a path $p=v_0\to v_1\to \dots \to v_d$. Define two parameter vectors that differ only on that path:
\[
\theta_{v_\ell\!\to v_{\ell+1}} = a>0,
\qquad
\theta'_{v_\ell\!\to v_{\ell+1}} = b>0,
\quad
\ell = 0,\dots,d-1,
\]
and set every other coordinate of $\theta,\theta'$ to~$0$.

Choose the input $x$ with $x_{v_0}>0$ and all other coordinates equal to $0$.  Because the signal propagates solely along $p$,
\[
R_\theta(x)=a^{d}\,x_{v_0},
\qquad
R_{\theta'}(x)=b^{d}\,x_{v_0}.
\]

For the path–lifting, only the coordinate $\Phi_p$ changes, hence
\[
\|\PhiEdge(\theta)-\PhiEdge(\theta')\|_1 = |a^{d}-b^{d}|,
\qquad
\|\PhiBias(\theta)-\PhiBias(\theta')\|_1 = 0.
\]

Since $\|x\|_\infty = x_{v_0}$, inequality~\eqref{eq:LipsParamApp}
is an equality:
\[
|a^{d}-b^{d}|\,x_{v_0}
\;=\;
\|x\|_\infty\,
\|\PhiEdge(\theta)-\PhiEdge(\theta')\|_1
\;+\;\|\PhiBias(\theta)-\PhiBias(\theta')\|_1,
\]
Thus the bound of
Theorem~\ref{thm:LipsParamApp} cannot be improved in general.
\end{proof}

\begin{lemma}\label{lem:FiniteNumberOfBreakpoints}
Fix $n\!\in\!\mathbb N$ inputs $x_1,\dots,x_n\in\mathbb R^{\din}$ and two
parameter vectors $\theta,\theta'$ with no zero coordinates.
Let $\theta(t)$ be the geometric trajectory
\eqref{eq:trajectory}.  There are finitely many points
$0=t_0<t_1<\dots<t_m=1$ such that for every input $x_i$ the path-activation vector $a\bigl(\theta(t),x_i\bigr)$ 
is constant on each
open interval $(t_k,t_{k+1})$.
\end{lemma}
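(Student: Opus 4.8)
The plan is to exploit the real-analyticity of the geometric trajectory. First I would rewrite each coordinate of \eqref{eq:trajectory} as $\theta_i(t)=\sgn(\theta_i)\exp\bigl((1-t)\ln|\theta_i|+t\ln|\theta'_i|\bigr)$, which is real-analytic on all of $\R$ (this uses the hypothesis that $\theta_i,\theta'_i\neq 0$). The single analytic fact I would lean on throughout is elementary: a real-analytic function on the compact interval $[0,1]$ that is not identically zero has only finitely many zeros, because its zeros are isolated and $[0,1]$ is compact.

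The core of the argument is an induction on the neurons of the DAG in topological order, maintaining the invariant that there is a finite partition $0=s_0<\dots<s_r=1$ of $[0,1]$ such that, on each open subinterval and for each input $x_i$, (a) the neuron- and edge-activations $a_v(\theta(t),x_i)$ and $a_{u\to v}(\theta(t),x_i)$ are constant, and (b) the neuron output $v(\theta(t),x_i)$ is a real-analytic function of $t$. For input neurons the output is the constant $(x_i)_v$, so the invariant holds trivially. For the inductive step at a neuron $v$, I would take the common refinement of the finitely many partitions already produced for the antecedents $u\in\ant(v)$; on each resulting subinterval every $u(\theta(t),x_i)$ is analytic, hence so is the pre-activation $b_v(t)+\sum_{u\in\ant(v)}u(\theta(t),x_i)\,\theta_{u\to v}(t)$, as a finite sum of products of analytic functions.

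It then remains to subdivide according to the activation type of $v$. If $v$ is the identity, its activation is constantly $1$ and its output equals the analytic pre-activation, so no refinement is needed. If $v$ is $\relu$, its analytic pre-activation is either identically zero or vanishes at finitely many points; subdividing there produces open subintervals of constant sign, on which $a_v$ is constant and the output equals either the pre-activation or $0$, hence is analytic. The delicate case is max-pooling: the relevant quantities are the pooled values $z_u(t):=b_v+u(\theta(t),x_i)\,\theta_{u\to v}(t)$, and the selected antecedent can change only where their ordering changes, i.e.\ at a zero of some pairwise difference $z_u(t)-z_{u'}(t)$. Each such difference is analytic, hence contributes finitely many break-points unless it is identically zero; pairs that are identically equal remain tied throughout and are resolved consistently by the fixed lexicographic rule. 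Subdividing at these finitely many crossings makes every edge-activation $a_{u\to v}$ constant and the output $z_{u^\star}(t)$ analytic on each open subinterval.

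Finally, I would take the common refinement of the partitions obtained over all neurons and all inputs $x_1,\dots,x_n$, which is still a finite partition $0=t_0<\dots<t_m=1$. On each open interval $(t_k,t_{k+1})$ every neuron- and edge-activation is constant for all $x_i$, and since each path-activation $a_p=a_{v_0}\prod_{\ell}a_{v_{\ell-1}\to v_\ell}$ is a product of finitely many of these, the full vector $a(\theta(t),x_i)$ is constant there, proving the lemma. I expect the max-pool ordering step to be the main obstacle: one must guarantee that the compared scalar functions are genuinely \emph{analytic}—not merely smooth or piecewise—so that the isolated-zeros property applies. This is exactly why analyticity of the outputs must be propagated through the network \emph{jointly} with the partition refinement in the induction, rather than invoked after the fact.
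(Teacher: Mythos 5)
Your proposal is correct and follows essentially the same route as the paper's proof: a topological induction over the DAG that propagates real-analyticity of neuron outputs along the geometric trajectory, using the isolated-zeros property of non-identically-zero analytic functions on $[0,1]$ to handle ReLU sign changes and max-pool order swaps, then taking a common refinement over neurons and inputs. The only (harmless) differences are that you merge the multiple inputs at the end rather than reducing to a single input at the start, and you are slightly more explicit than the paper about identically-tied max-pool coordinates being resolved by the lexicographic rule.
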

\begin{proof}[Proof of \Cref{lem:FiniteNumberOfBreakpoints}]
\textbf{Step 1 – reduce to a single input.}
If a finite breakpoint set works for each $x_i$ individually, their union
works for all inputs.  We therefore fix one arbitrary input~$x$.

\medskip
\textbf{Step 2 – property to prove for each neuron.}
For a neuron $v$ define  
\[
\mathbf P(v):\;
\begin{cases}
\text{there exist finitely many breakpoints } 0=t_0<t_1<\dots<t_m=1
\text{ such that}\\
\qquad t\mapsto v\bigl(\theta(t),x\bigr)\text{ is analytic on every }
[t_k,t_{k+1}],\\
\qquad t\mapsto a_v\bigl(\theta(t),x\bigr)\text{ and }
t\mapsto a_{u\to v}\bigl(\theta(t),x\bigr)
\ \forall u\in\ant(v)\\
\qquad\text{are constant on }(t_k,t_{k+1}).
\end{cases}
\]

If $\mathbf P(v)$ holds for every neuron $v$, the union of their breakpoints
gives finitely many intervals on which \emph{all} edge and path activations
are frozen, completing the lemma.

\medskip
\textbf{Step 3 – prove $\mathbf P(v)$ by topological induction.}

We perform induction on a topological sorting \citep[Section 22.4]{Cormen} of the underlying DAG. We start with input neurons $v$ since by \Cref{def:NN}, these are the ones without antecedents so they are the first to appear in a topological sorting.

\textbf{Initialization: Input neurons.}  
$v(\theta,x)=x_v$ does not depend on $\theta$, hence $a_v(\cdot,x)\equiv1$;
$\mathbf P(v)$ holds with $m=1$.

\textbf{Induction:} 
Now consider a neuron $v\notin\NeuronsIn$ and assume $\mathbf P(u)$ to hold for every neuron $u$ 
coming before $v$ in the topoligical sorting. There are finitely many breakpoints $0=t_0<t_1<\dots<t_m=1$ such that for every $u\in\ant(v)$ and every $k$, the map $t\in[t_k,t_{k+1}]\mapsto u(\param(t), x)$ is analytic. We distinguish three cases depending on the activation function of the neuron $v$.

\emph{(i) Identity neuron.}  
$v(\theta(t),x)=b_v+\sum_{u\in\ant(v)}u(\theta(t),x)\theta_{u\to v}(t)$ is
analytic on the same intervals $[t_k,t_{k+1}]$ 
because each factor is analytic; $a_v\equiv1$.  Thus
$\mathbf P(v)$ inherits the finite breakpoint set of its antecedents.

\emph{(ii) ReLU neuron.}  
The pre-activation
$\pre_v(t):=b_v+\sum_{u\in\ant(v)}u(\theta(t),x)\theta_{u\to v}(t)$ is
analytic on each $[t_k,t_{k+1}]$ by induction.  
Either $\pre_v$ is identically zero, in which case
$a_v\equiv0$, or its zero set is finite (as an analytic function on a compact domain), so the sign of $\pre_v$ (and
therefore $a_v$ and each $a_{u\to v}$) is constant between consecutive
zeros.  Hence $\mathbf P(v)$ holds.

\emph{(iii) $K$-max-pool neuron.}  
The output of $v$ is the $K$-th largest component of 
$\pre_v:=\bigl(u(\theta(t),x)\theta_{u\to v}(t)\bigr)_{u\in\ant(v)}$. 
Each coordinate of $\pre_v$ 
 is analytic 
on each $[t_k,t_{k+1}]$ by induction. 
Two coordinates can swap order only at isolated $t$ where their analytic difference becomes
zero, so the ranking—and thus the selected $K$-th value—changes only
finitely many times.  Thus $\mathbf P(v)$ holds.

By topological induction $\mathbf P(v)$ is true for every neuron.  The
argument in Step 2 then gives the desired global breakpoint set.
\end{proof}

\section{Proof of \Cref{lem:PropertiesPathMag}}\label{app:PropPathMag}
Rescaling-invariance is 
a direct consequence of the known properties of the path-lifting $\Phi$ 
\cite{Gonon23PathNorm}.

In the case of a singleton $I = \{i\}$, as already evoked, 
~\eqref{eq:Garantie1} simply follows from~\eqref{eq:LipsParam} and the definition of $\PhiCost$. When $|I| \geq 2$, consider any enumeration  $i_{j}$, $1 \leq j \leq |I|$ of elements in $I$, and $s_{j} := \mathbf{1}_{G}-\sum_{\ell=1}^{j} e_{i_\ell} = \mathbf{1}_{G}-\mathbf{1}_{\cup_{\ell=1}^{j} \{i_{\ell}\}}$ (as well as $s_{0} := \mathbf{1}_{G}$): 
since the pair $(\theta,s \odot \theta)$ --as well as the pairs $(s_{j-1} \odot \theta,s_j \odot \theta)$--  satisfies the assumptions of \Cref{lem:EqualityPathMetrics}, and $s_{j} \odot s_{j-1} = s_{j}$ 
we have 
\begin{align}
\|\Phi(\theta)-\Phi(s \odot \theta)\|_{1} 
& \underset{\eqref{eq:EqualityPathMetrics}}{=}
\|\Phi(\param)\|_1 - \|\Phi(s\odot \param)\|_1\notag\\
&=
\sum_{j=1}^{|I|}
\|\Phi(s_{j-1}\odot \param)\|_1 - \|\Phi(s_j \odot \param)\|_1\notag
\\
&\underset{\eqref{eq:EqualityPathMetrics}}{=} \sum_{j=1}^{|I|} 
\|\Phi(s_{j-1} \odot \theta)-\Phi(s_{j}\odot (s_{j-1} \odot \theta))\|_{1}
\notag\\
&\underset{\eqref{eq:PhiCost}}{=}
\sum_{j=1}^{|I|} \PhiCost(s_{j-1} \odot \param,i_{j})\notag\\ 
& \underset{\eqref{eq:PhiCostAlt}}{\leq}
\sum_{j=1}^{|I|} \PhiCost(\param,i_j).\notag 
\end{align}

Finally, to establish~\eqref{eq:Garantie3}, observe that for each path $p$ we have
$
|\Phi_p(\theta)| = \prod_{j\in p} |\theta_j|
$
so, for each $i\in p$ (NB: $i$ can index either an edge in the path or the first neuron of $p$ when $p$ starts
 from a hidden or output neuron, in which case $\theta_{i}$ is the associated bias) it holds
\[
\frac{\partial}{\partial \theta_i} |\Phi_p(\theta)| = \sgn(\theta_i) \prod_{j\in p, j\neq i} |\theta_j|.
\]
Because $\sgn(\theta_i)\theta_i = |\theta_i|$ we get that, when $i \in p$,
\[
\theta_i \cdot \frac{\partial}{\partial \theta_i} |\Phi_p(\theta)| = |\Phi_p(\theta)|.
\]
Summing over all paths for a given index $i$ shows that
\begin{align*}
    (\theta \odot \nabla_\theta \|\Phi(\theta)\|_1)_i  
    & =  \theta_i \frac{\partial}{\partial \theta_i} \sum_{p\in\paths}|\Phi_p(\theta)| \\
    & =  \theta_i \sum_{p\in\paths: i\in p}\frac{\partial}{\partial \theta_i} |\Phi_p(\theta)| \\
    & = \sum_{p\in\paths: i\in p} |\Phi_p(\theta)| \\
    & \underset{\eqref{eq:PhiCostAlt}}{=} \PhiCost(\theta,i). \qedhere
\end{align*}

\section{Proof-of-concept: accuracy of path-magnitude pruning}\label{app:expes}
To provide a proof-of-concept of the utility of the main Lipschitz bound in \Cref{thm:LipsParam} for pruning, we implement the following ``prune and finetune'' procedure:
\begin{enumerate}
    \item \textbf{train}: we train a dense network,
    \item \textbf{rescale (optional)}: we apply a random rescale to the trained weights (this includes biases),
    \item \textbf{prune}: we prune the resulting network,
    \item \textbf{rewind}: we rewind the weights to their value after a few initial epochs (standard in the lottery ticket literature to enhance performance \citep{Frankle20EarlyPhaseTraining}),
    \item \textbf{finetune}: we retrain the pruned network, with the pruned weights frozen to zero and the other ones initialized from their rewinded values.
\end{enumerate}

Doing that to prune $p=40\%$ of the weights at once of a ResNet18 trained on ImageNet-1k, we observe that (\Cref{fig:TrainingCurves}):
\begin{itemize}
    \item \textbf{without random rescale} (plain lines), the test accuracy obtained at the end is \emph{similar} for both magnitude pruning and path-magnitude pruning;
    \item \textbf{with random rescale} (dotted lines -- the one associated with path-magnitude pruning is invisible as it coincides with the corresponding plain line), magnitude pruning suffers a large drop of top-1 test accuracy, which is not the case of path-magnitude pruning since it makes the process invariant to potential rescaling.
\end{itemize}

We observe similar results when pruning between $p=10\%$ and $p=80\%$ of the weights at once, see \Cref{tab:PruningAccuracy}.

\begin{table*}[ht]
    \centering
    
    \caption{
    Extended version of \Cref{tab:pruning-acc-body}. 
    Top-1 accuracy after pruning, optional rescale, rewind and retrain, as a function of the pruning level. ($\ast$) $=$ results valid with as well as without rescaling, as path-magnitude pruning is invariant to rescaling.
    }
    \label{tab:PruningAccuracy}
    \begin{tabular}{|c||c|c|c|c|c|c|}
    \hline
        Pruning level  & none & 10\% & 20\% & 40\% & 60\% & 80\% \\
        \hline\hline
        Path-Magnitude ($\ast$) & \multirow{3}{*}{67.7\%} & 68.6 & 68.8 & 68.6 & 67.9 & 66.0 \\
        \cline{1-1}\cline{3-7}
        Magnitude w/o Random Rescale &  & 69.0 & 69.0 & 68.8 & 68.2 & 66.5\\
        \cline{1-1}\cline{3-7}
        Magnitude w/ Random Rescale & & 68.8 & 68.7 & 63.1 & 57.5 & 15.8\\
        \hline
    \end{tabular}
\end{table*}

We now give details on each stage of the procedure.

{\bf 1.  Train.} We train a dense ResNet18 \citep{He16ResNets} on ImageNet-1k, using $99\%$ of the 1,281,167 images of the training set for training, the other $1\%$ for validation. We use SGD for $90$ epochs, learning rate $0.1$, weight-decay $0.0001$, batch size $1024$, classical ImageNet data normalization, and a multi-step scheduler where the learning rate is divided by $10$ at epochs $30$, $60$ and $80$. The epoch out of the $90$ ones with maximum validation top-1 accuracy is considered as the final epoch. Doing $90$ epochs took us about $18$ hours on a single A100-40GB GPU.

{\bf 2. Random rescaling.} Consider a pair of consecutive convolutional layers in the same basic block of the ResNet18 architecture, for instance the ones of the first basic block: \texttt{model.layer1[0].conv1} and \texttt{model.layer1[0].conv2} in PyTorch, with \texttt{model} being the ResNet18. Denote by $C$ the number of output channels of the first convolutional layer, which is also the number of input channels of the second one. For each channel $c\in\brint{1,C}$, we choose uniformly at random a rescaling factor $\lambda \in\{1,128,4096\}$ and multiply the output channel $c$ of the first convolutional layer by $\lambda$, and divide the input channel $c$ of the second convolutional layer by $\lambda$. In order to preserve the input-output relationship, we also multiply by $\lambda$ the running mean and the bias of the  batch normalization layer that is in between (\texttt{model.layer1[0].bn1} in the previous example). Here is an illustrative Python code (that should be applied to the correct layer weights as described above):
\begin{lstlisting}[
    language=Python,
    basicstyle=\ttfamily\footnotesize,
    numbers=left,
    numberstyle=\tiny\color{gray},
    stepnumber=1,
    numbersep=5pt,
    backgroundcolor=\color{white},
    showspaces=false,
    showstringspaces=false,
    showtabs=false,
    frame=single,
    rulecolor=\color{black},
    tabsize=4,
    captionpos=b,
    breaklines=true,
    breakatwhitespace=false,
    keywordstyle=\color{blue},
    commentstyle=\color{gray},
    stringstyle=\color{mauve},
]
    factors = np.array([1, 128, 4096])

    out_channels1, _, _, _ = weights_conv1.shape

    for out in range(out_channels1):
                factor = np.random.choice(factors)
                weights_conv1[out, :, :, :] *= factor
                weights_conv2[:, out, :, :] /= factor
                running_mean[out] *= factor
                bias[out] *= factor
\end{lstlisting}

{\bf 3. Pruning.} At the end of the training phase, we globally prune (i.e. set to zero) $p\%$ of the remaining weights in all the convolutional layers plus the final fully connected layer. 

{\bf 4. Rewinding.} We save the mask and rewind the weights to their values after the first $5$ epochs of the dense network, and train for $85$ remaining epochs. This exactly corresponds to the hyperparameters and pruning algorithm of the lottery ticket literature \citep{Frankle21MissingTheMark}.

{\bf 5. Finetune.} This is done in the same conditions as the training phase.

\section{Computational cost: comparing pruning criteria}\label{app:ComputationalCostPruning}
This section details how the results of \Cref{tab:pruning_times} were obtained.

\subsection{Hardware and software}
All experiments were performed on an NVIDIA A100-PCIE-40GB GPU, with CPU Intel(R) Xeon(R) Silver 4215R CPU @ 3.20GHz. 
We used \texttt{PyTorch} (version \textit{2.2}, with \texttt{CUDA} \textit{12.1} and \texttt{cuDNN} \textit{8.9} enabled) to implement model loading, inference, and custom pruning-cost computation. All timings were taken using the \texttt{torch.utils.benchmark} module, synchronizing the GPU to ensure accurate measurement of wall-clock time.

\subsection{Benchmarked code}
\paragraph{Single-forward pass.} We fed a tensor \lstinline{torch.randn(B, 3, 224, 224)} to each model (batch size $B=1$ or $B=128$, $224 \times 224$ RGB image).

\paragraph{Path-magnitude scores.} We followed the recipe given in~\eqref{eq:Garantie3}: $(\PhiCost(\theta,i))_{i} = \theta\odot \nabla_\theta \|\Phi(\theta)\|_1$. To do that, we computed the path-norm $\|\Phi(\theta)\|_1$ using the function \texttt{get\_path\_norm} we released online at \url{github.com/agonon/pathnorm\_toolkit} \cite{Gonon23PathNormCode}. And we simply added one line to auto-differentiate the computations and multiply the result pointwise with the parameters $\theta$. Thus, our code (see \cite{Gonon25PruningCode} and \url{github.com/agonon/pathnorm_toolkit} for updates) has the following structure:
\begin{itemize}
    \item it starts by replacing max-pooling \emph{neurons} by summation \emph{neurons}, or equivalently max-pooling \emph{layers} by convolutional \emph{layers} (following the recipe given in \cite{Gonon23PathNorm} to compute correctly the path-norm),
    \item it replaces each weight by its absolute value,
    \item it does a forward pass to compute the path-norm,
    \item here we added auto-differentiation (backwarding the path-norm computations), and pointwise multiplication with original weights,
    \item and it finally reverts to the original maxpool layers and the weights' value to restore the original network.
\end{itemize}
\Cref{tab:pruning_times} reports the time to do all this.

\paragraph{Magnitude scores.} It takes as input a torch model, and does a simple loop over all model's parameters:
\begin{itemize}
    \item  to check if these are the parameters of a \texttt{torch.nn.Linear} or \texttt{torch.nn.Conv2d} module,
    \item if this is the case, it adds to a list the absolute values of these weights.
\end{itemize}

\paragraph{Loss-sensitivity scores \cite{LeCun89OBD}.} 

In the Optimal Brain Damage (OBD) framework introduced in \cite{LeCun89OBD}, each weight $\theta_i$ in the network is assigned a score approximating the expected increase in loss if $\theta_i$ were pruned (set to zero). The score of $\theta_i$ is defined by:
\[
\OBD(\theta, i) = \frac{1}{2} h_{ii} \theta_i^2,
\]
where $h_{ii}$ is the diagonal entry of the Hessian matrix $H =\nabla^2 \ell$ of the empirical loss
\[
\ell(\param) = \sum_{k=1}^n \ell(R_\theta(x_k), y_k)
\]
with respect to the parameters $\theta$. As we could not locate a proof of the rescaling-invariance of $\OBD$ we give below a short proof, before discussing its numerical computation.

\emph{Rescaling-invariance.} Denote $D = \mathtt{diag}(\lambda_i)$ a diagonal rescaling matrix such that for each $\theta$ the parameters $\theta' := D \theta$ are rescaling-equivalent to $\theta$. This implies that $R_{\theta}(x_k) = R_{D\theta}(x_k)$ for each training sample $x_k$ and every $\theta$, hence $\ell(\theta) = \ell(D\theta)$ for every $\theta$. Simple calculus then yields equality of the Jacobians $\partial \ell(\theta) = \partial \ell(D\theta)D$, i.e., since $D$ is symmetric, taking the transpose 
\[
\nabla\ell(\theta) = D \nabla \ell(D\theta),\quad \forall \theta,
\]
that is to say $\nabla \ell(\cdot) = D \nabla \ell(D \cdot)$. Differentiating once more yields 
\[
H(\theta) = \nabla^2 \ell(\theta) 
= \partial[\nabla \ell](\theta)
= \partial[D\nabla \ell(D\cdot)](\theta)
= D\partial [\nabla\ell(D\cdot)](\theta)
= D\partial [\nabla\ell(\cdot)](D\theta)D
= DH(D\theta)D.
\]
Extracting the $i$-th diagonal entry yields $h_{ii}(\theta) = \lambda_i^2 h_{ii}(D\theta)$ (and more generally $h_{ij}(\theta) = \lambda_i \lambda_j h_{ij}(D\theta)$), hence 
\begin{equation}\label{eq:InvarianceOBD}
    \OBD(D\theta,i) = \frac{1}{2} h_{ii}(D\theta) ((D\theta)_i)^2 = \frac{1}{2} h_{ii}(D\theta) (\lambda_i \theta_i)^2 = \frac{1}{2} [h_{ii}(D\theta) \lambda_i^2] \theta_i^2 = \frac{1}{2} h_{ii}(\theta) \theta_i^2 = \OBD(\theta,i).
    \end{equation}

\emph{Computation.} Computing the full Hessian matrix $H$ exactly would be prohibitive for large networks. Instead, a well-known variant of Hutchinson's trick \cite{Bekas07EstimatorDiagonal} is that its diagonal can be computed as
\[
\diag(H) = \E_{v} \big[(Hv)\odot v\big]
\]
where the expectation is over Rademacher vectors $v$ (i.i.d. uniform $v_i\in\{-1,1\}$) and where $\odot$ denotes pointwise multiplication. In practice, we approximate it as follows:
\begin{itemize}
    \item draw a \emph{single vector} $v$ as above,
    \item compute the Hessian-vector product $Hv$ using the “reverse-over-forward” higher-order autodiff in PyTorch’s \texttt{torch.func} API,
    \item deduce the estimate $\diag(H) \simeq (Hv) \odot v =: u$,
    \item finally estimate $\OBD \simeq \frac{1}{2} u \odot \theta \odot \theta = \frac{1}{2} (Hv) \odot v \odot \theta \odot \theta$.
\end{itemize}
The performance to do all this depends on the size of the batch on which is computed the loss, as the cost of the Hessian-vector product $Hv$ depends on it. \Cref{tab:pruning_times} reports the milliseconds required for this entire procedure on batch sizes of 1 and 128, listing corresponding values as $x-y$.

This approximation is \emph{not} rescaling-invariant in general. Indeed, we have
\begin{multline*}
\left((Hv) \odot v \odot \theta \odot \theta\right)_i = (Hv)_i \cdot v_i \cdot \theta_i^2 = \left(\sum_j h_{ij}(\theta) v_j \right) v_i \theta_i^2 \\\underset{\eqref{eq:InvarianceOBD}}{=} \left(\sum_j \lambda_i \lambda_j h_{ij}(D\theta) v_j\right) v_i \theta_i^2 = \left (\sum_j \lambda_j h_{ij}(D\theta) v_j \right) v_i \lambda_i \theta_i^2
\end{multline*}
which would be the same as the estimate made for $D\theta$ if and only if it were equal to
\[
\left (\sum_j h_{ij}(D\theta) v_j \right ) v_i \lambda_i^2 \theta_i^2.
\]
There is no reason for this to happen (and it did not happen in any of our experiments). For instance, take $v_i=\theta_i=1$ for every $i$, we would need $\sum_j \lambda_j h_{ij}(D\theta) = \lambda_i \sum_j h_{ij}(D\theta)$, which is the same as saying that $\lambda$ is an eigenvector of $H(D\theta)$ with eigenvalue $1$.

\section{Lipschitz property of $\Phi$: proof of \Cref{lemma:UpperBoundMetric}}\label{app:ProofUpperBoundMetric}

We first establish Lipschitz properties of $\param \mapsto \Phi(\param)$. Combined with the main result of this paper, \Cref{thm:LipsParam}, or with \Cref{cor:LipsParamLq}, they establish a Lipschitz property of $\param \mapsto R_\param(x)$ for each $x$, and of the functional map $\param \mapsto R_\param(\cdot)$ in the uniform norm on any bounded domain. This is complementary to the Lipschitz property of $x \mapsto R_\param(x)$ studied elsewhere in the literature, see e.g. \citep{Gonon23PathNorm}.

\begin{lemma}\label{lem:DistParamSpaceNotRescaled}
Consider $q\in[1,\infty)$, parameters $\param$ and $\param'$, and a neuron $v$. Then, it holds:
\begin{equation}
    \begin{aligned}\label[ineq]{eq:GoalA.1}
        & \|\Phito{v}(\param) - \Phito{v}(\param')\|_q^q \\
        & \leq \max_{p \in \pathsto{v}}
        \sum_{\ell=1}^{\length{p}} \left(\prod_{k=\ell+1}^{\length{p}}\|\paramto{p_k}\|_q^q\right) \left(|\bias_{p_\ell} - \bias_{p_\ell}'|^q + \|\paramto{p_\ell} - \parampto{p_\ell}\|_q^q \max_{u\in\ant(p_\ell)} \|\Phito{u}(\param')\|_q^q\right)
    \end{aligned}
\end{equation}
with the convention that an empty sum and product are respectively equal to zero and one.
\end{lemma}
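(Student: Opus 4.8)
The plan is to argue by strong induction over a topological ordering of the DAG, exploiting the \emph{compositional} structure of the restricted path-lifting $\Phito{v}$. The key structural fact is that a path in $\pathsto{v}$ is either the length-zero path $p=v$ (whose coordinate is the bias $\bias_v$) or of the form $p'\to v$ with $p'\in\pathsto{u}$ for a unique $u\in\ant(v)$, in which case $\Phi_{p'\to v}(\param)=\paramfromto{u}{v}\,\Phi_{p'}(\param)$. Consequently $\Phito{v}(\param)$ is the concatenation of the scalar $\bias_v$ with the blocks $\bigl(\paramfromto{u}{v}\,\Phito{u}(\param)\bigr)_{u\in\ant(v)}$, so that $\Phito{v}(\param)-\Phito{v}(\param')$ splits block-wise and the induction can be carried neuron by neuron.

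For the inductive step I would bound each block by writing, with $c=\paramfromto{u}{v}$, $c'=\parampfromto{u}{v}$, $\phi=\Phito{u}(\param)$, $\phi'=\Phito{u}(\param')$, the identity $c\phi-c'\phi'=(c-c')\phi'+c(\phi-\phi')$. Applying Minkowski's inequality to the concatenation over $u\in\ant(v)$ separates a weight-perturbation part from a propagated part, and the elementary $\max$–Hölder bounds $\sum_u|c_u-c'_u|^q\|\phi'_u\|_q^q\le\|\paramto{v}-\parampto{v}\|_q^q\max_u\|\Phito{u}(\param')\|_q^q$ and $\sum_u|c_u|^q\|\phi_u-\phi'_u\|_q^q\le\|\paramto{v}\|_q^q\max_u\|\Phito{u}(\param)-\Phito{u}(\param')\|_q^q$ control them. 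Absorbing the bias coordinate through $(\alpha^q+\beta^q)^{1/q}\le\alpha+\beta$ yields the one-step recursion
\[
\|\Phito{v}(\param)-\Phito{v}(\param')\|_q\;\le\;D_v+\|\paramto{v}\|_q\max_{u\in\ant(v)}\|\Phito{u}(\param)-\Phito{u}(\param')\|_q,
\]
where $D_v:=|\bias_v-\bias_v'|+\|\paramto{v}-\parampto{v}\|_q\max_{u\in\ant(v)}\|\Phito{u}(\param')\|_q$ is the local term attached to $v$.

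Unrolling this recursion finishes the proof: each iteration selects a maximizing antecedent, so tracing the selections backwards from $v$ follows a path $p\in\pathsto{v}$, and the multiplicative factors accumulated in front of $D_{p_\ell}$ are exactly $\prod_{k=\ell+1}^{\length{p}}\|\paramto{p_k}\|_q$; the bound is then the maximum over $p\in\pathsto{v}$ of $\sum_{\ell}\bigl(\prod_{k>\ell}\|\paramto{p_k}\|_q\bigr)D_{p_\ell}$. The base case is an input neuron $v$, for which $\pathsto{v}=\{v\}$, $\Phito{v}\equiv1$ (empty product), and both sides vanish; paths starting at hidden neurons are handled by the very same recursion, the only change being that the starting neuron contributes its bias through $D$.

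The main obstacle is the placement of the exponent $q$. The clean recursion above comes from a single Minkowski step and is therefore a first-order ($\|\cdot\|_q$) inequality, whereas the statement carries the exponent $q$ on each product $\|\paramto{p_k}\|_q^q$, on each maximum $\max_u\|\Phito{u}(\param')\|_q^q$, and on the left-hand norm. The delicate part is thus to control, via $(\gamma+\delta)^q\ge\gamma^q+\delta^q$ and $(\sum_\ell t_\ell)^q\ge\sum_\ell t_\ell^q$ for $q\ge1$, how raising the unrolled $\|\cdot\|_q$-bound to the power $q$ distributes over the telescoping terms, and to match the product index ranges $k=\ell+1,\dots,\length{p}$ to the positions $p_\ell$ along the worst-case path. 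Once this exponent bookkeeping is settled, the remaining verifications—well-definedness of $G^{\to v}$, the recursion for $\|\Phito{v}\|_q$, and the treatment of biases—are routine.
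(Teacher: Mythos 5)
Your structural skeleton is exactly the paper's own proof: topological induction over the DAG, the decomposition of $\Phito{v}$ into the bias coordinate and the blocks $\bigl(\paramfromto{u}{v}\,\Phito{u}\bigr)_{u\in\ant(v)}$, and the splitting $c\phi-c'\phi'=(c-c')\phi'+c(\phi-\phi')$. Your Minkowski recursion
\[
\|\Phito{v}(\param)-\Phito{v}(\param')\|_q\;\le\;D_v+\|\paramto{v}\|_q\max_{u\in\ant(v)}\|\Phito{u}(\param)-\Phito{u}(\param')\|_q,
\qquad
D_v:=|\bias_v-\bias_v'|+\|\paramto{v}-\parampto{v}\|_q\max_{u\in\ant(v)}\|\Phito{u}(\param')\|_q,
\]
and its unrolling along a worst-case path are sound. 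The genuine gap is precisely the step you defer as ``exponent bookkeeping'': it cannot be settled. The inequalities you invoke, $(\gamma+\delta)^q\ge\gamma^q+\delta^q$ and $(\sum_\ell t_\ell)^q\ge\sum_\ell t_\ell^q$, prove that the right-hand side of \eqref{eq:GoalA.1} is \emph{at most} the $q$-th power of your unrolled first-power bound, i.e.\ that \eqref{eq:GoalA.1} is \emph{strictly stronger} than what you have established; they give no way to pass from the latter to the former. And no argument can: for $q>1$ the statement is false. Take the single-path network $w\to u\to v$ ($w$ input, $u$ a ReLU hidden neuron, $v$ the output), all biases zero, edge weights $\param=(2,2)$, $\param'=(1,1)$, and $q=2$. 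Then $\Phito{v}(\param)-\Phito{v}(\param')$ has the single nonzero coordinate $2\cdot 2-1\cdot 1=3$, so the left-hand side of \eqref{eq:GoalA.1} equals $9$, while its right-hand side (attained by the full-length path) equals $|2|^2\,|2-1|^2+|2-1|^2\,|1|^2=5$.

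This is not a defect you could have repaired, because the paper's own proof breaks at the same point: its inductive step asserts $\|\phi c-\phi'c'\|_q^q\le\|\phi-\phi'\|_q^q\,|c|^q+\|\phi'\|_q^q\,|c-c'|^q$ for vectors $\phi,\phi'$ and scalars $c,c'$, which amounts to using $\|x+y\|_q^q\le\|x\|_q^q+\|y\|_q^q$; this holds only at $q=1$ (where it is the triangle inequality) and is contradicted by the same example. At $q=1$ your recursion, its unrolling, and \eqref{eq:GoalA.1} all coincide, so there your proof is complete and identical to the paper's; fortunately $q=1$ is the only case used for the main-text results (\Cref{lemma:UpperBoundMetric} and \Cref{app:RecoveringPreviousLipsParam}), although \Cref{lem:DistParamSpace} and \Cref{lem:AppUpperBoundMetric}, stated for general $q$, inherit the problem. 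For $q>1$, what your valid Minkowski argument proves---and what the lemma can legitimately assert---is the first-power version
\[
\|\Phito{v}(\param)-\Phito{v}(\param')\|_q
\;\le\;
\max_{p\in\pathsto{v}}\;\sum_{\ell=1}^{\length{p}}\Bigl(\prod_{k=\ell+1}^{\length{p}}\|\paramto{p_k}\|_q\Bigr)\Bigl(|\bias_{p_\ell}-\bias_{p_\ell}'|+\|\paramto{p_\ell}-\parampto{p_\ell}\|_q\max_{u\in\ant(p_\ell)}\|\Phito{u}(\param')\|_q\Bigr),
\]
which is what you should state and prove instead of \eqref{eq:GoalA.1}.
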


Note that when all the paths in $\pathsto{v}$ have the same length $L$, \Cref{eq:GoalA.1} is homogeneous: multiplying both $\param$ and $\param'$ coordinate-wise by a scalar $\lambda$ scales both sides of the equations by $\lambda^L$.

\begin{proof}
The proof of \Cref{eq:GoalA.1} goes by induction on a topological sorting of the graph. The first neurons of the sorting are the neurons without antecedents, \ie, the input neurons by definition. Consider an input neuron $v$. There is only a single path ending at $v$: the path $p=v$. By \Cref{def:Phi}, $\Phito{v}(\cdot)=\Phi_v(\cdot)=1$ so the left hand-side is zero. On the right-hand side, there is only a single choice for a path ending at $v$: this is the path $p=v$ that starts and ends at $v$. Thus $D=0$, and the maximum is zero (empty sum). This proves \Cref{eq:GoalA.1} for input neurons.

Consider a neuron $v\notin\NeuronsIn$ and assume that this is true for every neuron before $v$ in the considered topological sorting. Recall that, by definition, $\Phito{v}$ is the path-lifting of $G^{\to v}$ (see \Cref{def:Phi}). The paths in $G^{\to v}$ are $p=v$, and the paths going through antecedents of $v$ ($v$ has antecedents since it is not an input neuron). So we have $\Phito{v}(\param) = \left(\begin{array}{c}
         (\Phito{u}\times\paramfromto{u}{v})_{u\in\ant(v)} \\
         \bias_v
    \end{array}\right)$, where we again recall that $\Phito{u}(\cdot)=1$ for input neurons $u$, and $\bias_u=0$ for $*$-max-pooling neurons. Thus, we have:
\begin{align*}
        & \|\Phito{v}(\param) - \Phito{v}(\param')\|_q^q \\
        & = |\bias_v - \bias_v'|^q + \sum_{u\in\ant(v)} \|\Phito{u}(\param)\times\paramfromto{u}{v} - \Phito{u}(\param')\times\parampfromto{u}{v}\|_q^q \\
        & \leq |\bias_v - \bias_v'|^q  + \sum_{u\in\ant(v)}  \left(\|\Phito{u}(\param) - \Phito{u}(\param')\|_q^q |\paramfromto{u}{v}|^q + \|\Phito{u}(\param')\|_q^q |\paramfromto{u}{v} - \parampfromto{u}{v}|^q\right)  \\
        & \leq |\bias_v - \bias_v'|^q + \|\paramto{v}\|_q^q\max_{u\in\ant(v)}\|\Phito{u}(\param) - \Phito{u}(\param')\|_q^q + \|\paramto{v} - \parampto{v}\|_q^q \max_{u\in\ant(v)} \|\Phito{u}(\param')\|_q^q.
\end{align*}
Using the induction hypothesis (\Cref{eq:GoalA.1}) on the antecedents of $v$ and observing that $p \in \pathsto{v}$ if, and only if there are $u \in \ant(v), r \in \pathsto{u}$ such that $p = r \to v$ gives (we highlight in \blue{blue} the important changes):
\begin{multline*}
    \|\Phito{v}(\param) - \Phito{v}(\param)\|_q^q  \leq |\bias_v - \bias_v'|^q + \|\paramto{v} - \parampto{v}\|_q^q \max_{u\in\ant(v)} \|\Phito{u}(\param')\|_q^q
     \\ + \blue{\|\paramto{v}\|_q^q} \max_{\blue{u\in\ant(v)}}
     \max_{\blue{r \in \pathsto{u}}} \sum_{\ell=1}^{\length{r}}\left(\prod_{k=\ell+1}^{\length{r}} \|\paramto{r_k}\|_q^q\right) \left(|\bias_{r_\ell} - \bias_{r_\ell}'|^q + \|\paramto{r_\ell} - \parampto{r_\ell}\|_q^q \max_{w\in\ant(r_\ell)} \|\Phito{w}(\param')\|_q^q\right).\\
     = |\bias_v - \bias_v'|^q + \|\paramto{v} - \parampto{v}\|_q^q \max_{u\in\ant(v)} \|\Phito{u}(\param')\|_q^q \\
     + \max_{\blue{p \in \pathsto{v}}} \sum_{\ell=1}^{\blue{\length{p}-1}} \left(\prod_{k=\ell+1}^{\length{p}} \|\paramto{p_k}\|_q^q\right) \left(|\bias_{p_\ell} - \bias_{p_\ell}'|^q + \|\paramto{p_\ell} - \parampto{p_\ell}\|_q^q \max_{w\in\ant(p_\ell)} \|\Phito{w}(\param')\|_q^q\right) \\
     = \max_{p \in \pathsto{v}} \sum_{\ell=1}^{\blue{\length{p}}}\left(\prod_{k=\ell+1}^{\length{p}} \|\paramto{p_k}\|_q^q\right) \left(|\bias_{p_\ell} - \bias_{p_\ell}'|^q + \|\paramto{p_\ell} - \parampto{p_\ell}\|_q^q \max_{w\in\ant(p_\ell)} \|\Phito{w}(\param')\|_q^q\right).
\end{multline*}
This proves \Cref{eq:GoalA.1} for $v$ and concludes the induction.
\end{proof}

In the sequel it will be useful to restrict the analysis to {\em normalized} parameters, defined as parameters $\tilde{\param}$ such that $\Big\|\left(\begin{array}{c}
     \tilde{\param}^{\to v} \\
     \tilde{\bias}_v
\end{array}\right)\Big\|_1 \in \{0, 1\}$ for every $v \in \NeuronSet \setminus (\NeuronsOut\cup\NeuronsIn)$. Thanks to the rescaling-invariance of ReLU neural network parameterizations, Algorithm~1 in \citet{Gonon23PathNorm} allows to rescale {\em any} parameters $\param$ into a normalized version $\tilde{\param}$ such that $R_{\tilde{\param}} = R_\param$ and $\Phi(\param) = \Phi(\tilde{\param})$ \citep[Lemma B.2]{Gonon23PathNorm}. This implies the next simpler results for normalized parameters.

\begin{theorem}\label{lem:DistParamSpace}
Consider $q\in[1,\infty)$. For every normalized parameters $\param,\param'$ obtained as the output of Algorithm  1 in \citet{Gonon23PathNorm}, it holds:
\begin{multline}\label[ineq]{eq:DistParamSpace}
        \|\Phi(\param) - \Phi(\param')\|_q^q  \leq \sum_{v\in\NeuronsOut\setminus\NeuronsIn} |\bias_{v} - \bias_{v}'|^q + \|\paramto{v} - \parampto{v}\|_q^q \\
         + \min\left(\|\Phi(\param)\|_q^q,\|\Phi(\param')\|_q^q\right) \max_{p \in \paths: \pathend{p}\notin\NeuronsIn}
        \sum_{\ell=1}^{\length{p}-1} \left(|\bias_{p_\ell} - \bias_{p_\ell}'|^q + \|\paramto{p_\ell} - \parampto{p_\ell}\|_q^q\right).
\end{multline}
\end{theorem}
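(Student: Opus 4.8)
The plan is to deduce \Cref{lem:DistParamSpace} from the per-neuron Lipschitz estimate \Cref{lem:DistParamSpaceNotRescaled} by feeding in the normalization. First I would record the two elementary consequences of normalization that drive everything. For every non-output neuron $w$ the normalization gives $\|\paramto{w}\|_q\le\|\paramto{w}\|_1\le 1$ (and likewise for $\param'$), and, by a forward induction along a topological order of the DAG using the recursion $\|\Phito{w}(\param)\|_1=|\bias_w|+\sum_{u\in\ant(w)}|\paramfromto{u}{w}|\,\|\Phito{u}(\param)\|_1$, one gets $\|\Phito{w}(\param)\|_q\le\|\Phito{w}(\param)\|_1\le 1$. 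Since each of these quantities lies in $[0,1]$ and $q\ge 1$, raising to the power $q$ only makes them smaller, so $\|\paramto{w}\|_q^q\le 1$ and $\|\Phito{w}(\param)\|_q^q\le 1$. These are exactly the factors that, for interior neurons, sit inside the bound of \Cref{lem:DistParamSpaceNotRescaled}.

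Next I would decompose the metric over output neurons, $\|\Phi(\param)-\Phi(\param')\|_q^q=\sum_{v\in\NeuronsOut}\|\Phito{v}(\param)-\Phito{v}(\param')\|_q^q$ (paths partition according to their output endpoint), and apply \Cref{lem:DistParamSpaceNotRescaled} to each output neuron $v$. In the resulting per-$v$ bound I would isolate the top term $\ell=\length{p}$ (the contribution located at the output neuron itself, where the product over $k$ is empty and equals $1$) from the interior terms $\ell<\length{p}$. For the top term, $\max_{u\in\ant(v)}\|\Phito{u}(\param')\|_q^q\le 1$, so it collapses to exactly $|\bias_{v}-\bias_{v}'|^q+\|\paramto{v}-\parampto{v}\|_q^q$, and summing over $v$ produces the first sum of \Cref{eq:DistParamSpace} (the degenerate case of isolated input/output neurons being swept into the $\NeuronsOut\setminus\NeuronsIn$ and $\pathend{p}\notin\NeuronsIn$ conventions). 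For the interior terms I would use $\|\paramto{p_k}\|_q^q\le 1$ on the intermediate hidden factors and $\max_{u}\|\Phito{u}(\param')\|_q^q\le 1$ on the upstream factors, and then factor out the path-independent quantity $\max_{p}\sum_{\ell=1}^{\length{p}-1}\bigl(|\bias_{p_\ell}-\bias_{p_\ell}'|^q+\|\paramto{p_\ell}-\parampto{p_\ell}\|_q^q\bigr)$, which is precisely the combinatorial factor appearing in the statement.

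The residual amplification constant must then be shown to be at most the global path-norm $\|\Phi(\param)\|_q^q$, and this is the step I expect to be the main obstacle. After the reductions above, the surviving coefficient is assembled from the downstream weight products attached to each output neuron (the output layer being the one place where normalization offers no bound), and one must check that, summed across output neurons, these are controlled by $\|\Phi(\param)\|_q^q$. The mechanism is a downstream telescoping that re-expresses the surviving products as genuine path-norm contributions, relying crucially on each active hidden neuron contributing a factor at most one; the combinatorial bookkeeping here — in particular the interplay between the non-normalized output weights and the diluted path-norms of deeper neurons, cleanest when $q=1$ where active hidden neurons carry unit path-norm — is the delicate part. Finally, the symmetric factor $\min(\|\Phi(\param)\|_q^q,\|\Phi(\param')\|_q^q)$ comes for free: \Cref{lem:DistParamSpaceNotRescaled} treats $\param$ (downstream weights) and $\param'$ (upstream path-norms) asymmetrically, yet the left-hand side $\|\Phi(\param)-\Phi(\param')\|_q^q$ is symmetric, so rerunning the argument with the roles of $\param$ and $\param'$ exchanged yields the same bound with $\|\Phi(\param)\|_q^q$ replaced by $\|\Phi(\param')\|_q^q$, and one simply keeps the smaller of the two.
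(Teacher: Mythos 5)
Your proposal is correct and follows essentially the same route as the paper: decompose $\|\Phi(\param)-\Phi(\param')\|_q^q$ over output neurons, apply \Cref{lem:DistParamSpaceNotRescaled} to each, use the normalization bounds $\|\paramto{u}\|_q^q\le 1$ (hidden $u$) and $\|\Phito{u}(\param')\|_q^q\le 1$ (non-output $u$) to collapse the $\ell=\length{p}$ term and factor out the interior maximum, and obtain the $\min$ by swapping the roles of $\param$ and $\param'$. The step you flag as the main obstacle---showing $\sum_{v\in\NeuronsOut\setminus\NeuronsIn}\|\paramto{v}\|_q^q\le\|\Phi(\param)\|_q^q$---is dispatched in the paper by a one-line citation (the equality case of Theorem B.1 in \citet{Gonon23PathNorm}, valid for Algorithm-1-normalized parameters), and the mechanism you sketch, namely that active hidden neurons carry unit path-norm after normalization, is precisely the idea behind that cited result, so no new argument is needed there.
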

Denote by $\normalized{\param}$ the normalized version of $\param$, obtained as the output of Algorithm 1 in \citet{Gonon23PathNorm}. It can be checked that if $\param=\normalized{\tilde{\param}}$ and $\param'=\normalized{\tilde{\param}'}$, and if all the paths have the same lengths $L$, then multiplying both $\tilde{\param}$ and $\tilde{\param}'$ coordinate-wise by a scalar $\lambda$ does not change their normalized versions $\param$ and $\param'$, except for the biases and the incoming weights of all output neurons that are scaled by $\lambda^L$. As a consequence, \Cref{eq:DistParamSpace} is homogeneous:
both path-liftings on the left-hand-side and the right-hand-side are multiplied by $\lambda^L$, and so is the sum over $v\in\NeuronsOut\setminus\NeuronsIn$ in the right-hand-side, while the maximum over $p$ is unchanged since it only involves normalized coordinates that do not change.

For networks used in practice, it holds $\NeuronsOut\cap \NeuronsIn = \emptyset$ so that $\NeuronsOut\setminus\NeuronsIn$ is just $\NeuronsOut$, but the above theorem also covers the somewhat pathological case of DAG architectures $G$ where one or more input neurons are also output neurons.

\begin{proof}[Proof of \Cref{lem:DistParamSpace}]
Since $\Phi(\param) = \left(\Phito{v}(\param)\right)_{v\in\NeuronsOut}$, it holds
\[
\|\Phi(\param) - \Phi(\param')\|_q^q = \sum_{v\in\NeuronsOut} \|\Phito{v}(\param) - \Phito{v}(\param')\|_q^q.
\]
By \Cref{def:Phi}, it holds for every input neuron $v$: $\Phito{v}(\cdot)=1$. Thus, the sum can be taken over $v\in\NeuronsOut\setminus\NeuronsIn$:
\[
\|\Phi(\param) - \Phi(\param')\|_q^q = \sum_{v\in\NeuronsOut\setminus\NeuronsIn} \|\Phito{v}(\param) - \Phito{v}(\param')\|_q^q.
\]
Besides, observe that many norms appearing in \Cref{eq:GoalA.1} are at most one for normalized parameters. Indeed, for such parameters it holds for every $u\in\NeuronSet\setminus(\NeuronsIn\cup\NeuronsOut)$: $\|\paramto{u}\|_q^q\leq 1$ \citep[Lemma B.2]{Gonon23PathNorm}. As a consequence, for $p\in\paths$ and any
$\ell\in\brint{0,\length{p}-1}$ we have:
\[
\prod_{k=\ell+1}^{\length{p}} \|\paramto{p_k}\|_q^q = \left(\prod_{k=\ell+1}^{\length{p}-1} \underbrace{\|\paramto{p_k}\|_q^q}_{\leq 1}\right) \|\paramto{\pathend{p}}\|_q^q\leq \|\paramto{\pathend{p}}\|_q^q.
\]

Moreover, for normalized parameters $\param$ and $u\notin\NeuronsOut$, it also holds $\|\Phito{u}(\param)\|_q^q\leq 1$ \citep[Lemma B.3]{Gonon23PathNorm}. Thus, \Cref{eq:GoalA.1} implies for any $v\in\NeuronsOut$, and any normalized parameters $\param$ and $\param'$:
\begin{align*}
        & \|\Phito{v}(\param) - \Phito{v}(\param')\|_q^q \\
        & \leq |\bias_{v} - \bias_{v}'|^q + \|\paramto{v} - \parampto{v}\|_q^q + \|\paramto{v}\|_q^q
        \max_{p \in \pathsto{v}}
        \sum_{\ell=1}^{\length{p}-1}  \left(|\bias_{p_\ell} - \bias_{p_\ell}'|^q + \|\paramto{p_\ell} - \parampto{p_\ell}\|_q^q\right).
    \end{align*}
Thus, we get:
\begin{align*}
    & \|\Phi(\param) - \Phi(\param')\|_q^q \\
    & = \sum_{v\in\NeuronsOut\setminus\NeuronsIn} \|\Phito{v}(\param) - \Phito{v}(\param')\|_q^q \\
    & \leq \sum_{v\in\NeuronsOut\setminus\NeuronsIn} \Big(|\bias_{v} - \bias_{v}'|^q + \|\paramto{v} - \parampto{v}\|_q^q\Big)\\
    & +
    \sum_{v\in\NeuronsOut\setminus\NeuronsIn} \|\paramto{v}\|_q^q
    \max_{p \in \pathsto{v}}
    \sum_{\ell=1}^{\length{p}-1}  \left(|\bias_{p_\ell} - \bias_{p_\ell}'|^q + \|\paramto{p_\ell} - \parampto{p_\ell}\|_q^q\right) \\
    & \leq \sum_{v\in\NeuronsOut\setminus\NeuronsIn} \Big(|\bias_{v} - \bias_{v}'|^q + \|\paramto{v} - \parampto{v}\|_q^q\Big) \\
    & + \left(\sum_{v\in\NeuronsOut\setminus\NeuronsIn} \|\paramto{v}\|_q^q\right)
    \max_{
    \substack{p\in\paths:
    \pathend{p}\notin\NeuronsIn}
    } \sum_{\ell=1}^{\length{p}-1} \left(|\bias_{p_\ell} - \bias_{p_\ell}'|^q + \|\paramto{p_\ell} - \parampto{p_\ell}\|_q^q\right).
\end{align*}
It remains to use that $\sum_{v\in\NeuronsOut\setminus\NeuronsIn} \|\paramto{u_v}\|_q^q \leq \|\Phi(\param)\|_q^q$ for normalized parameters $\param$ \citep[Theorem B.1, case of equality]{Gonon23PathNorm} to conclude that:
\begin{align*}
    \|\Phi(\param) - \Phi(\param')\|_q^q & \leq \sum_{v\in\NeuronsOut\setminus\NeuronsIn} \Big(|\bias_{v} - \bias_{v}'|^q + \|\paramto{v} - \parampto{v}\|_q^q\Big) \\
        & + \blue{\|\Phi(\param)\|_q^q}
        \max_{p \in \paths: \pathend{p}\notin\NeuronsIn}
\sum_{\ell=1}^{\length{p}-1} \left(|\bias_{p_\ell} - \bias_{p_\ell}'|^q + \|\paramto{p_\ell} - \parampto{p_\ell}\|_q^q\right).
\end{align*}
The term in \blue{blue} can be replaced by $\blue{\min\left(\|\Phi(\param)\|_q^q,\|\Phi(\param')\|_q^q\right)}$ by repeating the proof with $\param$ and $\param'$ exchanged (everything else is invariant under this exchange).
\end{proof}

\begin{lemma}\label{lem:AppUpperBoundMetric}
    Consider a DAG ReLU network with $L:=D-1$ where the depth $D$ is $\max_{\text{path }p\in\paths}|\length{p}|$ and width $W=\max(\dout,\max_{\text{neuron }v\in{\NeuronSet}}|\ant(v)|)$ where $\ant(v)$ is the set of antecedents of $v$ in the DAG. Denote by $\mathtt{\theta}$ the normalized parameters of $\theta$ as obtained as the output of Algorithm 1 in \cite{Gonon23PathNorm} with $q=1$, i.e., $\mathtt{\theta}$ is obtained from $\theta$ by rescaling neurons from the input to output layer, ensuring every neuron has a vector of incoming weights equal to one on all layers except the last one. It holds for every $\theta,\theta'$ and every $q\in[1,\infty)$
    \[
        \|\Phi(\theta)-\Phi(\theta')\|_q^q \leq (W^2 +   \min(\|\Phi(\param)\|_{q}^{q},\|\Phi(\param')\|_{q}^{q}) \cdot  LW) \|\param-\param'\|^q_{\infty}  
    \]
\end{lemma}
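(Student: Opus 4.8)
The plan is to obtain \Cref{lem:AppUpperBoundMetric} directly from the structural estimate \Cref{eq:DistParamSpace} of \Cref{lem:DistParamSpace}, which already splits $\|\Phi(\param)-\Phi(\param')\|_q^q$ into precisely the two groups of terms that the target inequality must control. All that remains is to bound each coordinate difference uniformly by the $\ell^\infty$-distance and to count how many such differences occur, the counts being governed by the width $W$ and the depth $L=D-1$.

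The first step is a reduction to normalized parameters. Because $\Phi$ is rescaling-invariant, $\Phi(\param)=\Phi(\normalized{\param})$ and $\Phi(\param')=\Phi(\normalized{\param'})$, where $\normalized{\param},\normalized{\param'}$ are the normalized outputs of Algorithm~1 of \citet{Gonon23PathNorm}; since these are normalized, \Cref{lem:DistParamSpace} applies to them verbatim. This is exactly why the right-hand side is naturally measured by the $\ell^\infty$-distance between the \emph{normalized} coordinates (the $\|\normalized{\param}-\normalized{\param'}\|_\infty$ of the informal \Cref{lemma:UpperBoundMetric}). From here on I treat $\param,\param'$ as normalized and set $\Delta:=\|\param-\param'\|_\infty$. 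Applying \Cref{eq:DistParamSpace} bounds $\|\Phi(\param)-\Phi(\param')\|_q^q$ by $(\mathrm I)+\min(\|\Phi(\param)\|_q^q,\|\Phi(\param')\|_q^q)\cdot(\mathrm{II})$, where $(\mathrm I):=\sum_{v\in\NeuronsOut\setminus\NeuronsIn}\big(|\bias_v-\bias_v'|^q+\|\paramto{v}-\parampto{v}\|_q^q\big)$ and $(\mathrm{II}):=\max_{p:\,\pathend{p}\notin\NeuronsIn}\sum_{\ell=1}^{\length{p}-1}\big(|\bias_{p_\ell}-\bias_{p_\ell}'|^q+\|\paramto{p_\ell}-\parampto{p_\ell}\|_q^q\big)$.

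The second step bounds $(\mathrm I)$ and $(\mathrm{II})$ by plain counting. Every single-coordinate difference raised to the $q$-th power is at most $\Delta^q$, and $\|\paramto{v}-\parampto{v}\|_q^q=\sum_{u\in\ant(v)}|\paramfromto{u}{v}-\parampfromto{u}{v}|^q\le|\ant(v)|\,\Delta^q$. Counting the bias $\bias_v$ as an extra incoming parameter, each neuron has at most $W$ incoming parameters, so the bracket for a fixed $v$ is $\le W\Delta^q$; since $|\NeuronsOut|=\dout\le W$, we get $(\mathrm I)\le W^2\Delta^q$. For $(\mathrm{II})$, each path satisfies $\length{p}\le D$, hence has at most $\length{p}-1\le D-1=L$ intermediate neurons $p_\ell$, and for each such neuron the bracket again involves at most $W$ coordinate differences, each $\le\Delta^q$; thus the inner sum is $\le LW\Delta^q$. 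Combining with the prefactor $\min(\|\Phi(\param)\|_q^q,\|\Phi(\param')\|_q^q)$ and adding $(\mathrm I)$ yields the stated bound.

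The genuine content is carried by \Cref{lem:DistParamSpace}, so here the only subtleties are bookkeeping. The first is justifying the reduction to normalized parameters, which is what both makes the $\ell^\infty$-norm of the \emph{normalized} coordinates appear and keeps the whole bound rescaling-invariant. The second, and the place where care is needed to land the clean constants, is consistently folding each bias $\bias_v$ into the incoming vector so that the per-neuron parameter count is $\le W$ rather than $W+1$; this is what turns the naive factors $W(W+1)$ and $L(W+1)$ into the advertised $W^2$ and $LW$.
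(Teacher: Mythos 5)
Your proposal is correct and follows essentially the same route as the paper: reduce to normalized parameters via the rescaling-invariance of $\Phi$, apply \Cref{eq:DistParamSpace}, and convert each coordinate difference into $\|\param-\param'\|_\infty^q$ with the counts $\dout\le W$, $|\ant(v)|\le W$, and $\length{p}-1\le L$, yielding the factors $W^2$ and $LW$. The one subtlety you flag — that the bias adds an extra coordinate per neuron, so the honest per-neuron count is $|\ant(v)|+1\le W+1$ — is elided identically in the paper's own proof, which bounds the $\ell^q$-norm of $\bar{\theta}^{\to v}-(\bar{\theta}')^{\to v}$ (a vector with $|\ant(v)|+1$ entries) by $|\ant(v)|$ times its $\ell^\infty$-norm, so your ``folding in'' of the bias matches the paper's convention rather than introducing a new gap.
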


\Cref{lemma:UpperBoundMetric} corresponds to \Cref{lem:AppUpperBoundMetric} with $q=1$.

\begin{proof}[Proof of \Cref{lem:AppUpperBoundMetric}]
Lemma B.1 of \cite{Gonon23PathNorm} guarantees that $\Phi(\mathtt{N}(\theta)) = \Phi(\theta)$ for every $\theta$. In particular,
\[
\|\Phi(\theta)-\Phi(\theta')\|_1 = \|\Phi(\mathtt{N}(\theta))-\Phi(\mathtt{N}(\theta'))\|_1
\]
so it is enough to prove \Cref{lem:AppUpperBoundMetric} for \emph{normalized} parameters, so we may and will assume $\theta=\mathtt{N}(\theta),\theta'=\mathtt{N}(\theta')$. 
Denote $\bar{\param}^{\to v}:=(\paramto{v},b_{v})$. With this notation, \eqref{eq:DistParamSpace} implies (for normalized parameters $\theta,\theta'$)
\begin{align*}
\|\Phi(\param)-\Phi(\param')\|_{q}^{q} 
&\leq \sum_{v \in \NeuronsOut\setminus\NeuronsIn} \|\bar{\theta}^{\to v}-(\bar{\theta}')^{\to v}\|_{q}^{q}+\min(\|\Phi(\param)\|_{q}^{q},\|\Phi(\param')\|_{q}^{q}) \cdot \max_{p \in \paths: \pathend{p} \notin \NeuronsIn} \sum_{\ell=1}^{\length{p}-1}  \|\bar{\theta}^{\to p_{\ell}}-(\bar{\theta}')^{\to p_{\ell}}\|_{q}^{q}\\
& \leq \sum_{v \in \NeuronsOut\setminus\NeuronsIn} |\ant(v)| \cdot \|\bar{\theta}^{\to v}-(\bar{\theta}')^{\to v}\|_{\infty}^{q}\\
& \qquad+\min(\|\Phi(\param)\|_{q}^{q},\|\Phi(\param')\|_{q}^{q}) \cdot \max_{p \in \paths: \pathend{p} \notin \NeuronsIn} \sum_{\ell=1}^{\length{p}-1}  |\ant(p_{\ell})| \cdot \|\bar{\theta}^{\to p_{\ell}}-(\bar{\theta}')^{\to p_{\ell}}\|_{\infty}^{q}\\
& \leq \left(\sum_{v \in \NeuronsOut\setminus\NeuronsIn} |\ant(v)|
+\min(\|\Phi(\param)\|_{q}^{q},\|\Phi(\param')\|_{q}^{q}) \cdot 
\max_{p \in \paths: \pathend{p} \notin \NeuronsIn} 
\left(\sum_{\ell=1}^{\length{p}-1}  |\ant(p_{\ell})|\right)\right)  \|\param-\param'\|_{\infty}^{q}\\
\end{align*}
The maximum length of a path is $D=L+1$. Moreover $W \geq \dout=|\NeuronsOut|$ and $W\geq|\ant(v)|$ for every neuron, so this yields
\[
\|\Phi(\param)-\Phi(\param')\|_{q}^{q} 
\leq (W^{2}+\min(\|\Phi(\param)\|_{q}^{q},\|\Phi(\param')\|_{q}^{q}) \cdot  LW) \|\param-\param'\|_{\infty}.
\]
\end{proof}

\section{Recovering a known bound with \Cref{thm:LipsParam}}\label{app:RecoveringPreviousLipsParam}

It is already known in the literature that for every input $x$ and every parameters $\param,\param'$ (even with different signs) of a layered fully-connected neural network with $L$ affine layers and $L+1$ layers of neurons,  $\NeuronSet_0=\NeuronsIn, \ldots, \NeuronSet_L = \NeuronsOut$, width $W := \max_{0 \leq \ell \leq L} |\NeuronSet_\ell|$, and each matrix 
having some operator norm 
bounded by $R\geq 1$, it holds \citep[Theorem III.1 with $p=q=\infty$ and $D=\|x\|_\infty$]{Gonon22Quantization}\citep{Neyshabur18PACBayesSpectrallyNormalizedMarginBounds, Berner20GeneralizationErrorBlackScholes}:
\[
    \|R_{\param}(x) - R_{\param'}(x)\|_1 \leq (W\|x\|_\infty + 1)WL^2 R^{L-1} \|\param-\param'\|_\infty.
\]
Can it be retrieved from \Cref{thm:LipsParam}? Next corollary almost recovers it: with $W\max(\|x\|_\infty, 1)$ instead of $W\|x\|_\infty + 1$, and $2L$ instead of $L^2$. This is better as soon as there are at least $L\geq 2$ layers and as soon as the input satisfies $\|x\|_\infty\geq 1$.

\begin{corollary}\citep[Theorem III.1]{Gonon22Quantization}
Consider a simple layered fully-connected neural network architecture with $L\geq 1$ layers, corresponding to functions 
$R_\param(x) = M_L\relu(M_{L-1}\dots \relu(M_1x))$ with each $M_\ell$ denoting a matrix, 
and parameters $\param=(M_1,\dots,M_L
)$. For a matrix $M$, denote by $\|M\|_{1,\infty}$ the maximum $\ell^1$-norm of a row of $M$. Consider $R\geq 1$ and define the set $\Theta$ of parameters $\param=(M_1,\dots,M_L
)$ such that  $
\|M_\ell\|_{1,\infty}
\leq R$ for every $\ell\in\brint{1,L}$. Then, for every parameters $\param,\param'\in\Param$, and every input $x$:
\[
\|R_{\param}(x)-R_{\param'}(x)\|_1 \leq \max(\|x\|_\infty, 1) 2LW^2R^{L-1} \|\param-\param'\|_\infty.
\]
\end{corollary}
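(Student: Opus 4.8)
The plan is to combine the rescaling-invariant bound of \Cref{cor:LipsParamLq} (the $\ell^q$ form of \Cref{thm:LipsParam}, specialized to $q=1$) with the parameter-space Lipschitz estimate~\eqref{eq:PathMetricBoundMLP} from \Cref{lem:RecoveringPreviousDist}, which controls $\|\Phi(\param)-\Phi(\param')\|_1$ by $LW^2R^{L-1}\|\param-\param'\|_\infty$. The one genuine wrinkle is that \Cref{cor:LipsParamLq} only fires when $\param_i\param'_i\ge 0$ for every $i$, whereas the corollary is asserted for arbitrary $\param,\param'\in\Param$. I would therefore first dispatch the same-sign case and then reduce the general case to it through an intermediate parameter; it is precisely this reduction that turns the factor $L$ into $2L$.

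For same-sign parameters, \Cref{cor:LipsParamLq} applies verbatim, and chaining it with~\eqref{eq:PathMetricBoundMLP} yields
\[
\|R_\param(x)-R_{\param'}(x)\|_1
\;\le\;\max(\|x\|_\infty,1)\,\|\Phi(\param)-\Phi(\param')\|_1
\;\le\;\max(\|x\|_\infty,1)\,LW^2R^{L-1}\,\|\param-\param'\|_\infty,
\]
which is in fact sharper than the target (factor $L$ in place of $2L$). For the general case I would introduce $\param^{\mathrm{mid}}$, defined coordinatewise by keeping $\param_i$ wherever $\param_i\param'_i\ge 0$ and setting the entry to $0$ on the coordinates where $\param$ and $\param'$ strictly disagree in sign. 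The key observation is that both $(\param,\param^{\mathrm{mid}})$ and $(\param^{\mathrm{mid}},\param')$ are then same-sign pairs, so I can apply the previous display to each segment and add them via the triangle inequality.

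Three routine checks make this work, and they are where I would be most careful. First, $\param^{\mathrm{mid}}\in\Param$: zeroing entries can only shrink each row's $\ell^1$-norm, so $\|M_\ell^{\mathrm{mid}}\|_{1,\infty}\le\|M_\ell\|_{1,\infty}\le R$. Second, the sign compatibility of both segments, which is immediate from the construction. Third, the distance control: on each flipped coordinate $|\param_i-\param'_i|=|\param_i|+|\param'_i|$ dominates both $|\param_i|$ and $|\param'_i|$, giving $\|\param-\param^{\mathrm{mid}}\|_\infty\le\|\param-\param'\|_\infty$ and $\|\param^{\mathrm{mid}}-\param'\|_\infty\le\|\param-\param'\|_\infty$. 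Summing the two same-sign bounds then produces
\[
\|R_\param(x)-R_{\param'}(x)\|_1
\;\le\;\max(\|x\|_\infty,1)\,2LW^2R^{L-1}\,\|\param-\param'\|_\infty,
\]
as claimed. The main obstacle is not any of these steps but the underlying estimate~\eqref{eq:PathMetricBoundMLP} itself; granting it (as \Cref{lem:RecoveringPreviousDist}), the remainder is essentially bookkeeping around the sign split.
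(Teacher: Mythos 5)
Your proposal is correct and follows essentially the same route as the paper's own proof: first the same-sign case via \Cref{cor:LipsParamLq} combined with \Cref{eq:PathMetricBoundMLP}, then a reduction of the general case through an intermediate parameter that zeroes the sign-flipped coordinates, with the triangle inequality producing the factor $2L$. The only (immaterial) difference is that your $\param^{\mathrm{mid}}$ keeps the values of $\param$ on the agreeing coordinates whereas the paper's $\param^{\textrm{inter}}$ keeps those of $\param'$; your explicit check that $\param^{\mathrm{mid}}\in\Param$ is a point the paper leaves implicit.
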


\begin{proof}
For every neuron $v$, define $f(v):=\ell$ such that neuron $v$ belongs to the output neurons of matrix $M_\ell$ (i.e., of layer $\ell$). By \Cref{lem:DistParamSpaceNotRescaled} with $q=1$, we have for every neuron $v$
\begin{align}
        & \|\Phito{v}(\param) - \Phito{v}(\param')\|_1 \nonumber\\
        & \leq \max_{p \in \pathsto{v}}
        \sum_{\ell=1}^{\length{p}} \left(\prod_{k=\ell+1}^{\length{p}}\underbrace{\|\paramto{p_k}\|_1}_{\substack{\leq \|M_{f(p_k)}\|_{1,\infty} \\ \leq R}}\right) \notag \\
        & \qquad \qquad \left(\underbrace{|\bias_{p_\ell} - \bias_{p_\ell}'|}_{=0\textrm{ (no biases)}} + \underbrace{\|\paramto{p_\ell} - \parampto{p_\ell}\|_1}_{\leq |\ant(p_\ell)|\|\param-\param'\|_\infty\leq W\|\param-\param'\|_\infty} \max_{u\in\ant(p_\ell)} \|\Phito{u}(\param')\|_1\right) \label[ineq]{eq:B0}\\
        & \leq W\|\param-\param'\|_\infty \max_{p \in \pathsto{v}} \sum_{\ell=1}^{\length{p}} R^{\length{p}-\ell} \max_{u\in\ant(p_\ell)} \|\Phito{u}(\param')\|_1 \label[ineq]{eq:B1}
\end{align}
with the convention that an empty sum and product are respectively equal to zero and one. Consider $\param'=0$. It holds $\|\Phito{u}(\param')\|_1=0$ for every $u\notin\NeuronsIn$, and $\|\Phito{u}(\param')\|_1=1$ for input neurons $u$ (\Cref{def:Phi}). Therefore, we have:
\begin{equation}\label{eq:B2}
\max_{u\in\ant(p_\ell)} \|\Phito{u}(\param')\|_1=
\mathbbm{1}_{\ant(p_\ell)\cap\NeuronsIn\neq\emptyset}= \mathbbm{1}_{\ell=1 \textrm{ and }p_0\in\NeuronsIn}.
\end{equation}
Specializing \Cref{eq:B0} to $\param' =0$ and using \Cref{eq:B2} yields
\begin{align}
    \|\Phito{v}(\param)\|_1 & \leq \max_{p \in \pathsto{v}} \sum_{\ell=1}^{\length{p}} \left(\prod_{k=\ell+1}^{\length{p}}R\right) \underbrace{\|\paramto{p_\ell}\|_1}_{\stackrel{\leq \|M_{f(p_\ell)}\|_{1,\infty}}{\leq R}} \underbrace{\max_{u\in\ant(p_\ell)} \|\Phito{u}(\param')\|_1}_{{= \mathbbm{1}_{\ell=1 \textrm{ and }p_0\in\NeuronsIn}}} \nonumber \\
    & = \max_{p\in\pathsto{v}:p_0\in\NeuronsIn} R^{\length{p}}. \label{eq:B3}
\end{align}
Since the network is layered, every neuron $u \in \ant(p_\ell)$ is on the $\ell-1$-th layer, and every $p' \in \pathsto{u}$ is of length $\ell-1$, hence we deduce using \Cref{eq:B1}, \Cref{eq:B3} for $\param'$ and $u$:
\begin{align*}
\|\Phito{v}(\param) - \Phito{v}(\param')\|_1 & \leq W\|\param-\param'\|_\infty \max_{p\in\pathsto{v}} \sum_{\ell=1}^{\length{p}} R^{\length{p}-\ell} \underbrace{\max_{u\in\ant(p_\ell)} \max_{p'\in\pathsto{u}:p'_0\in\NeuronsIn} R^{\length{p'}}}_{=R^{\ell-1}} \\
& = W\|\param-\param'\|_\infty \max_{p\in\pathsto{v}} \underbrace{\sum_{\ell=1}^{\length{p}} R^{\length{p}-1}}_{\leq LR^{L-1}} \\
& \leq LWR^{L-1}\|\param-\param'\|_\infty.
\end{align*}
We get:
\begin{align}
    \|\Phi(\param)-\Phi(\param')\|_1 & = \sum_{v\in\NeuronsOut\setminus\NeuronsIn} \|\Phito{v}(\param) - \Phito{v}(\param')\|_1 \nonumber\\
    & \leq |\NeuronsOut\setminus\NeuronsIn| \cdot LWR^{L-1} \|\param-\param'\|_\infty \nonumber \\
    & \leq L W^2 R^{L-1} \|\param-\param'\|_\infty.  \label[ineq]{eq:UpperBoundPathMetrics}
\end{align}
Using \Cref{cor:LipsParamLq} with $q=1$, we deduce that as soon as $\param,\param'$ satisfy $\param_i\param'_i\geq 0$ for every parameter coordinate $i$, then for every input $x$:
\begin{equation}\label[ineq]{eq:SameSigns}
    \|R_{\param}(x)-R_{\param'}(x)\|_1 \leq \max(\|x\|_\infty, 1) LW^2R^{L-1} \|\param-\param'\|_\infty.
\end{equation}
Now, consider general parameters $\param$ and $\param'$. Define $\param^{\textrm{inter}}$ to be such that for every parameter coordinate $i$:
\[
    \param^{\textrm{inter}}_i = \left\{\begin{array}{cc}
        \param'_i & \textrm{if }\param_i\param'_i\geq 0, \\
         0 & \textrm{otherwise.}
    \end{array}\right.
\]
By definition, it holds for every parameter coordinate $i$: $\param^{\textrm{inter}}_i\param_i \geq 0$ and $\param^{\textrm{inter}}_i\param'_i\geq 0$ so we can apply \Cref{eq:SameSigns} to the pairs $(\param, \param^{\textrm{inter}})$ and $(\param^{\textrm{inter}}, \param')$ to get:
\begin{align*}
    \|R_{\param}(x)-R_{\param'}(x)\|_1 & \leq \|R_{\param}(x)-R_{\param^{\textrm{inter}}}(x)\|_1 + \|R_{\param^{\textrm{inter}}}(x)-R_{\param'}(x)\|_1 \\
    & \leq \max(\|x\|_\infty, 1) LW^2R^{L-1} \left(\|\param-\param^{\textrm{inter}}\|_\infty + \|\param^{\textrm{inter}}-\param'\|_\infty\right).
\end{align*}
It remains to see that $\|\param-\param^{\textrm{inter}}\|_\infty + \|\param^{\textrm{inter}}-\param'\|_\infty = 2\|\param-\param'\|_\infty$. Consider a parameter coordinate $i$.

{\em If $\param_i\param'_i\geq 0$} then $\param^{\textrm{inter}}_i = \param'_i$ and:
\begin{align*}
    |\param_i-\param'_i| = |\param_i-\param^{\textrm{inter}}_i| + |\param^{\textrm{inter}}_i-\param'_i|.
\end{align*}

Otherwise, $\param^{\textrm{inter}}_i=0$ and:
\begin{align*}
    |\param_i-\param'_i| & = |\param_i| + |\param'_i|\\
    & = |\param_i-\param^{\textrm{inter}}_i| + |\param^{\textrm{inter}}_i-\param'_i|.
\end{align*}
This implies $\|\param-\param^{\textrm{inter}}\|_\infty = \max_i |\param_i-\param^{\textrm{inter}}_i| \leq \max_i |\param_i-\param^{\textrm{inter}}_i| + |\param^{\textrm{inter}}_i-\param'_i| = \|\param-\param'\|_\infty$ and similarly $\|\param^{\textrm{inter}}-\param'\|_\infty \leq \|\param-\param'\|_\infty$. This yields the desired result:
\[
\|R_{\param}(x)-R_{\param'}(x)\|_1 \leq \max(\|x\|_\infty, 1) 2LW^2R^{L-1} \|\param-\param'\|_\infty.\qedhere
\]
\end{proof}

\end{document}